%% file: main.tex
\documentclass[conference]{IEEEtran}
\usepackage{times}

\usepackage{listings}
\usepackage{tcolorbox}
\usepackage{xcolor}
\usepackage{xcolor-material}
\usepackage{framed}
\usepackage[colorinlistoftodos]{todonotes}
\usepackage{amsmath}
\usepackage{amssymb}
\usepackage{cancel}
\usepackage{amsthm}
\newtheorem*{prop}{Proposition}
\usepackage{algorithm}
\usepackage{algorithmic}
\usepackage{booktabs}
\usepackage{caption}

\usepackage{etoolbox}
\usepackage{etoc}

\usepackage{wasysym}
\usepackage{fontawesome5}
\usepackage{textcomp}
\usepackage{multirow}
\usepackage[table]{xcolor}
\usepackage{makecell}

\usepackage[column=S]{cellspace} % Fixes the gap/spacing issues
\usepackage{listings}

\lstnewenvironment{code}{%
  \lstset{
    basicstyle=\ttfamily\small, % Use the nice font (zi4)
    breaklines=true,            % WRAP LINES (Key feature)
    columns=fullflexible,       % Better spacing for variable-width fonts
    keepspaces=true,            % Preserves indentation
    aboveskip=0pt,              % No extra space above
    belowskip=0pt,              % No extra space below
    frame=none,                 % No extra frame (since 'prompt' handles it)
    extendedchars=true,
    literate={-}{-}1,           % Fixes hyphen rendering in some fonts
  }
}{}

\newtoggle{anonymous}
\togglefalse{anonymous}

\newtoggle{supplemental}
\toggletrue{supplemental}

\newtoggle{appendix_only}
\togglefalse{appendix_only}

\usepackage[framemethod=TikZ]{mdframed}
\mdfdefinestyle{ThmFrame}{linecolor=white,
    outerlinewidth=0pt,
    roundcorner=2pt,
    leftmargin=-3pt,
    rightmargin=-2pt,
    innertopmargin=5pt, innerbottommargin=5pt, innerrightmargin=5pt,
    innerleftmargin=5pt,
    backgroundcolor=MaterialBlueGrey100!40}

\DeclareMathOperator*{\E}{\mathbb{E}}

\usepackage[numbers, sort&compress]{natbib}
\usepackage{multicol}
\usepackage[hidelinks]{hyperref}
\usepackage[utf8]{inputenc}

\usepackage{xspace}

\makeatletter
\DeclareRobustCommand\onedot{\futurelet\@let@token\@onedot}
\def\@onedot{\ifx\@let@token.\else.\null\fi\xspace}
\newcommand{\eg}{e.g.,\xspace}
\newcommand{\ie}{i.e.,\xspace}

\newcommand{\etc}{etc\onedot}

\newcommand{\iid}{i.i.d\onedot}
\makeatother

\newcommand{\algname}{\texttt{R\&B-EnCoRe}\xspace}

\usepackage[strict]{changepage}
\usepackage{framed}
\definecolor{formalshade}{rgb}{0.90, 0.95, 0.90}%{0.85,1,0.85}
\definecolor{green}{rgb}{0.0,0.6,0.30}
\newenvironment{formal}{%
  \MakeFramed{\advance\hsize-\width\FrameRestore}%
  \noindent\hspace{-4.55pt}% disable indenting first paragraph
  \begin{adjustwidth}{}{7pt}%
}
{%
  \end{adjustwidth}\endMakeFramed%
}

\definecolor{hyposhade}{rgb}{0.92, 0.95, 0.98}
\definecolor{hypoline}{rgb}{0.15, 0.35, 0.65}

\newenvironment{hypothesis}{%
  \MakeFramed{\advance\hsize-\width\FrameRestore}%
  \noindent\hspace{-4.55pt}% 
  \begin{adjustwidth}{}{7pt}%
}
{%
  \end{adjustwidth}\endMakeFramed%
  \vspace{-6pt}%
}

\definecolor{promptshade}{rgb}{0.95, 0.92, 0.96}
\definecolor{promptline}{rgb}{0.40, 0.30, 0.50}
\newenvironment{prompt}{%
  \MakeFramed{\advance\hsize-\width\FrameRestore}%
  \noindent\hspace{-4.55pt}% 
  \begin{adjustwidth}{}{7pt}%
}
{%
  \end{adjustwidth}\endMakeFramed%
  \vspace{-6pt}%
}

\begin{document}

\input{headers/title}

\input{headers/author}

\maketitle
\iftoggle{anonymous}{}{
\begingroup
\renewcommand\thefootnote{\textsuperscript{\faMusic}}
\footnotetext{Equal contribution.}
\renewcommand\thefootnote{\textsuperscript{}}
\endgroup
}

\input{headers/abstract}

\IEEEpeerreviewmaketitle

% this is just lists of plans
% \input{draft ideas/planning}

\input{sections/introduction}

\input{sections/related_works}

\input{sections/preliminaries}

\input{sections/proposed_approaches}

\input{sections/experiments}

\input{sections/discussion_conclusion}

\input{sections/acknowledgments}

\newpage
\bibliographystyle{unsrtnat}  %{plainnat}
\bibliography{references}

\iftoggle{supplemental}{
%\newpage

\input{appendix/appendix}
}{}

\end{document}

%% file: headers/title.tex
\title{Self-Supervised Bootstrapping of Action-Predictive Embodied Reasoning}

%% file: headers/author.tex
% You will get a Paper-ID when submitting a pdf file to the conference system

\newcommand{\e}{\textsuperscript{\textsuperscript{\faMusic}}}
\newcommand{\s}{\textsuperscript{$\sharp$}}
\newcommand{\B}{\textsuperscript{$\flat$}}
\newcommand{\N}{\textsuperscript{$\natural$}}

\author{
\authorblockN{Milan Ganai\s\e,
Katie Luo\s\e,
Jonas Frey\s\B, 
Clark Barrett\s, and
Marco Pavone\s\N}
\authorblockA{\s Stanford, \B UC Berkeley, \N NVIDIA\\
\texttt{\{mganai, katieluo, jonfrey, barrettc, pavone\}@stanford.edu}}
}

%% file: headers/abstract.tex
\begin{abstract}

Embodied Chain-of-Thought (CoT) reasoning has significantly enhanced Vision-Language-Action (VLA) models, yet current methods rely on rigid templates to specify reasoning primitives (\eg objects in the scene, high-level plans, structural affordances). These templates can force policies to process irrelevant information that distracts from critical action-prediction signals.
This creates a bottleneck: without successful policies, we cannot verify reasoning quality; without quality reasoning, we cannot build robust policies. 
We introduce \algname, which enables models to bootstrap embodied reasoning from internet-scale knowledge through self-supervised refinement. 
By treating reasoning as a latent variable within importance-weighted variational inference, models can generate and distill a refined reasoning training dataset of embodiment-specific strategies without external rewards, verifiers, or human annotation.
We validate \algname across manipulation (Franka Panda in simulation, WidowX in hardware), legged navigation (bipedal, wheeled, bicycle, quadruped), and autonomous driving embodiments using various VLA architectures with 1B, 4B, 7B, and 30B parameters.
Our approach achieves 28\% gains in manipulation success, 101\% improvement in navigation scores, and 21\% reduction in collision-rate metric over models that indiscriminately reason about all available primitives.
\algname enables models to distill reasoning that is predictive of successful control, bypassing manual annotation engineering while grounding internet-scale knowledge in physical execution.

\end{abstract}

%% file: sections/introduction.tex
\section{Introduction}

Vision-Language-Action models (VLAs) have begun to successfully harness the momentum of internet-scale foundation models~\cite{bommasani2021opportunities} to serve as powerful generalist robot policies~\cite{kawaharazuka2025vision, zhong2025survey}. These models inherit vast semantic and visual knowledge from Vision-Language Models (VLMs) pretrained on web-scale image-text corpora~\cite{zhang2024vision}, and adapt to embodied tasks through continued training on relatively data-scarce robotics demonstrations~\cite{kim2025openvla, belkhale2024minivla, black2025pi0, black2025pi05, intelligence2025pi}. To transform reactive policies that directly map tasks to actions into models with deeper understanding of the physical world, recent approaches leverage structured Chain-of-Thought (CoT) reasoning that bridges the gap between abstract visual and semantic knowledge and embodied control~\cite{belkhale2024rt, zawalski2025Robotic, chen2025training}. Specifically, these VLAs learn to output intermediate textual reasoning to elucidate the thinking process of converting high-level intent into low-level control~\cite{team2025gemini}.

\input{floats/figures/teaser_fig}

However, generating effective robotic reasoning traces remains a major bottleneck. Industry currently invests months in manual annotation guidelines to structure visual and spatial details~\cite{azzolini2025cosmos}. To scale this labeling process, foundation models are increasingly used to synthetically generate content in visual question answering (VQA) style for reasoning primitives (\eg objects in the scene, what the planned steps should be)~\cite{touvron2023llama, deitke2025molmo, bai2025qwen3vltechnicalreport, sermanet2024robovqa}. Yet, indiscriminately applying these rich, internet-scale priors for all tasks risks overwhelming embodied reasoning VLAs with irrelevant or distracting information~\cite{liu2025simpact, liu2025evovla, sun2025emma}. Without principled mechanisms to identify which reasoning primitives predictively inform physical interaction, embodied CoT models often exhibit verbose reasoning that fails to ground internet-scale knowledge in robotics contexts~\cite{man2025argus}. Thus, a critical challenge remains: \textit{how do we distill visual and semantic knowledge to construct action-predictive embodied reasoning data, 
% \katie{eliminating months} 
reducing the burden
of manual CoT engineering}~\cite{zhai2025igniting, sumers2023distilling, zhou2025chatvla}\emph{?}

Applying CoT to robotics is hindered by a fundamental embodiment-to-reasoning grounding gap: unlike VLMs pretrained on massive image-text datasets, robotics lacks internet-scale corpora linking vision and language to action, making it difficult to verify what reasoning truly informs control. Consequently, the effective reasoning process for physical action remains an unobserved latent variable---there is no oracle to verify if a thought effectively informs a movement, creating a circular dependency where robust training requires high-quality reasoning, yet identifying such reasoning requires an already-successful policy. Current approaches struggle to bridge this gap, often resorting to rigid, ``one-size-fits-all'' templates that force embodied reasoning models to process irrelevant details---like social norms in an empty scene environment---thereby ignoring the heterogeneity of robotic tasks. To address this, we introduce \algname (\textbf{\underline{R}}efine and \textbf{\underline{B}}ootstrap \textbf{\underline{E}}mbodime\textbf{\underline{n}}t-specific \textbf{\underline{C}}hain-\textbf{\underline{o}}f-Thought \textbf{\underline{Re}}asoning), a framework that treats reasoning not as a fixed sequence, but as a latent strategy (see Fig.~\ref{fig:latent_bridge}) to be discovered and optimized.

Given a training dataset of robotics demonstrations, \algname leverages synthetic reasoning traces generated from foundation models and addresses the challenge of \textit{unverifiable quality} by formulating embodied reasoning in a variational inference framework. 
Instead of relying on external rewards, heuristics, or verifiers, our self-supervised approach uses VLA models' generative probabilities to weigh reasoning strategies based on information benefit in expert action prediction.
By importance sampling on these weights, \algname selects refined reasoning traces---pruning confounding or verbose information inherited from web-scale priors---to further bootstrap VLA training.
\algname helps mitigate data scarcity in robotics by distilling informative, high-quality embodied reasoning from internet-scale, multimodal priors, bypassing the need for expensive human annotations.

Through \algname, we find that VLAs can be distracted by irrelevant information and slowed by the verbosity of exhaustive reasoning---such as enumerating every visible object or always analyzing counterfactual paths---whereas selective reasoning primitives tailored to the embodiment improve both task success and efficiency. Empirically, across manipulation, legged locomotion, and autonomous driving benchmarks, \algname discovers distinct, interpretable reasoning distributions that bootstrap task success, yielding concise traces that reflect critical signals driving the model's decision-making. Ultimately, \algname creates potential for self-improving policies that learn not only to act, but to leverage priors to ponder the right questions before acting.
Our primary contributions are as follows:

\input{floats/figures/latent_bridge_reason}

\begin{enumerate}

\item We introduce \algname, a method that formulates embodied reasoning as a latent variable and provides a drop-in training recipe that self-improves from synthetic priors. By utilizing importance-weighted variational inference, our approach enables VLAs to refine and bootstrap reasoning strategies based on their information benefit for action prediction and eliminates ``one-size-fits-all'' heuristics and expensive human annotations.

\item We demonstrate that \algname effectively filters out ``distractor'' information (e.g., bounding boxes of task-irrelevant objects) while amplifying critical signals (e.g., structural affordances for legged robots). This process yields high-quality, interpretable reasoning traces that distinguish essential visual cues from redundant noise without external heuristics, rewards, or verifiers.

\item We validate \algname on manipulation, legged locomotion, and autonomous driving using various VLA architectures with 1B, 4B, 7B, and 30B parameters. \algname consistently produces models with action-predictive embodied reasoning, outperforming baseline models like those reasoning on all primitives, with 28\% gain in manipulation success, 101\% improvement in legged navigation scores, and 21\% reduction in autonomous driving collision-rate metric.

\end{enumerate}

%% file: floats/figures/teaser_fig.tex
\begin{figure}
    \centering
    \includegraphics[width=\linewidth]{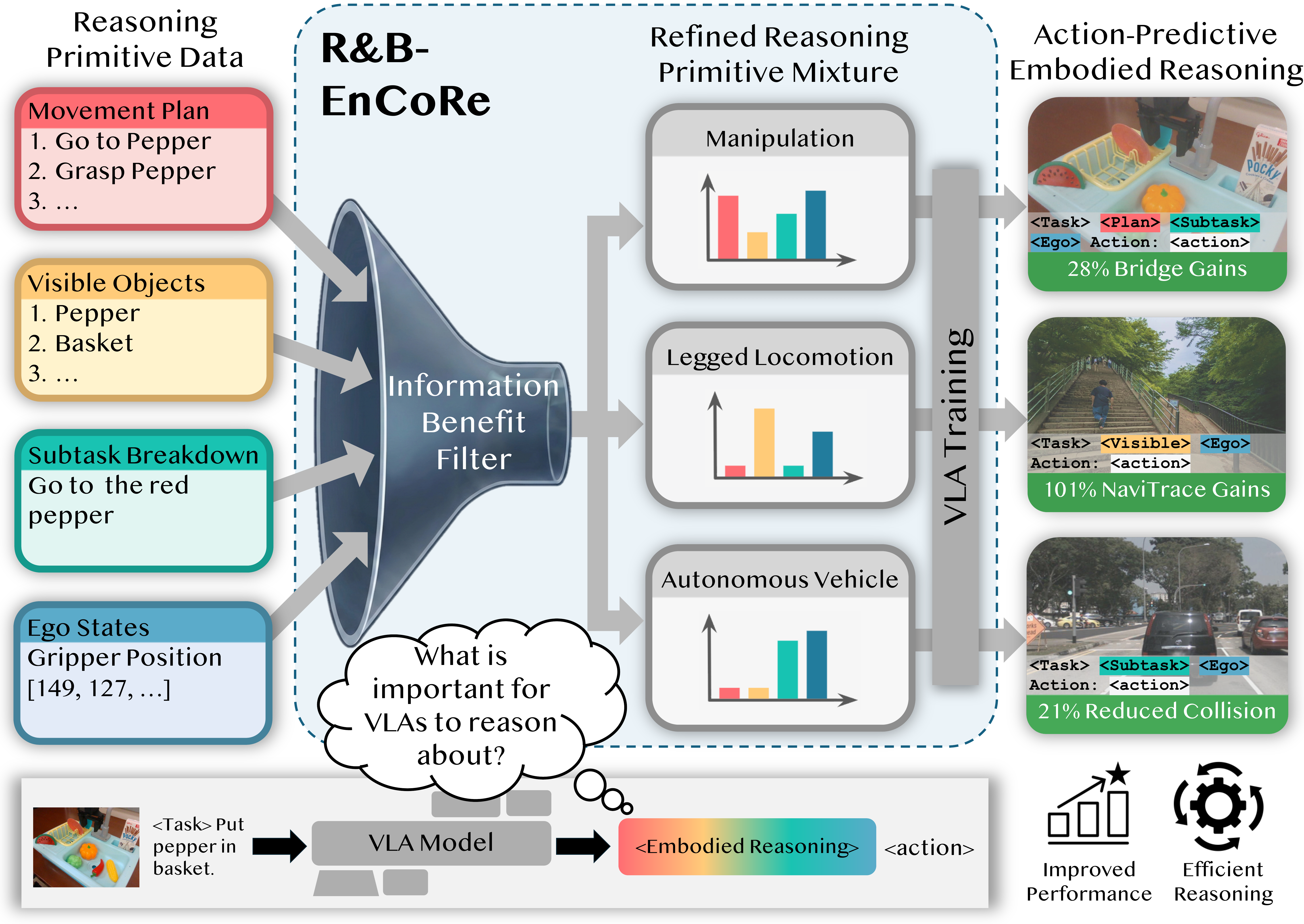}

    \vspace{-4pt}
    \caption{\small We generate diverse embodied reasoning primitives and refine them based on action-prediction information benefit. We bootstrap policy performance by retraining on these self-refined, high-quality reasoning traces, discovering embodiment-specific reasoning distributions that reveal effective strategies, significantly improving VLA task success while producing more efficient CoT traces.}
    \label{fig:teaser}
\iftoggle{anonymous}{\vspace{-15pt}
}{}
\end{figure}

%% file: floats/figures/latent_bridge_reason.tex
\begin{figure}[t]
    \centering
    \includegraphics[width=\linewidth]{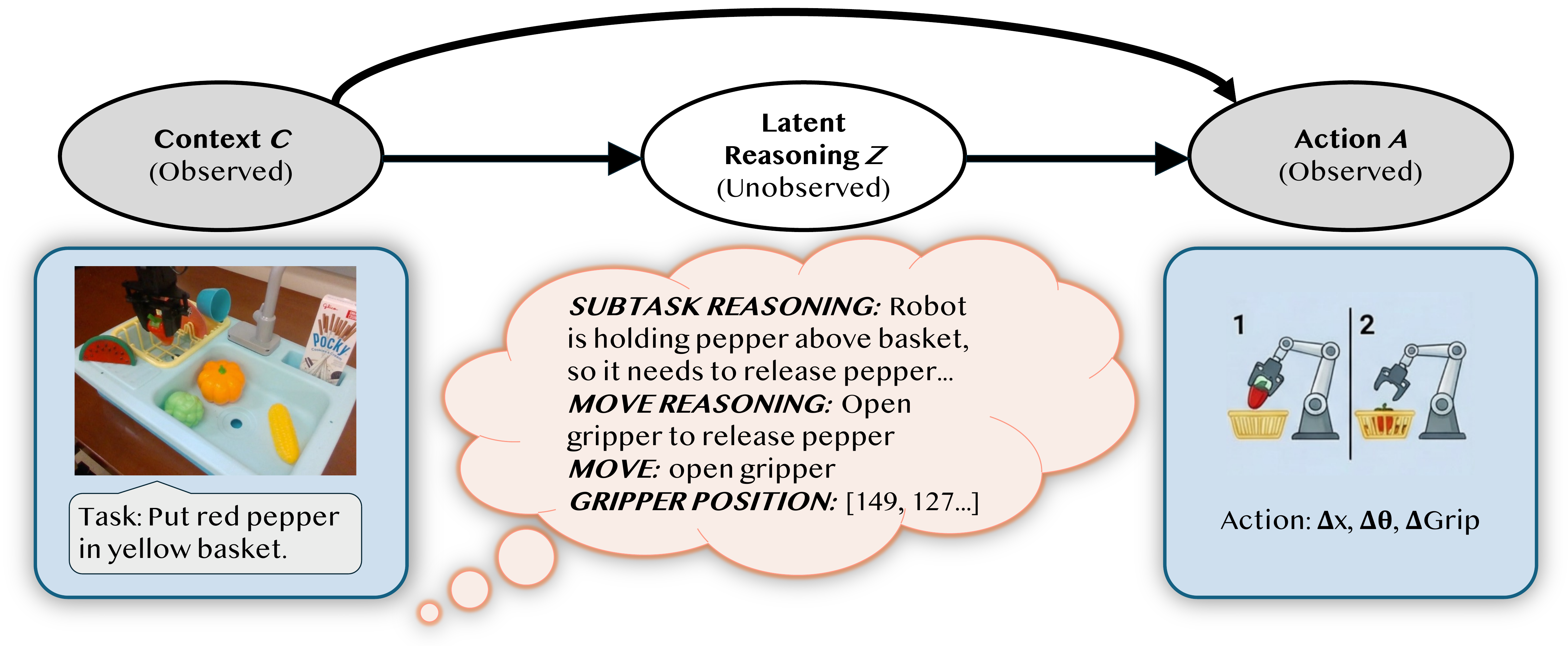}

\iftoggle{anonymous}{\vspace{-4pt}
}{}
    \caption{\small \textbf{Top:} Probabilistic Graphical Model relating the Task Context ($C$), Reasoning ($Z$), and Action ($A$). The latent reasoning $Z$ is induced from a set of primitives $\mathcal{R}$ (e.g., subtask reasoning, move reasoning). \textbf{Bottom:} An example reasoning trace on the Bridge setup.}
    \label{fig:latent_bridge}
%\vspace{-15pt}
\end{figure}

%% file: sections/related_works.tex
\iftoggle{anonymous}{\vspace{-5pt}
}{}
\section{Related Works}

\noindent \textbf{Vision-Language-Action Models.}  
We build on recent advancements in VLA architectures to leverage their inherent multimodal understanding across diverse embodiments.
Robot learning has begun to develop generalist policies that integrate visual and semantic representations to operate in unstructured environments~\cite{kawaharazuka2025vision, zhong2025survey}. Leveraging foundation models~\cite{bommasani2021opportunities} such as Vision-Language Models (VLMs)~\cite{bai2025qwen3vltechnicalreport, steiner2024paligemma, tschannen2025siglip} pre-trained on internet-scale corpora, these approaches adapt semantic and visual priors to embodied tasks~\cite{karamcheti2024prismatic}. This paradigm treats action prediction as a vision-language problem: models are trained via behavioral cloning on diverse datasets~\cite{oneill2024openx, walke2023bridge, khazatsky2024droid, caesar2020nuscenes} to map scene images and task descriptions to discrete action tokens representing expert control~\cite{pertsch2025fast, wang25vqvla, belkhale2024minivla}. This allows models to inherit the generalization capabilities of foundation models~\cite{firoozi2025foundation} and has been applied to manipulation~\cite{brohan2022rt1, zitkovich2023rt2, octo2023octo, kim2025openvla, belkhale2024minivla, black2025pi0, black2025pi05, intelligence2025pi, bjorck2025gr00t}, legged navigation~\cite{bjorck2025gr00t, cheng2025navila, ding2024quar, ding2025humanoid, jiang2025wholebodyvla}, and autonomous driving~\cite{zhou2025opendrive}.

\input{floats/figures/methods_diagram}

\noindent \textbf{Semantic and Visual Reasoning.} Chain-of-Thought (CoT) reasoning has enhanced LLM and VLM performance by generating intermediate logical steps before producing a final answer~\cite{wei2022chain, kojima2022large}. This computation increases expressivity and search capabilities~\cite{merrill2023expressive, li2024chain}, refining internal representations to better answer complex queries~\cite{feng2023towards, wang2024chain} in domains ranging from math and coding to visual question answering~\cite{wang2025survey, yang2025code, xu2025llava, zhou2025visualizing}. 
Beyond standard prompting, recent efforts explicitly integrate reasoning objectives during pre-training and post-training~\cite{abdin2024phi, li2023textbooks, luo2023wizardcoder, wei2023magicoder, liu2023tinygsm}, or they improve reasoning and instruction following via supervised finetuning~\cite{xu2023wizardlm, ge2024scaling, taori2023stanford} or reinforcement learning and self-play~\cite{trinh2024solving, guo2025deepseek, silver2016mastering, silver2017mastering}. Recent works~\cite{chen2024hiddenreasoners, hoffman2023training, hu2024amortizing, zhong2025brite} leverage Variational Inference to formulate reasoning as a latent variable for language tasks, using external verifiable rewards. Similarly, Expectation-Maximization algorithms have been used to iteratively bootstrap reasoning quality~\cite{singh2024beyond, ruan2025reasoning}. 
Our work adapts these variational inference techniques to the multimodal robotics domain, enabling VLAs to refine and bootstrap embodied reasoning without external supervision.

\noindent \textbf{Embodied Reasoning.} Recent work on generalist policies has extended CoT to robotics by generating intermediate reasoning traces before action prediction~\cite{team2025gemini}, finding that such ``thinking'' enhances generalization and robustness~\cite{zawalski2025Robotic}. Prior work employs diverse reasoning primitives: manipulation policies reason about task decomposition~\cite{sharma2022skill, wu2025you}, end-effector position~\cite{team2024gemini, lee2025molmoact}, bounding boxes~\cite{zhu2025objectvla}, language motions~\cite{belkhale2024rt}, future frames~\cite{zhao2025cot}, spatial relations~\cite{chen2024spatialvlm, zheng2024tracevla, driess2023plame}, and affordances~\cite{mees2023grounding, xu2025a0, borja2022affordance}. Navigation models similarly have reasoned about perception~\cite{mao2023gpt}, planning~\cite{maolanguage}, meta-actions~\cite{nvidia2025alpamayor1bridgingreasoningaction}, failure modes~\cite{ganai2025realtime}, and counterfactuals~\cite{peng2025counterfactual, glossop2025cast}. These methods typically rely on rigid, structured CoT annotations to ground visual and semantic knowledge~\cite{zawalski2025Robotic, chen2025training, van2025refinevla, clark2025action}. Determining the optimal combination of reasoning primitives for a specific robot and task remains an open challenge; 
% we address this by formulating reasoning as a latent variable, allowing the VLA to filter irrelevant information and bootstrap the strategies most critical for its specific embodiment.
our work formulates reasoning as a latent variable, allowing the VLA to filter irrelevant information and bootstrap the strategies most critical for its specific embodiment.

%% file: floats/figures/methods_diagram.tex
\begin{figure*}[ht]
    \centering
    \includegraphics[width=\textwidth]{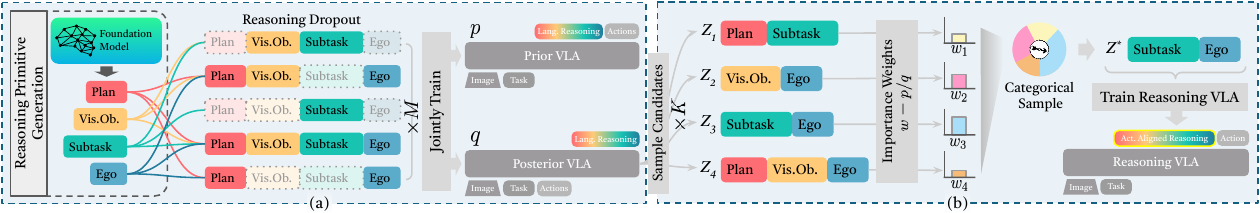}

\iftoggle{anonymous}{\vspace{-4pt}
}{}
    \caption{\small Overview of \algname. (a) We generate diverse reasoning primitives (e.g., Plan, Visible Objects) and combine them via dropout to warmstart model capturing prior and posterior distributions. (b) We sample candidates from posterior and apply importance weighting to filter for reasoning that maximizes action prediction power. These refined, high-quality reasoning traces are used to bootstrap the final VLA.}
    \label{fig:methods}
%\vspace{-15pt}

\end{figure*}

%% file: sections/preliminaries.tex
\section{Preliminaries on Variational Inference}
\label{sec:variationalinference}

In this work, we leverage variational inference to identify effective embodied reasoning for VLA action generation.
We briefly describe the latent variable model framework and the Importance Weighted Autoencoder (IWAE),
which will serve as the foundation for our algorithm \algname (see Fig.~\ref{fig:methods}).

\subsection{Latent Variable Models and the Variational Autoencoders}

Given a context $C$ (\eg observation and task), latent variable $Z$ (\eg textual reasoning) and observed variable $A$ (\eg action) as in Fig.~\ref{fig:latent_bridge}, the objective in Variational Inference is to find a latent distribution that maximizes marginal log-likelihood of the ground truth observed data.
For generative model $ p(Z,A\mid C) = p(A \mid C,Z)p(Z\mid C) $, which we term the prior distribution, the objective can be written as:
\begin{equation*}
\log p(A\mid C) = \log \int p(Z, A\mid C) \, dz.
\end{equation*}
In the rest of the paper, we drop the context $C$ for brevity when its usage as a conditional variable is implicit.

Typically for generative models, computing this integral is intractable. Variational Autoencoders (VAE) address this by introducing a posterior distribution estimate model 
% (i.e. encoder) 
$q(Z \mid A)$ to approximate the true posterior $p(Z \mid A)$. By applying Jensen’s Inequality to the evidence, the Evidence Lower Bound (ELBO) for VAE can be obtained:
\begin{equation*}
\log p(A) = \log\E_{Z\sim q}\left[  \frac{p(Z, A)}{q(Z \mid A)} \right] \geq \underbrace{\E_{Z\sim q}\left[ \log \frac{p(Z, A)}{q(Z \mid A)} \right]}_{\text{ELBO}_{\text{VAE}}} . %\doteq L_{VAE}(\theta, \phi)
\end{equation*}
This ELBO is composed of computationally tractable terms. Note, the difference between $\text{ELBO}_{\text{VAE}}$ and the true evidence is the Kullback-Leibler (KL) divergence between the approximate and true posterior: $ D_{KL}(q(Z \mid A) \parallel p(Z \mid A)) $~\cite{bishop2006pattern}.

\subsection{Importance Weighted Autoencoders (IWAE)}

The IWAE framework \cite{burda16importance,cremer2017reinterpreting} improves on VAE by creating an importance sampling estimate of the evidence via multiple samples. We can draw $K$ \iid samples $Z_1, \dots, Z_K$ from the posterior distribution $q(Z \mid A)$. The $K$-sample importance-weighted lower bound is defined as:
\begin{equation*}
\mathcal{L}_K \doteq \mathbb{E}_{Z_1, \dots, Z_K \sim q(Z \mid A)} \left[ \log \left( \frac{1}{K} \sum_{k=1}^K \frac{p(Z_k, A)}{q(Z_k \mid A)} \right) \right]    .
\end{equation*}

To see that this is a lower bound on $\log p(A)$, we observe that the expectation of the term inside the log is an unbiased estimator of the evidence:
\begin{align*}
\E_{Z_{1:K} \sim q} \left[ \frac{1}{K} \sum_{k=1}^K \frac{p(Z_k, A)}{q(Z_k \mid A)} \right] = \frac{1}{K} \sum_{k=1}^K \E_{Z_k \sim q} \left[ \frac{p(A, Z_k)}{q(Z_k \mid A)} \right]  \\
= \frac{1}{K} \sum_{k=1}^K \int  \frac{p(A, Z_k)}{q(Z_k \mid A)} q(Z_k \mid A) dZ_k = p(A).
\end{align*}
Using Jensen’s Inequality on the concave log function yields:
\begin{equation*}
\log p(A) = \log \E \left[ \frac{1}{K} \sum_{k=1}^K w_k \right] \geq \E \left[ \log \frac{1}{K} \sum_{k=1}^K w_k \right] = \mathcal{L}_K,
\end{equation*}
where $w_k = \frac{p(Z_k, A)} {q(Z_k \mid A)}$ are importance weights. For $K>1$, \cite{burda16importance} shows $\mathcal{L}_{K+1} \geq \mathcal{L}_K \geq \mathcal{L}_1=\text{ELBO}_{\text{VAE}}$, so by increasing sample count, IWAE theoretically improves evidence estimate.
\iftoggle{anonymous}{}{In the next section, we propose our algorithm which is based on a sampling-importance-resampling technique introduced in~\cite{cremer2017reinterpreting} that improves upon IWAE by estimating the posterior distribution using a categorical distribution of the importance weights. More details on the theory behind this technique can be found in Appendix~\ref{sec:iwae_theory}.}

%% file: sections/proposed_approaches.tex
\section{\algname Framework}
\label{sec:method}

\input{floats/alg/warmstarting}

We propose our approach, \algname, which treats embodied reasoning not as a fixed annotation, but as a latent variable $ Z $ that explains the relationship between observation and task context $ C $ and physical action $ A $ (Fig.~\ref{fig:latent_bridge}).

\subsection{Warmstarting Strategy Hypotheses via Reasoning Dropout}
\label{sec:warmstart}

While foundation models possess the capability to reason about diverse aspects of a physical scene, we lack ground-truth data determining which specific signals are useful for low-level robotic control. We hypothesize a set of $\rho$ potentially relevant reasoning primitives $\mathcal{R}=\{R_1, \dots, R_\rho\}$ 
% (e.g., high-level plans, Affordances, Terrain Analysis)
(\eg high-level plans, visual objects, subtask breakdowns, ego state)
and extract textual explanations for each by posing them as VQA tasks to Foundation Models or using human annotated datasets.
% (Fig.~\ref{fig:methods}a).

To discover the optimal combination of these primitives for specific tasks, we must expose the model to various strategies $\mathbf{R}\subseteq \mathcal{R}$ of combinations during training rather than a single fixed template. To this end, we construct a ``warmstart'' dataset $\mathcal{D}_{\text{warm}}$ by sampling from the powerset of reasoning primitives. Specifically, for each demonstration $(C,A)$, we generate $M$ synthetic traces $Z_j,~j\in\{1 \cdots M\}$ using Reasoning Dropout (refer to Alg.~\ref{alg:rbencore_train}). 
Each primitive $R_r \in \mathcal{R}$ is independently included or dropped with a fixed probability $d$. This mechanism exposes the model to various reasoning strategies---ranging from concise to verbose---providing diversity to our refining stage to identify signals that improve action prediction.

\iftoggle{anonymous}{\vspace{-5pt}
}{}

\subsection{Jointly Training Prior and Posterior}
\label{sec:joint_training}

We utilize these diverse traces to train a Vision-Language-Action (VLA) model that serves as the posterior and prior roles simultaneously, mirroring the encoder-decoder structure of variational autoencoders but adapted for embodied reasoning (Fig.~\ref{fig:methods}a).
The prior model represents the agent's ability to generate reasoning and actions at test time, while the posterior model represents the ability to explain actions in hindsight. 
Specifically, the prior is trained on the warmstart dataset $\mathcal{D}_{\text{warm}}$ to generate reasoning strategies with correct actions, and the posterior is trained on the same dataset to propose reasoning candidates conditioned on actions:

\vspace{0.1cm}
\noindent\textbf{Prior Model} $ p(Z, A \mid C) $: Trained on the sequences 
$ \{(Z_j, A \mid C)\}$ by conditioning on the context tokens. This represents the online policy: it observes context $ C $, generates reasoning $ Z_j $, and predicts action $ A $. This distribution captures the predictive power of reasoning for control.

\vspace{0.1cm}
\noindent\textbf{Posterior Model} $ q(Z \mid C, A) $: Trained on the sequences $ \{(Z_j \mid C, A) \}$ by conditioning on context \emph{and action} tokens. By conditioning on task $C$ and ground-truth action $A$, the posterior learns to generate diverse reasoning $ Z_j $, explaining how action $A$ is the correct response, acting as a proposal distribution from which we sample action-relevant reasoning.

\input{floats/alg/refine_bootstrap}

\iftoggle{anonymous}{\vspace{-5pt}
}{}

\subsection{Refining and Bootstrapping via Importance Sampling}
\label{sec:refine_bootstrap}

We refine VLA reasoning through a two-step process: first, we use our trained prior and posterior model to generate and refine reasoning traces through importance sampling; second, we retrain a VLA with these more action-predictive embodied reasoning traces.

\input{floats/figures/reasoning_distrib}

\noindent\textbf{Stage 1: Information-Beneficial Reasoning (Refine).}
We refine the reasoning-enriched data by identifying strategies that improve action prediction through importance weighting (Alg.~\ref{alg:rbencore_refine}).
For each demonstration $(C, A)$, we sample $K$ candidate reasoning traces $\{Z_k\}_{k=1}^K \sim q(\cdot \mid C,A)$ from the trained posterior. 
For each trace $Z_k$, we compute importance weight:
\begin{equation*}
w(Z_k) = \frac{p(Z_k, A \mid C)}{q(Z_k \mid C, A)}.
\end{equation*}
Then, we resample a single trace $Z^*$ from these candidates according to a categorical distribution proportional to the importance weights $\text{Cat}(w(Z_k))$. Intuitively, this process yields reasoning traces with a natural interpretation---selecting a reasoning primitive for inclusion in the final strategy indicates that it provides information about the correct action.
To formalize this, we define the \emph{information benefit} of a reasoning strategy $\mathbf{R}$ as how much it reduces the divergence between our model's action distribution and the expert's distribution $p_{data}(A|C)$:
$$\Delta \mathcal{I}_\mathbf{R} \doteq D_{\text{KL}}(p_{\text{data}} \| p(A|C,\mathcal{Z}_{\cancel{\mathbf{R}}})) - D_{\text{KL}}(p_{\text{data}} \| p(A|C,\mathcal{Z}_\mathbf{R})),$$
where $\mathcal{Z}_\mathbf{R}$ and $\mathcal{Z}_{\cancel{\mathbf{R}}}$ denote the set of traces with and without strategy $\mathbf{R}$. Our importance weighting estimates this quantity:

%\vspace{-5pt}
\begin{formal}
\iftoggle{anonymous}{\vspace{-5pt}
}{}
\begin{prop}[Importance Weight Ratios Estimate Information Benefit] 
\label{prop:info_benefit}
Under the training (Alg.~\ref{alg:rbencore_train}) and sampling (Alg.~\ref{alg:rbencore_refine}) procedures, the expected log-ratio of importance weights equals the information benefit:
\begin{equation*}
\mathbb{E}_{A \sim p_{\text{data}}} \left[ \log \frac{\mathbb{E}_{Z \sim q}[w(Z_\mathbf{R}) \mid Z_\mathbf{R} \in \mathcal{Z}_\mathbf{R}]}{\mathbb{E}_{Z \sim q}[w(Z_{\cancel{\mathbf{R}}}) \mid Z_{\cancel{\mathbf{R}}} \in \mathcal{Z}_{\cancel{\mathbf{R}}}]} \right] = \Delta \mathcal{I}_\mathbf{R}. 
\end{equation*}
\text{Proof in Appendix~\ref{sec:prop}.}
\end{prop}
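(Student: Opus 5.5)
The plan is to compute each conditional expectation of importance weights in closed form and then recognize the log of their ratio as a difference of cross-entropies.

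\textbf{Step 1: conditional weight expectations.} For a fixed context $C$ and action $A$,
\[
\mathbb{E}_{Z\sim q}[w(Z)\mid Z\in\mathcal{Z}_\mathbf{R}] = \frac{1}{q(\mathcal{Z}_\mathbf{R}\mid C,A)}\sum_{Z\in\mathcal{Z}_\mathbf{R}} q(Z\mid C,A)\,\frac{p(Z,A\mid C)}{q(Z\mid C,A)} .
\]
The $q$ factors cancel inside the sum, leaving
\[
\mathbb{E}_{Z\sim q}[w(Z)\mid Z\in\mathcal{Z}_\mathbf{R}] = \frac{p(A,\mathcal{Z}_\mathbf{R}\mid C)}{q(\mathcal{Z}_\mathbf{R}\mid C,A)} .
\]
This uses only that the weights are supported wherever $q$ is, i.e.\ the same absolute-continuity assumption as in the IWAE unbiasedness argument of Section~\ref{sec:variationalinference}.

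\textbf{Step 2: removing the normalizers.} Factor the numerator as
\[
p(A,\mathcal{Z}_\mathbf{R}\mid C) = p(A\mid C,\mathcal{Z}_\mathbf{R})\,p(\mathcal{Z}_\mathbf{R}\mid C),
\]
and do the same for $\cancel{\mathbf{R}}$. The log-ratio then splits into three pieces:
\[
\log\frac{p(A\mid C,\mathcal{Z}_\mathbf{R})}{p(A\mid C,\mathcal{Z}_{\cancel{\mathbf{R}}})}
\;+\;\log\frac{p(\mathcal{Z}_\mathbf{R}\mid C)}{p(\mathcal{Z}_{\cancel{\mathbf{R}}}\mid C)}
\;-\;\log\frac{q(\mathcal{Z}_\mathbf{R}\mid C,A)}{q(\mathcal{Z}_{\cancel{\mathbf{R}}}\mid C,A)} .
\]
The key claim is that the second and third terms cancel because of the training procedure (Alg.~\ref{alg:rbencore_train}). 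Reasoning Dropout includes each primitive independently with probability $d$, regardless of $C$ or $A$. Prior and posterior are both fit to $\mathcal{D}_{\text{warm}}$, so at the idealized optimum each reproduces the dropout marginal over which strategy appears. Hence $p(\mathcal{Z}_\mathbf{R}\mid C)=q(\mathcal{Z}_\mathbf{R}\mid C,A)$, and both equal the dropout probability of strategy $\mathbf{R}$; the same holds for $\cancel{\mathbf{R}}$.

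\textbf{Step 3: averaging over the data.} Only the first term survives. Take $\mathbb{E}_{A\sim p_{\text{data}}}$ and add and subtract $\mathbb{E}_{p_{\text{data}}}[\log p_{\text{data}}]$. This gives
\[
\mathbb{E}_{p_{\text{data}}}\!\left[\log\frac{p(A\mid C,\mathcal{Z}_\mathbf{R})}{p(A\mid C,\mathcal{Z}_{\cancel{\mathbf{R}}})}\right]
= D_{\text{KL}}(p_{\text{data}}\,\|\,p(A\mid C,\mathcal{Z}_{\cancel{\mathbf{R}}})) - D_{\text{KL}}(p_{\text{data}}\,\|\,p(A\mid C,\mathcal{Z}_\mathbf{R})) = \Delta\mathcal{I}_\mathbf{R}.
\]

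\textbf{Main obstacle.} The weak point is Step 2: justifying that the learned posterior's strategy-selection marginal does not depend on $A$ and matches the prior's. I will make this explicit as an idealization, namely that the models are well specified and trained to optimality on $\mathcal{D}_{\text{warm}}$, where the strategy is drawn independently of $(C,A)$. A secondary subtlety is defining $p(A\mid C,\mathcal{Z}_\mathbf{R})$. I will take it to mean the prior's action distribution marginalized over traces in $\mathcal{Z}_\mathbf{R}$, i.e.\ $\sum_{Z\in\mathcal{Z}_\mathbf{R}} p(Z,A\mid C)/p(\mathcal{Z}_\mathbf{R}\mid C)$. This is exactly the quantity Step 1 produces, so the identification in Step 3 is consistent.
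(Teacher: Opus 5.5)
Your proposal is correct and follows the paper's proof essentially step for step. It derives the closed form $\mathbb{E}_{Z\sim q}[w(Z)\mid Z\in\mathcal{Z}_\mathbf{R}] = p(A,\mathcal{Z}_\mathbf{R}\mid C)/q(\mathcal{Z}_\mathbf{R}\mid C,A)$, cancels the prior and posterior partition marginals under the same dropout-plus-convergence idealization, and applies the cross-entropy/KL identity after averaging over $p_{\text{data}}$. There is one harmless slip: Alg.~\ref{alg:rbencore_train} includes each primitive with probability $1-d$ and drops it with probability $d$, not the reverse, but your argument only uses the equality $p(\mathcal{Z}_\mathbf{R}\mid C)=q(\mathcal{Z}_\mathbf{R}\mid C,A)$ and never its specific value.
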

\end{formal}
%\vspace{-5pt}
\noindent
More intuitively, \algname automatically amplifies reasoning strategies that improve action prediction ($\Delta \mathcal{I}_{\mathbf{R}} > 0$) while suppressing distracting ones ($\Delta \mathcal{I}_{\mathbf{R}} < 0$) in a self-supervised manner. The result is a refined dataset of concise, action-predictive reasoning traces tailored to the specific embodiment and task.

\vspace{0.1cm}
\noindent\textbf{Stage 2: Training Action-Aligned Reasoning (Bootstrap).}
% \noindent\textbf{Bootstrap: Training on Action-Aligned Reasoning.}
We bootstrap VLA performance by retraining the model on the filtered, high-quality reasoning dataset $ \{(Z^*, A \mid C)\} $ (Fig.~\ref{fig:methods}b). This yields a policy that reasons about information beneficial for control. By training on reasoning traces selected for their information benefit, the final VLA policy learns from examples where the reasoning is aligned with task success. This mitigates noise introduced by training on all reasoning primitives, pruning those that distract from predicting expert action. Specifically, the model learns from the refined data robust internal representations that more effectively map context to action via task-relevant intermediate reasoning.

%% file: floats/alg/warmstarting.tex
\begin{algorithm}[t]
\caption{\algname: Warmstarting}
\label{alg:rbencore_train}
\begin{algorithmic}[1]
\REQUIRE Dataset $\mathcal{D} = \{(C^i, A^i)\}_{i=1}^N$, Reasoning Primitives $\mathcal{R}$, Dropout rate $d$, VLM $\mathcal{M}$, Foundation Model $\text{FM}$

\STATE $\mathcal{D}_{\text{warm}} \leftarrow \emptyset$
\FOR{each $(C^i, A^i) \in \mathcal{D}$}
    %\STATE \COMMENT{\textit{Query FM to generate text $z_R, \forall R \in \mathcal{R}$}}
    \STATE $\{z_R\}^i_{R \in \mathcal{R}} \leftarrow \text{Query}(\text{FM}, (C^i,A^i), \mathcal{R})$ 
    \FOR{$j=1$ \TO $M$}
        \STATE Construct reasoning trace $Z^i_j \subseteq \{z_R\}^i_{R \in \mathcal{R}} $ by including each $z \in \{z_R\}^i_{R \in \mathcal{R}} $ with probability $1-d$
        \STATE $\text{w.p. 0.5 : } \mathcal{D}_{\text{warm}} \leftarrow\mathcal{D}_{\text{warm}} \cup \{(Z^i_j, A^i \mid C^i)\} $ \\
        $\text{o.w. : } \mathcal{D}_{\text{warm}} \leftarrow
        \mathcal{D}_{\text{warm}} \cup  \{(Z^i_j \mid C^i, A^i)\}$
    \ENDFOR
\ENDFOR

\STATE $\mathcal{M}_{pq}\leftarrow\text{Train}(\mathcal{M}, \mathcal{D}_{\text{warm}})$

\RETURN $\mathcal{M}_{pq}$
\end{algorithmic}

\end{algorithm}

%% file: floats/alg/refine_bootstrap.tex
\begin{algorithm}[t]
\caption{\algname: Refinement \& Bootstrapping}
\label{alg:rbencore_refine}
\begin{algorithmic}[1]
\REQUIRE Dataset $\mathcal{D} = \{(C^i, A^i)\}_{i=1}^N$,  $\mathcal{M}_{pq}$, VLM $\mathcal{M}$
%\ENSURE Final Policy $\pi_{\text{final}}$

\STATE $\mathcal{D}_{\text{refined}} \leftarrow \emptyset$ 
\FOR{each $(C^i, A^i) \in \mathcal{D}$}
    \STATE Sample $K$ candidates $\{Z^i_k\}_{k=1}^K \sim q(\cdot | C^i, A^i)$ 
    \FOR{$k=1$ to $K$}
        \STATE Compute weight $w(Z^i_k) \leftarrow \frac{p(Z^i_k, A^i | C^i) }{q(Z^i_k | C^i, A^i)}$ 
    \ENDFOR
    % \STATE Resample index $j \sim \text{Cat}(w(Z^i_k))$
    \STATE Sample reasoning trace $Z^{i*} \sim \text{Cat}(w(Z^i_k))$
    \STATE $\mathcal{D}_{\text{refined}} \leftarrow \mathcal{D}_{\text{refined}} \cup \{( Z^{i*}, A^i \mid C^i)\}$ 
\ENDFOR

%\STATE Train final policy $\pi_{\text{final}}(Z,A\mid C)$ on $\mathcal{D}_{refined}$ %to minimize $\mathcal{L}_{autoregressive}$ 
\STATE $\mathcal{M}_{\text{VLA}} \leftarrow \text{Train}(\mathcal{M}, \mathcal{D}_{\text{refined}})$

\RETURN $\mathcal{M}_{\text{VLA}}$

\end{algorithmic}

\end{algorithm}

%% file: floats/figures/reasoning_distrib.tex
\begin{figure*}[ht]
    \centering
    \includegraphics[width=\textwidth]{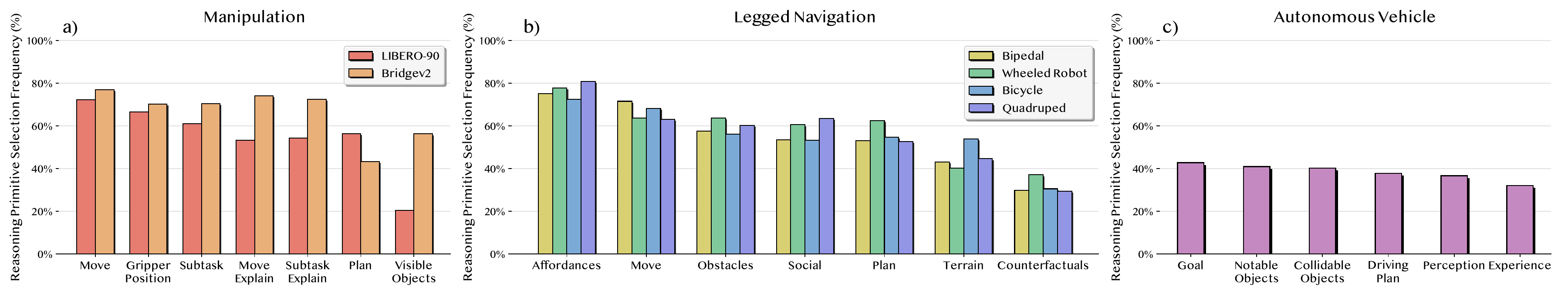}

    %\vspace{-8pt}
    \caption{\small This plot shows the reasoning primitives distributions that are generated from \algname refining warmstarting diverse reasoning strategy data. In a) the distribution for manipulation shows differences between reasoning for Franka Panda in simulation versus WidowX hardware in real-world data, notably for Visible Object, Move Explain, and Subtask Explain reasoning primitives. In b) we observe that the four-legged locomotion embodiments we investigate benefit in similar frequencies across reasoning types, with structural affordances being critical. For autonomous vehicles, we find in c) that reasoning focuses on goals and constraints.}
    \label{fig:mainresult}
%\vspace{-10pt}
\end{figure*}

%% file: sections/experiments.tex
\section{Experiments}

We report the performance of \algname across robotic manipulation (LIBERO-90 and real-world WidowX) in Sections~\ref{sec:Libero90Results} and \ref{sec:BridgeResults}, legged navigation (bipedal, wheeled robot, bicycle, quadruped) in Section~\ref{sec:NavigationResults}, and autonomous driving in Section~\ref{sec:AVResults}. Our evaluation shows that by treating embodied reasoning as a latent variable, VLAs can refine informative reasoning traces with high action-predictive power from internet-scale priors, without external heuristics, rewards, or verifiers. For each embodiment, our primary baselines are VLAs with no reasoning, VLAs trained with all reasoning primitives, and the prior VLAs with random reasoning primitives. We evaluate in the following experiments how \algname guides VLAs to self-refine their internal decision-making process to align more closely with expert control and ultimately bootstrap performance.
We detail our empirical results, structured to evaluate \algname's ability to refine reasoning distributions and improve task success in a self-supervised manner.

\input{sections/experiment_content/manipulation}

\input{sections/experiment_content/quadruped}

\input{sections/experiment_content/driving}

%% file: sections/experiment_content/manipulation.tex
\input{sections/experiment_content/manipulation/libero}

\input{sections/experiment_content/manipulation/bridge}

%% file: sections/experiment_content/manipulation/libero.tex
\subsection{LIBERO-90 Franka Panda Manipulation}
\label{sec:Libero90Results}

We evaluate our approach on the LIBERO-90~\cite{liu2023libero} manipulation benchmark by training a 1B-parameter MiniVLA~\cite{belkhale2024minivla}, which combines a 0.5B Qwen2.5 LLM~\cite{qwen2025qwen25technicalreport} backbone with DINOv2~\cite{oquab2024dinov} and SigLIP~\cite{zhai2023sigmoid} vision encoders. Applying \algname, we use a reasoning dropout rate of $d=0.2$. The context $C$ includes the scene image and task description. Following~\cite{chen2025training}, the latent variable $Z$ includes at most seven reasoning primitives generated by Llama 2~\cite{touvron2023llama} and Molmo~\cite{deitke2025molmo}: Plan, Visible Objects, Subtask, Subtask Explain, Move, Move Explain, and Gripper Position (see Appendix~\ref{sec:app_libero_details} for details). Action $A$ is seven discrete tokens encoding a 10-step action chunk~\cite{zhao2023learning} via VQ-VAE~\cite{van2017neural}. More details on LIBERO experiments are in Appendix~\ref{sec:app_libero_details}.

\begin{hypothesis}
\textbf{Q1} How effective is \algname at identifying task-salient perception reasoning in manipulation, \ie discerning action-critical objects, without external supervision?
\end{hypothesis}

\noindent
We validate whether \algname can discern relevant perceptual signals in LIBERO-90. We first apply our framework exclusively on the Visible Objects reasoning primitive. Using the architecture described in Section~\ref{sec:Libero90Results}, we restrict the reasoning to subsets of visible objects. We construct warmstart data by applying a dropout rate of $d=0.2$ to the ground-truth object list, generating traces with random subsets of the scene's objects. This forces the model to explore strategies ranging from sparse to exhaustive object enumeration.

As shown in Table~\ref{tab:object_list_stats}, refining visible objects significantly improves performance, outperforming baselines that reason with the full object list, random subsets, or no perception reasoning. This suggests that while perception is useful, exhaustive enumeration of every visible object can introduce noise that distracts from the immediate control task. To analyze refinement quality, we measure the \emph{Object Criticality Rate}: the percentage of traces where every listed object is task-salient, \ie explicitly mentioned in the a separate dataset of reasoning primitives (\eg Plan, Move). Crucially, these reference primitives act as a separate validation source and were \emph{not} used to train the models in this experiment, serving as an independent proxy for functional relevance. We find that the baseline model (listing all objects) almost never produces traces ($0.03\%$) with only task-salient objects. In contrast, \algname refines embodied CoT to prioritize task-relevant signals, increasing criticality rate to $>25\%$ and filtering irrelevant objects without external guidance (Fig.~\ref{fig:libero_90_VO}).

\begin{hypothesis}
\textbf{Q2} How does \algname refine verbose reasoning primitives in manipulation to improve task success?
\end{hypothesis}

\input{floats/figures/libero_VO}

\noindent
Applying \algname to refine a wider set of reasoning primitives in LIBERO-90, we see in Table~\ref{tab:liber90_res} that \algname achieves higher success over other reasoning strategies while reducing average token count by half compared to reasoning with all primitives. In Fig.~\ref{fig:mainresult}a, we observe \algname yields a refined training distribution of traces that prioritizes Move, Gripper Position, and Subtask reasoning primitives (Fig.~\ref{fig:mainresult}a). Notably, the model assigns high importance to Move primitives---short meta-action language descriptions. This confirms research findings~\cite{belkhale2024rt, nvidia2025alpamayor1bridgingreasoningaction} that generating these meta-action descriptions provides critical, nonredundant information that improves expert action prediction. We observe that \algname favors the concise Subtask/Move over the more verbose Subtask/Move Explain, as it prunes redundant justifications lacking information benefit in action prediction. Additionally, the Visible Objects primitive appears with low frequency ($20\%$) in the final distribution. This aligns with our findings in Q1: listing all objects is generally ineffective.

%% file: floats/figures/libero_VO.tex
% \begin{figure}[t]
%     \centering
%     \includegraphics[width=\linewidth]{images/LIBERO_Visible_Objects.pdf}
%     \caption{Libero VO}
%     \label{fig:libero_90_VO}
% \end{figure}
\begin{figure}[t]
\vspace{-10pt}
    \centering
    \includegraphics[width=\linewidth, trim=0 1.6cm 0 0.5cm, clip]{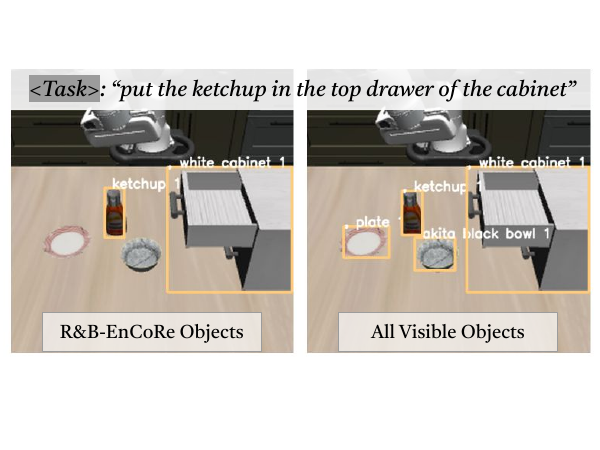}

    %\vspace{-4pt}
    \caption{\small Visible Objects generated in LIBERO-90 by \algname's model and a model producing a full list. The latter model attends to task-irrelevant objects like plate and bowl, while our model emits reasoning focused on task-critical objects (refer to Appendix~\ref{sec:supplementary_qualitative} for full qualitative comparisons).}
    \label{fig:libero_90_VO}
%\vspace{-8pt}
\end{figure}

%% file: sections/experiment_content/manipulation/bridge.tex
\subsection{Bridge WidowX Hardware}
\label{sec:BridgeResults}

We evaluate our approach on WidowX hardware by training with the Bridge v2 Dataset~\cite{walke2023bridge} a 7B-parameter OpenVLA~\cite{belkhale2024minivla}, which combines a Llama 2~\cite{qwen2025qwen25technicalreport} backbone with DINOv2~\cite{oquab2024dinov} and SigLIP~\cite{zhai2023sigmoid} vision encoders. With \algname, we use a reasoning dropout rate of $d=0.2$, with the same $7$ primitives as in Section~\ref{sec:Libero90Results}, where the reasoning content was generated in~\cite{zawalski2025Robotic} by the Gemini 1.0~\cite{team2024gemini} model. The action $A$ consists of seven discrete tokens based on the tokenization of~\cite{zitkovich2023rt2, kim2025openvla}. To support high-throughput repeated sampling when generating our synthetic reasoning candidate for refinement, we extended SGLang-VLA~\cite{kwok2025robomonkey,zheng2024sglang} to support rapid inferencing of reasoning-based~\cite{zawalski2025Robotic,chen2025training} Prismatic Models~\cite{karamcheti2024prismatic,kim2025openvla}. More details on Bridge experiments are in Appendix~\ref{sec:app_bridge_details}.

\input{floats/tables/visible_object_only}

\input{floats/tables/libero90}

\begin{hypothesis}
\textbf{Q3} How effective can \algname enable manipulation VLAs to generalize when test-time reasoning is suppressed for speed, compared to reasoning with all primitives?

\end{hypothesis}

\noindent
Since generating textual reasoning traces at test-time can drastically increase latency (often taking several seconds per step), we first investigate whether VLAs can benefit from reasoning during training while bypassing the computational cost of generation during deployment. To this end, we employ ``Action Forcing,'' where VLAs are prompted with an Action tag to immediately generate action tokens, effectively suppressing the reasoning output. For models trained with \algname or random primitives, this capability is emergent: because some of their training data has no reasoning content due to dropout, these models can zero-shot generalize to the Action Forcing prompt. In contrast, the VLA trained with all reasoning primitives does not have reasoning-free training data; hence, we post-train this model for some epochs with no reasoning to acquire this ability, following the findings in~\cite{chen2025training}.

\input{floats/tables/hardware_af}

\begin{figure}[t]
    \centering
        \includegraphics[width=\linewidth]{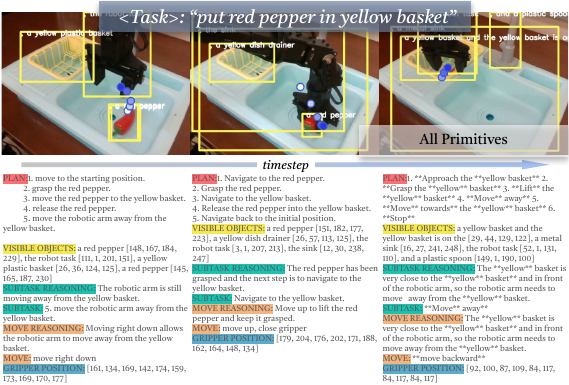}

        \vspace{0.3em}
        \includegraphics[width=\linewidth]{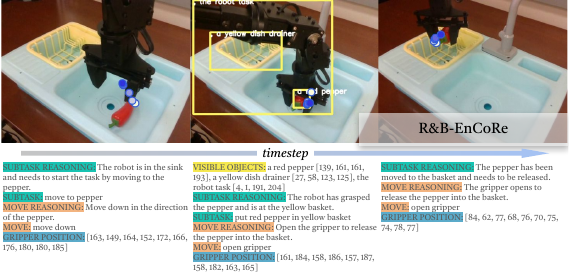}

        \vspace{-4px}
        
        \caption{\small Sample reasoning traces on WidowX hardware. \algname produces more concise and effective reasoning compared to reasoning on all primitives.}
    % Second subfigure with 3 stacked figures
    \label{fig:bridge-reasoning-qual}
    \vspace{-10pt}
\end{figure}

We evaluated four models across 9 tasks totaling 468 trials, categorized into: In-Distribution, OOD Target (novel target objects), and OOD Scene/Distractions (cluttered environments with distracting objects). Task details can be found in the Appendix~\ref{sec:app_hardware_details}. As seen in our results in Table~\ref{tab:hw_af}, for In-Distribution tasks, both \algname and all reasoning primitive models outperform no reasoning and random reasoning primitives. This result confirms that training on valid reasoning traces improves the underlying policy representation, even when that reasoning is not explicitly generated at test-time. For both OOD task groups, reasoning with all primitives suffers a performance drop, suggesting that rigid, ``one-size-fits-all'' templates force the baseline model to attend to specific tasks or scene features (\eg OOD distracting objects with the same color as the target) that may be irrelevant in OOD settings, confounding the policy. \algname's model is robust to these shifts, having learned to prune irrelevant signals, and maintains its success rate. Overall, our approach applied in manipulation VLAs on real-world hardware consistently demonstrates that refining the reasoning distribution during training aligns the latent representation with successful task performance without needing to sacrifice test-time speed.

\input{floats/figures/slow_hw_results}

%\iftoggle{anonymous}{\input{floats/figures/Navitrace_Waypoints} }{}

%\iftoggle{anonymous}{\input{floats/figures/navitrace_nogarbage_perf}}{}

\input{floats/figures/Navitrace_Waypoints}

\input{floats/figures/navitrace_nogarbage_perf}

%\subsubsection{Ablation: WidowX Bridge with Test-Time Reasoning}

\begin{hypothesis}
\textbf{Q4} How does \algname improve task performance and reduce test-time reasoning latency compared to baseline reasoning on all primitives for WidowX hardware?
\end{hypothesis}

\noindent
We perform an ablation study evaluating the performance and latency of explicit test-time reasoning on the WidowX robot, comparing \algname against the baseline trained on all primitives. We provide traces from a trajectory generated by models reasoning with all primitives and our approach in Fig.~\ref{fig:bridge-reasoning-qual}. Results in Fig.~\ref{fig:slowhwresult}a show that while in-distribution performance is comparable, \algname significantly outperforms the baseline in OOD settings with distractions, where the latter's performance degrades by $31\%$. This suggests \algname effectively filters irrelevant strategies that otherwise yield confounding traces in novel environments. Furthermore, Fig.~\ref{fig:slowhwresult}b demonstrates that \algname reduces inference time from $>5$ seconds to $\approx 3$ seconds per step. We observed that slow generation creates control lag, often causing grasped objects to slip during the trajectory. By pruning verbose reasoning, our approach generates concise traces with lower test-time latency and improved task success.

%% file: floats/tables/visible_object_only.tex
\begin{table}[t]
    \centering
    \caption{\small \algname autonomously identifies critical objects in LIBERO-90 data that are most predictive of expert actions, pruning irrelevant objects and improving task success.}

    \vspace{-4pt}
    \label{tab:object_list_stats}
    \setlength{\tabcolsep}{4pt}
    \begin{tabular}{lcc}
        \toprule
        \textbf{Training Strategy} & \textbf{Success Rate} & \textbf{Object Criticality Rate} \\
        \midrule
        No Reasoning & 75.9\% & N/A\\
        List of All Objects & 76.1\% & 0.03\% \\
        List of Random Objects & 77.0\% & 3.43\% \\
        \algname on Object List & \textbf{80.3\%} & \textbf{25.02\%} \\
        \bottomrule
    \end{tabular}

\end{table}

%% file: floats/tables/libero90.tex
\begin{table}[t]
    \centering
    \caption{\small Results on analyzing reasoning primitives in LIBERO-90: Plan, Visible Objects, Subtask, Subtask Explain, Move, Move Explain, and Gripper Position. \algname refines action-predictive reasoning, resulting in shorter traces and improved success rate. }

    \vspace{-4pt}
    \label{tab:liber90_res}
    \begin{tabular}{lccc}
        \toprule
        \textbf{Training Strategy} & \textbf{Success Rate} & \textbf{Avg. \# Gen. Token}\\
        \midrule
        No Reasoning & 75.9\% & 10.0 \\
        All Primitives & 78.6\%  & 256.8\\
        Random Primitives & 76.5\%  & 256.6 \\
        \algname & \textbf{79.5\%} & 129.3 \\
        \bottomrule
    \end{tabular}

%\vspace{-5pt}

\end{table}

%% file: floats/tables/hardware_af.tex
\begin{table*}[h]
\centering
\caption{\small Success rates on WidowX hardware Bridgev2 setup. Reasoning Models are prompted and/or trained with Action forcing for reduced latency~\cite{chen2025training}. In total for the experiments presented in this table, we had $4\text{ models}\times9\text{ tasks}\times13\text{ trials}=468\text{ total trials}$.}
\vspace{-4pt}
\label{tab:hw_af}
\setlength{\arrayrulewidth}{0.4pt}
\renewcommand{\arraystretch}{1.2} % Reduced stretch since cellspace handles it now
\resizebox{\textwidth}{!}{%
\begin{tabular}{Sl Sl Sc Sc Sc Sc} % Added S prefix for consistent spacing
\hline
\textbf{Category} & \textbf{Task} & \textbf{No Reason} & \textbf{All Primitives} & \textbf{Random} & \textbf{R\&B-EnCoRe} \\ \hline
\multirow{3}{*}{In Distrib.} & put red pepper in yellow basket & 69.2\% & \textbf{100.0\%} & 92.3\% & \textbf{100.0\%} \\ 
 & \cellcolor{gray!20}put red pepper on black stove & \cellcolor{gray!20}53.8\% & \cellcolor{gray!20}\textbf{92.3\%} & \cellcolor{gray!20}46.2\% & \cellcolor{gray!20}84.6\% \\ 
 & put orange carrot on pink plate & 61.5\% & 76.9\% & \textbf{84.6\%} & \textbf{84.6\%} \\ \hline
\multirow{3}{*}{OOD Target Object} & \cellcolor{gray!20}put blue peacock in sink & \cellcolor{gray!20}46.2\% & \cellcolor{gray!20}38.5\% & \cellcolor{gray!20}69.2\% & \cellcolor{gray!20}\textbf{76.9\%} \\ 
 & \makecell[l]{put orange tape on green towel \\ \emph{(include orange carrot)}} & 38.5\% & 38.5\% & 38.5\% & \textbf{76.9\%} \\
 & \cellcolor{gray!20}put pink pepto on green towel & \cellcolor{gray!20}46.2\% & \cellcolor{gray!20}30.8\% & \cellcolor{gray!20}53.8\% & \cellcolor{gray!20}\textbf{69.2\%} \\ \hline
\multirow{6}{*}{\makecell[l]{OOD Scene with \\ Distracting Objects}} & \makecell[l]{put yellow corn on blue plate \\ \emph{(include pink plate, carrot)}} & 53.8\% & 69.2\% & 38.5\% & \textbf{76.9\%} \\ 
 & \cellcolor{gray!20}\makecell[l]{put orange carrot in yellow basket \\ \emph{(include distraction objects in basket, sink)}} & \cellcolor{gray!20}61.5\% & \cellcolor{gray!20}\textbf{92.3\%} & \cellcolor{gray!20}84.6\% & \cellcolor{gray!20}\textbf{92.3\%} \\ 
 & \makecell[l]{first put yellow corn in yellow basket then put \\ red pepper in yellow basket \emph{(human takes corn)}} & 46.2\% & 30.8\% & 46.2\% & \textbf{61.5\%} \\ \hline
\end{tabular}
}
\vspace{-10pt}
\end{table*}

%% file: floats/figures/slow_hw_results.tex
\begin{figure}[t]
    \centering
    \includegraphics[width=\linewidth]{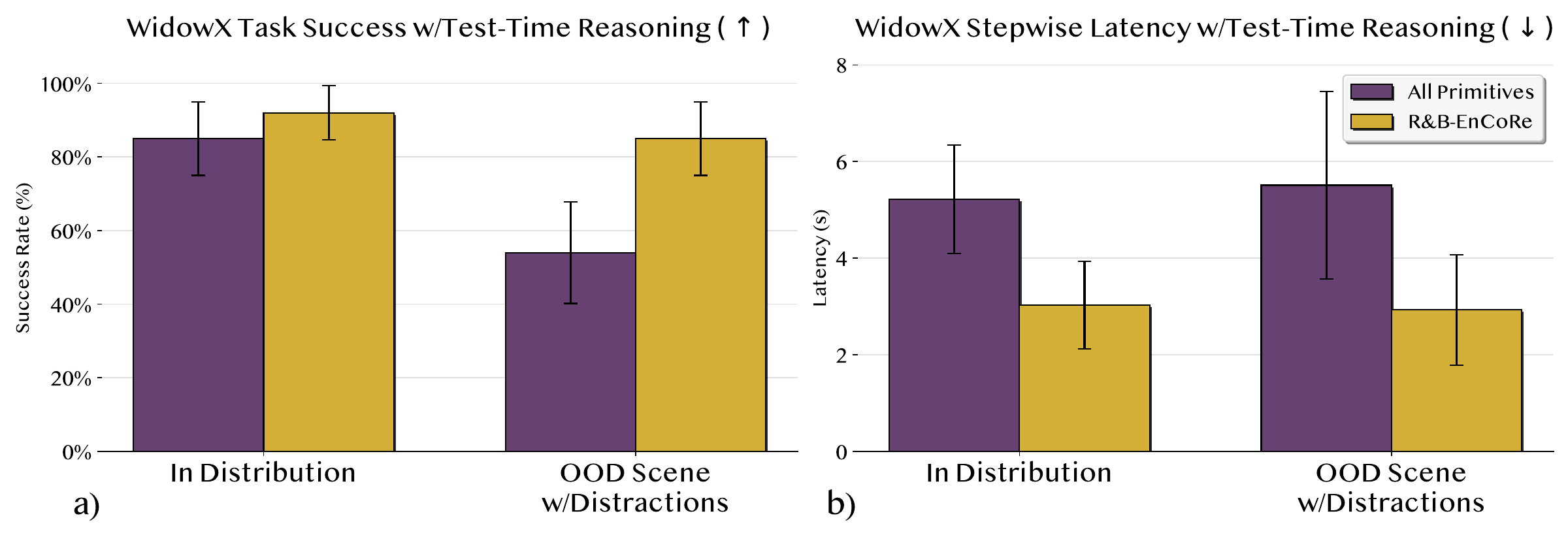}

    %\vspace{-8pt}
    \caption{\small Success rates and latency of test-time reasoning on WidowX hardware. \algname produces performant reasoning VLAs with shorter reasoning traces (so faster inference). Reasoning on all primitives degrades performance for cluttered scenes with OOD objects.}
    \label{fig:slowhwresult}
\vspace{-10pt}
\end{figure}

%% file: floats/figures/Navitrace_Waypoints.tex
\begin{figure*}[ht]
    \centering
    \includegraphics[width=\textwidth]{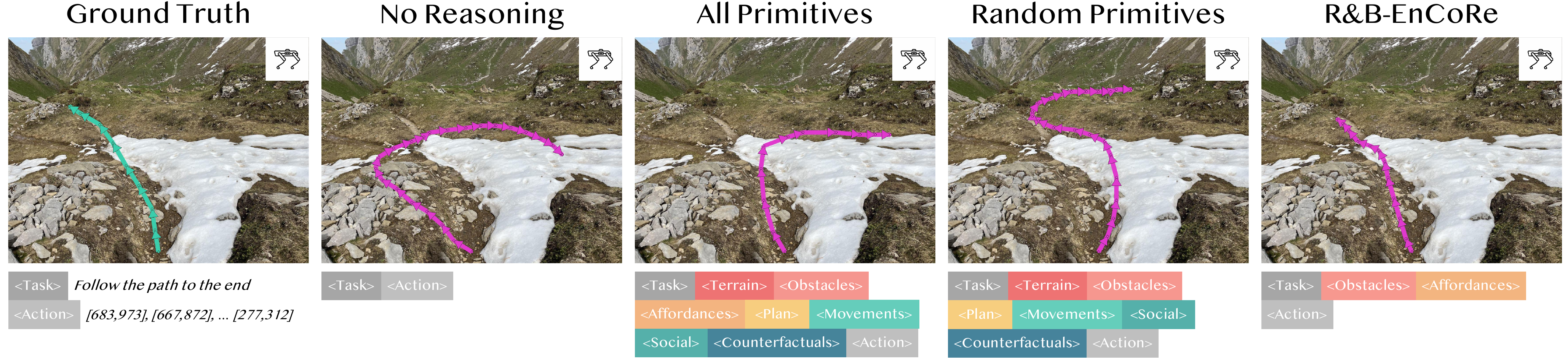}

    %\vspace{-8pt}
    \caption{\small Quadruped Navigation Waypoint Trajectories. The quadruped robot must follow the trail while avoiding slippery ice. No Reason navigation VLA ignores terrain hazards and traverses the ice. Reasoning with all primitives is confounded by irrelevant signals; while it has reduced ice contact (perhaps due to affordance reasoning), it fails to follow the path. Random Primitives tracks some of the path but likely due to lack of affordance reasoning, it traces through a lot of the slippery ice. \algname identifies the effective reasoning strategy, with minimal ice contact while maintaining the path, matching the Ground Truth. This example was taken from holdout task set.}

\vspace{-12pt}
\end{figure*}

%% file: floats/figures/navitrace_nogarbage_perf.tex
\begin{figure}[t]
    \centering
    \includegraphics[width=\linewidth]{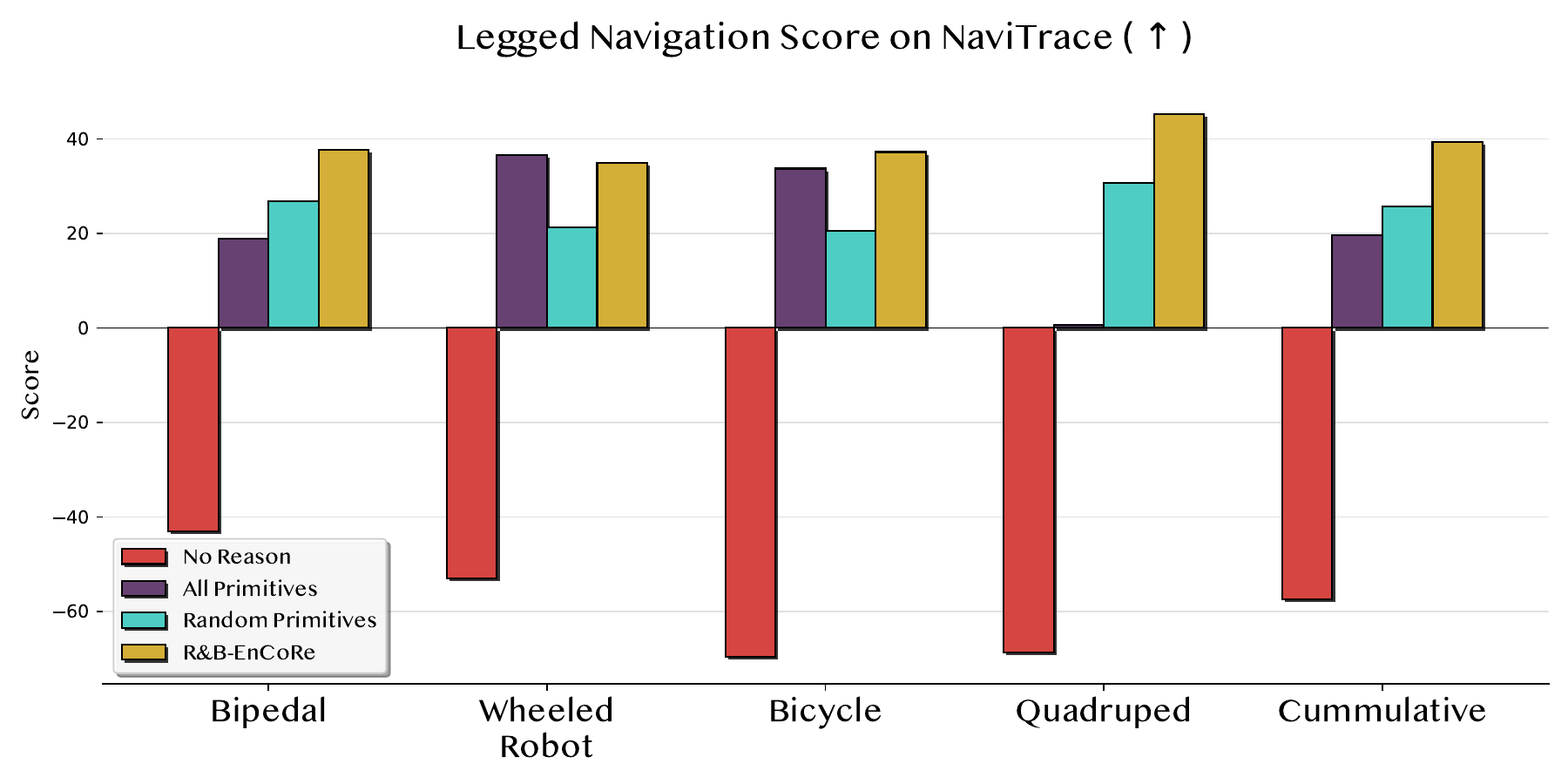}

    %\vspace{-8pt}
    \caption{\small The NaviTrace score is a normalized metric so 100 is perfect path alignment with expert, and 0 is the score for the na\"ive straight line path down the middle of the scene. \algname generally performs best across embodiments on navigation metric, improving the cumulative score from 19.6 (all primitives) to 39.4 (\algname). For comparison, native CoT of the Qwen3-VL-30b model (\ie querying zero-shot) achieves a score of $-$260.}
    \label{fig:navitrace_nogarbage_score}
\vspace{-10pt}
\end{figure}

%% file: sections/experiment_content/quadruped.tex
\vspace{-5pt}
\subsection{Legged Robots Navigation}
\label{sec:NavigationResults}

We evaluate the capabilities of \algname in the context of legged robot navigation using the NaviTrace dataset~\cite{windecker2025navitrace}, which contains approximately 1000 tasks across 500 unique scenes for four embodiments: bipedal, wheeled, bicycle, and quadruped robots. Evaluation is performed on a holdout subset of these tasks and scenes using a downstream task metric from~\cite{windecker2025navitrace} that incorporates Dynamic Time Warping distance, goal endpoint error, and semantic penalties correlating with human preferences. We finetune a Qwen3-VL-30B-A3B-Instruct~\cite{bai2025qwen3vltechnicalreport} Mixture-of-Experts model to process scene images and task specifications, outputting 2D waypoint coordinates. To enable fully self-reliant learning, we construct the reasoning dataset by querying the VLM itself via VQA (so $\text{FM}=\mathcal{M}$ in Alg.~\ref{alg:rbencore_train}) to generate traces for seven hypothesized reasoning primitives, \eg terrain, affordances, social norms.% for the ground truth paths.

\begin{hypothesis}
\textbf{Q5} How does \algname self-bootstrap legged locomotion navigation VLA performance when refining and learning from reasoning generated by its base VLM?
\end{hypothesis}

\noindent
By applying \algname with dropout $d=0.5$ to these self-generated traces, the model refines its own reasoning distribution to improve the downstream navigation score (Fig.~\ref{fig:navitrace_nogarbage_score}). As in Fig.~\ref{fig:mainresult}b, we observe that the refined distribution heavily prioritizes reasoning about actionable capabilities of the environment (Affordances) and proposed movements (Move), identifying these as critical for successful navigation. We also observe that counterfactual reasoning is not always necessary, dropping significantly in frequency, confirming recent findings~\cite{glossop2025cast, peng2025counterfactual, liu2023reflect, huang2025thinkact} that counterfactual reasoning is only reserved for specific decision moments. These results provide evidence for significant promise that frontier VLMs can self-bootstrap performance by synthetically generating VQAs on embodied reasoning and self-refining their data.

\begin{hypothesis}
\textbf{Q6} How effective is \algname in pruning patently irrelevant reasoning primitives like subjective descriptions of weather for legged navigation?
\end{hypothesis}

\input{floats/figures/navitrace_distrib_wgarbage}

\noindent
We introduce an uninformative reasoning primitive, subjective description of the weather, to the training data. Analyzing the reasoning distribution produced by \algname in Fig.~\ref{fig:garbage_distrib}, we see our approach successfully identifies ``weather" descriptions as largely irrelevant to the navigation task, pruning this primitive significantly more aggressively than others. Conversely, the model retains reasoning related to obstacles, affordances, and social norms. This refined primitive distribution provides evidence that \algname can effectively filter out patently irrelevant information, generating high-quality relevant reasoning strategies tailored to the specific needs of various embodiments of legged locomotion navigation.

%% file: floats/figures/navitrace_distrib_wgarbage.tex
\begin{figure}[t]
    \centering
    \includegraphics[width=\linewidth]{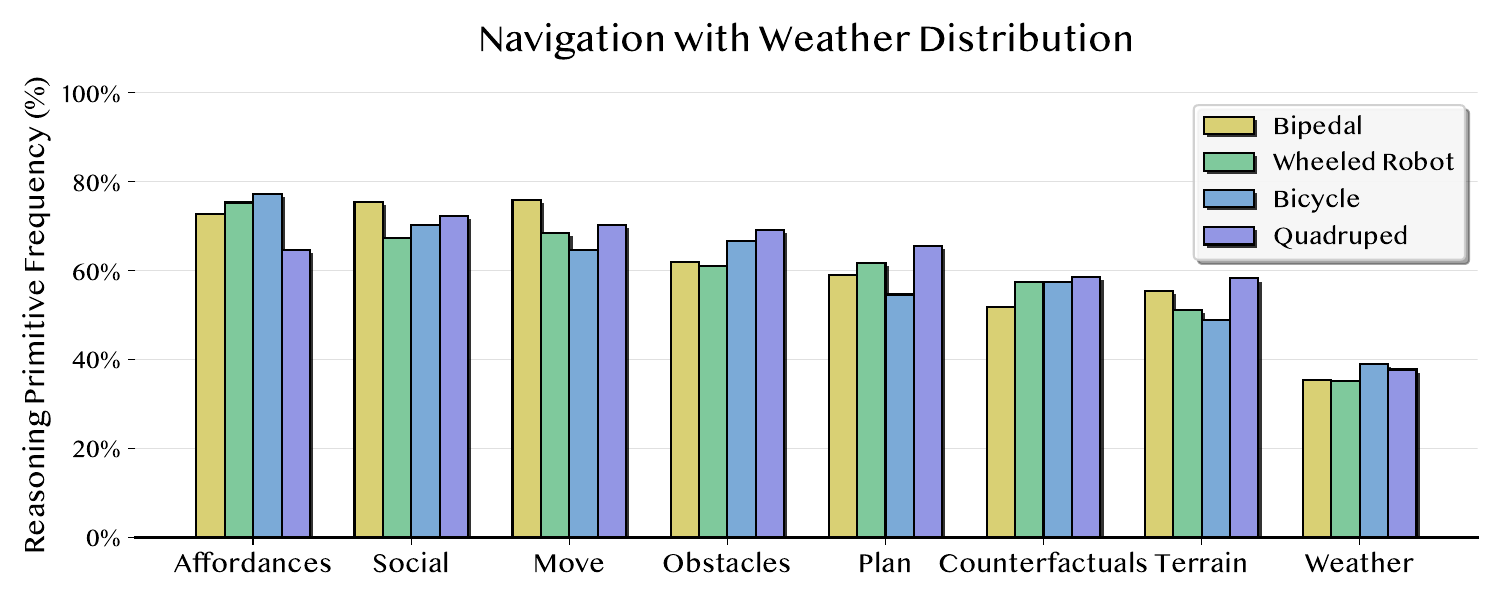}

    %\vspace{-8pt}
    \caption{\small \algname prunes uninformative subjective weather reasoning from refined traces ($\sim$36.7\%; lower than other primitives).}
    \label{fig:garbage_distrib}
\vspace{-10pt}
\end{figure}

%% file: sections/experiment_content/driving.tex
\vspace{-5pt}
\subsection{Autonomous Vehicles}
\label{sec:AVResults}

We extend \algname to the autonomous vehicle (AV) nuScenes dataset~\cite{caesar2020nuscenes} and study how reasoning traces for driving VLAs changes under our method. We leverage traces from pioneering LLM agent-based planners~\cite{mao2023gpt, maolanguage}, repurposed for training the reasoning component of a driving VLA. Specifically, we finetune a Qwen3-VL-4B-Instruct Dense Model~\cite{bai2025qwen3vltechnicalreport} to take in the front camera image, and output the ego-vehicle's planning trajectory over 3 seconds. We report evaluation using the UniAD~\cite{hu2023planning} planning metrics on the held-out validation set (refer to Appendix~\ref{sec:app_av_details} for evaluation details). In this section, we aim to refine these commonly used reasoning traces to improve downstream performance.

\begin{hypothesis}
\textbf{Q7} How does \algname further refine human-crafted reasoning for Autonomous Vehicle data?
\end{hypothesis}

\input{floats/tables/av_results}

\input{floats/figures/nuscenes_qualitative}

\input{floats/figures/collision_rate_scaling_law}

\noindent
To evaluate \algname with reasoning dropout of $d=0.5$ against handcrafted reasoning traces, we compare our method to using all reasoning primitives from~\citep{maolanguage}, a random reasoning strategy, and no reasoning (Table \ref{tab:av_performance_comparison}). While all reasoning components reduce collision rate, they slightly increase L2 error compared to no reasoning, suggesting irrelevant information may harm trajectory prediction performance. \algname is able to reduce irrelevant components (Fig.~\ref{fig:mainresult}c) and subsequently improve downstream trajectory prediction (Fig.~\ref{fig:nuscenes_qualitative}), suggesting that the embodied domain of autonomous vehicle VLAs can benefit from reasoning trace refinement, which has traditionally been handcrafted~\cite{hwang2024emma,nvidia2025alpamayor1bridgingreasoningaction}.

\begin{hypothesis}
\textbf{Q8} How does \algname's performance for AV scale with respect to the number of posterior samples?
\end{hypothesis}

\noindent
To understand how performance scales with the number of posterior samples, we ablate the variational inference parameter $K$ in Fig.~\ref{fig:scaling_law}. We observe that increasing the number of posterior samples improves performance on the collision rate metric, with performance saturating around 16 samples, substantiating that the multi-sampling IWAE process used by \algname yields tighter bounds with increased samples.

%% file: floats/tables/av_results.tex
\begin{table}[t]
    \centering
    \caption{\small Performance comparison of AV with UniAD metrics. \algname is able to refine reasoning traces and reduce deviation from ground truth path and collision rate. For comparison, zero-shot native chain-of-thought achieves suboptimal 10.35m average L2 error.}

    \vspace{-4pt}
    \label{tab:av_performance_comparison}
    
    \setlength{\tabcolsep}{4pt}
    \begin{tabular}{lcccc|cccc}
        \toprule
        & \multicolumn{4}{c}{\textbf{L2 Path Error (m)}} & \multicolumn{4}{c}{\textbf{Collision Rate (\%)}} \\
        \cmidrule(lr){2-5} \cmidrule(lr){6-9}
        \textbf{Method} & \textbf{1s} & \textbf{2s} & \textbf{3s} & \textbf{Avg} & \textbf{1s} & \textbf{2s} & \textbf{3s} & \textbf{Avg} \\
        \midrule
        No Reasoning & 0.22 & 0.62 & 1.34 & 0.72 & 0.10 & 0.25 & 1.11 & 0.49 \\
        Full Reasoning & 0.22 & 0.64 & 1.33 & 0.73 & 0.07 & \textbf{0.17} & 0.91 & 0.38 \\
        Random & 0.22 & 0.62 & 1.31 & 0.72 & \textbf{0.05} & 0.20 & 0.81 & 0.35 \\
        \algname & \textbf{0.21} & \textbf{0.59} & \textbf{1.25} & \textbf{0.68} & \textbf{0.05} & \textbf{0.17} & \textbf{0.70} & \textbf{0.30} \\
        \bottomrule
    \end{tabular}
%\vspace{-15pt}
\end{table}

%% file: floats/figures/nuscenes_qualitative.tex
\begin{figure}[t]
%\vspace{-2pt}
    \centering
    \includegraphics[width=\linewidth]{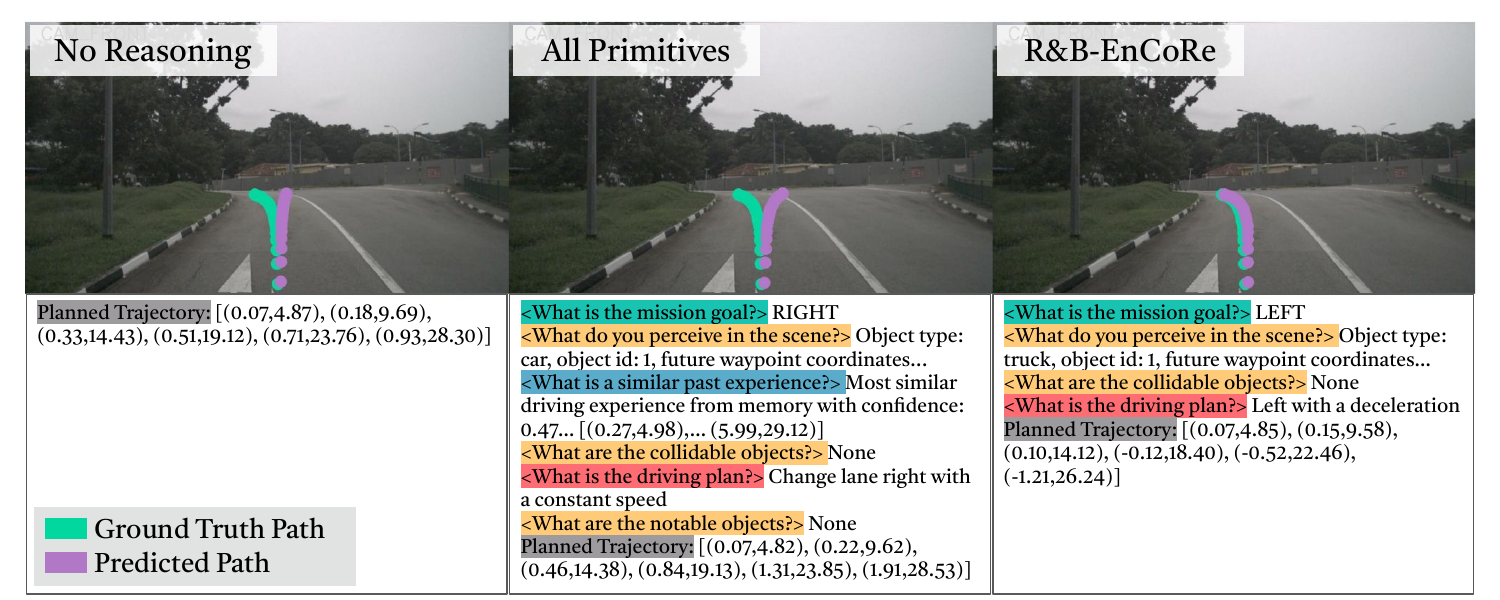}

    %\vspace{-4pt}
    \caption{\small Planned trajectories comparing driving VLAs using reasoning by \algname's model and a model producing a full list.}
    \label{fig:nuscenes_qualitative}
\vspace{-8pt}
\end{figure}

%% file: floats/figures/collision_rate_scaling_law.tex
\begin{figure}[t]
    \centering
    \includegraphics[width=\linewidth]{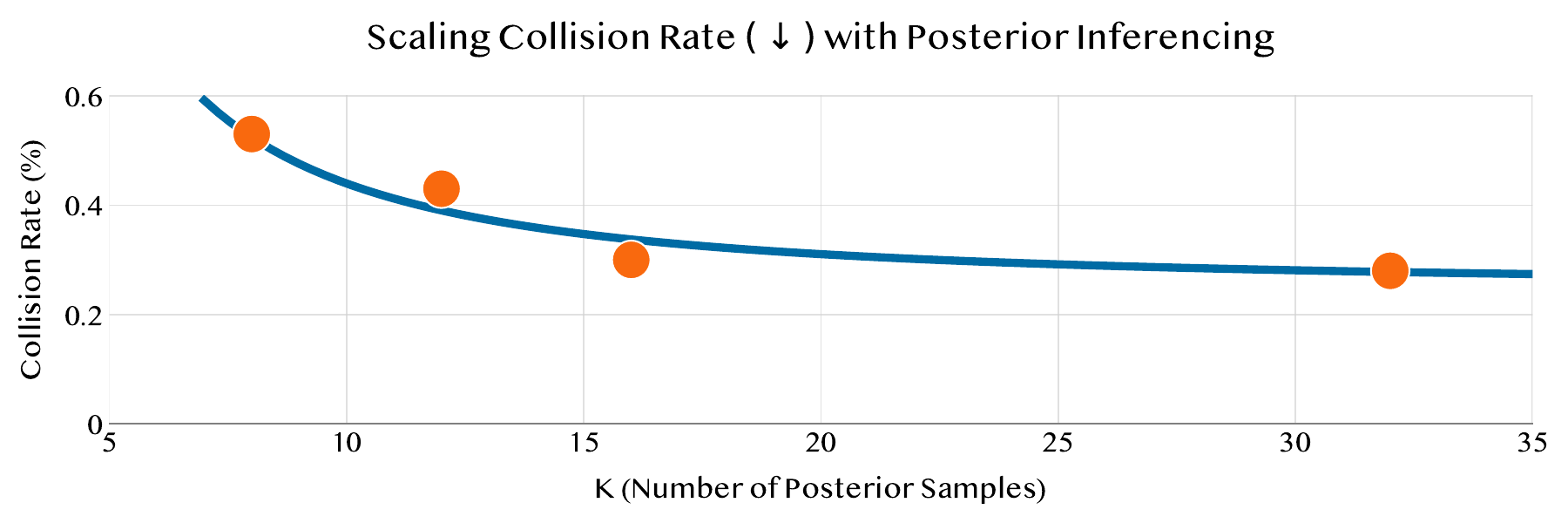}\vspace{-8pt}
    \caption{\small Collision Rate scaling with posterior inference. More samples $K$ from posterior distribution results in improved action prediction estimate, and ultimately lower collision rate. Scaling curve fitted with $\text{Collision Rate}=(3.65/K)^{1.65}+0.25$.}
    \label{fig:scaling_law}
\vspace{-10pt}
\end{figure}

%% file: sections/discussion_conclusion.tex
\section{Discussion and Conclusion}

We introduced \algname, which treats embodied reasoning as a latent variable to ground internet-scale priors in physical control. Using importance-weighted variational inference, \algname autonomously filters chain-of-thought traces to identify expert action-predictive, embodiment-specific reasoning without external supervision. Across manipulation, legged navigation, and autonomous driving, our method significantly outperforms fixed baselines. Notably, \algname's effectiveness improves with the diversity and coverage of warmstart primitives rather than their individual quality: action-uninformative primitives are automatically down-weighted by the importance-sampling step, and in the limit where no primitive carries predictive signal the variational objective (see Appendix~\ref{sec:prop} and~\ref{sec:iwae_theory}) converges to an empty reasoning trace. While our approach trains an additional prior-posterior distribution model, the posterior effectively functions as a high-quality synthetic data generator, allowing for controlled, reward-free exploration of reasoning strategies that can further enhance policy robustness. Future research can investigate applying this framework to continual learning settings, as well as prompting mechanisms that are more native to the base VLM architecture, viewing the posterior as an answer explainer and the prior as a solution reasoning-answer generator. Such approach could further improve \algname scalability for larger models where warmstart training may be computationally prohibitive.

%% file: sections/acknowledgments.tex
\section*{Acknowledgments}

This work was supported by NASA University Leadership Initiative, Toyota Research Institute (TRI), Defense Advanced Research Projects Agency (DARPA), Stanford Center for Automated Reasoning, Stanford Marlowe GPU Cluster~\cite{kapfer2025marlowe}, Stanford Google Cloud Platform, DoD High Performance Computing Modernization Program, Thinking Machines Lab, the National Artificial Intelligence Research Resource Pilot and the Anvil supercomputer (award NSF-OAC 2005632). This article solely reflects the opinions and conclusions of its authors and not of the aforementioned supporting entities.

%% file: appendix/appendix.tex
\begin{appendices}

\onecolumn
\iftoggle{appendix_only}{}{
\textcolor{violet}{\tableofcontents}
}

\input{appendix/notations}

\input{appendix/theorem}

\input{appendix/alg_details}

\input{appendix/implementation}

\input{appendix/main_exp}

\input{appendix/qualitative_results}

\input{appendix/more_ablation_experiments}

\end{appendices}

%% file: appendix/notations.tex
\section{Notation Table}

\textcolor{white}{ }

\begin{table}[h]
\centering
\caption{\small Notation and Symbols used in paper}
\label{tab:notation}
\small
\begin{tabular}{@{}cl@{}}
\toprule
\textbf{Symbol} & \textbf{Description} \\
\midrule
\multicolumn{2}{@{}l}{\textit{Core Variables}} \\
$C$ & Context (observation image and task description) \\
$A$ & Action (discrete action tokens, 2D waypoints, etc) \\
$Z$ & Latent reasoning trace (unobserved strategy) \\
$R$ & Reasoning primitve (\eg Affordance reasoning) \\
$\rho$ & Number of reasoning primitive types \\
$\mathcal{R}$ & Set of reasoning primitives $\{R_1, \ldots, R_\rho\}$ \\
$\mathbf{R}$ & Reasoning Strategy, subset of reasoning primitives \\
$z_R$ & Textual content for reasoning primitive $R\in\mathcal{R}$ \\
\midrule
\multicolumn{2}{@{}l}{\textit{Probabilistic Models}} \\
$p(A \mid  C)$ & Marginal action distribution given context \\
$p(Z, A \mid  C)$ & Prior distribution (joint over reasoning and action) \\
$p(A \mid  C, Z)$ & Action likelihood given context and reasoning \\
$p(Z \mid  C)$ & Prior reasoning distribution \\
$q(Z \mid  C, A)$ & Posterior distribution (reasoning given action) \\
$p_{\text{data}}(A\mid C)$ & Expert action distribution \\
\midrule
\multicolumn{2}{@{}l}{\textit{Variational Inference}} \\
$K$ & Number of posterior samples in IWAE \\
$Z_1, \ldots, Z_K$ & $K$ i.i.d. samples from posterior $q(Z\mid C,A)$ \\
$w_k,w(Z_k)$ & Importance weight $\frac{p(Z_k, A\mid C)}{q(Z_k\mid C,A)}$ \\
$\mathcal{L}_K$ & $K$-sample importance-weighted lower bound \\
$\text{ELBO}_{\text{VAE}}$ & Evidence lower bound for standard VAE \\
$D_{\text{KL}}(\cdot \|  \cdot)$ & Kullback-Leibler divergence \\
\midrule
\multicolumn{2}{@{}l}{\textit{Datasets and Training}} \\
$\mathcal{D}$ & Original dataset: $\{(C^i, A^i)\}_{i=1}^N$ \\
$N$ & Number of demonstrations in dataset \\
$\mathcal{D}_{\text{warm}}$ & Warmstart dataset with diverse reasoning traces \\
$\mathcal{D}_{\text{refined}}$ & Refined dataset with action-predictive reasoning \\
$M$ & Number of warmstarting traces per demonstration \\
$d$ & Reasoning dropout rate \\
$Z^i_j$ & $j$-th synthetic reasoning trace for demonstration $i$ \\
$Z^{i*}$ & Refined, importance-sampled reasoning trace  \\
\midrule
\multicolumn{2}{@{}l}{\textit{Models}} \\
$\mathcal{M}$ & Base vision-language model \\
$\mathcal{M}_{pq}$ & Jointly trained prior-posterior model \\
$\mathcal{M}_{\text{VLA}}$ & Final vision-language-action model \\
$\text{FM}$ & Foundation model for generated reasoning content \\
\midrule
\multicolumn{2}{@{}l}{\textit{Information Benefit}} \\
$\Delta I_R$ & Information benefit of reasoning strategy $R$ \\
$\mathcal{Z}_R$ & Set of reasoning traces containing strategy $R$ \\
$\mathcal{Z}_{\cancel{R}}$ & Set of reasoning traces not containing strategy $R$ \\
$\E_{Z \sim q}[w(Z) \mid Z \in \mathcal{Z}_{\mathbf{R}}]$ & Expected importance weight of reasoning traces in set $\mathcal{Z}_R$ \\
\bottomrule
\end{tabular}
\end{table}

%% file: appendix/theorem.tex
\newpage
\section{Proposition}
\label{sec:prop}

\begin{prop}[Importance Weights Capture Information Gain] Let $A \sim p_{\text{data}}(A\mid C)$ be the observed ground-truth action and $Z$ be a latent reasoning variable. We partition the reasoning space into disjoint sets $\mathcal{Z}_{\mathbf{R}}$ and $\mathcal{Z}_{\cancel{\mathbf{R}}}$.

We assume that we construct our latent reasoning dataset $\mathcal{Z}=\mathcal{Z}_{\mathbf{R}} \cup \mathcal{Z}_{\cancel{\mathbf{R}}}$ so the latents $Z_{\cancel{\mathbf{R}}}$ without reasoning primitive or more generally strategy $\mathbf{R}$ are sampled with constant ``dropout probability" $d$ and each latent $Z_{\mathbf{R}}$ with reasoning $\mathbf{R}$ with constant probability $1-d$, %and then we train both the prior latent generation $p(Z)$ and posterior $q(Z\mid A)$ on the identical distribution over $(\mathcal{Z},A)$ so at convergence $p(\mathcal{Z}_{\mathbf{R}}) = (1-\pi_R) = q(\mathcal{Z}_{\mathbf{R}}|A)$ and likewise $p(\mathcal{Z}_{\cancel{\mathbf{R}}}) = \pi_R = q(\mathcal{Z}_{\cancel{\mathbf{R}}}|A)$ for all $A$.
and then we train both the prior latent generation $p(Z\mid C)$ and posterior $q(Z\mid C,A)$ on the constructed dataset. Crucially, because Alg.~\ref{alg:rbencore_train} applies reasoning dropout independently of the action $A$, the conditional probability of the reasoning partition in the ground-truth data is constant. Therefore, assuming the models $p$ and $q$ converge to the underlying data distribution during training in Alg.~\ref{alg:rbencore_refine}, we have the point-wise equality for all $A$:$$p(\mathcal{Z}_{\mathbf{R}}\mid C) = q(\mathcal{Z}_{\mathbf{R}}\mid C,A) = (1-d),$$
$$p(\mathcal{Z}_{\cancel{\mathbf{R}}}\mid C) = q(\mathcal{Z}_{\cancel{\mathbf{R}}}\mid C,A) = d.$$

We define the information benefit of of a reasoning strategy
$\mathbf{R}$ as how much it reduces the divergence between our model’s action distribution and the expert’s distribution $p_{data}(A\mid C)$:
$$\Delta \mathcal{I}_{\mathbf{R}} \doteq D_{KL}(p_{\text{data}} \| p(A\mid C,\mathcal{Z}_{\cancel{\mathbf{R}}})) - D_{KL}(p_{\text{data}} \| p(A\mid C,\mathcal{Z}_{\mathbf{R}}))$$

Then, the expected log-ratio of the importance weights (defined as $w(Z) \doteq \frac{p(A, Z\mid C)}{q(Z\mid C,A)}$) assigned to reasoning traces $\mathcal{Z}_{\mathbf{R}}$ with the strategy versus reasoning traces $\mathcal{Z}_{\cancel{\mathbf{R}}}$ without the strategy is:
\begin{equation*}
\E_{A \sim p_{\text{data}}}  \log \frac{\E_{Z \sim q}[w(Z) \mid Z \in \mathcal{Z}_{\mathbf{R}}]}{\E_{Z \sim q}[w(Z) \mid Z \in \mathcal{Z}_{\cancel{\mathbf{R}}}]}  = \Delta \mathcal{I}_{\mathbf{R}}.
\end{equation*}
\end{prop}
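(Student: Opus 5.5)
The plan is to compute each conditional expectation of importance weights in closed form, show that it reduces to a conditional action likelihood, and then recognise the log-ratio, averaged over $p_{\text{data}}$, as a difference of cross-entropies. The entropy terms in that difference cancel.

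\textbf{Step 1 (conditional expectation of the weight).} Fix $C$ and $A$ and write $S$ for either $\mathcal{Z}_{\mathbf{R}}$ or $\mathcal{Z}_{\cancel{\mathbf{R}}}$. Conditioning $q(\cdot\mid C,A)$ on the event $Z\in S$ gives the density $q(Z\mid C,A)\mathbf{1}[Z\in S]/q(S\mid C,A)$. Therefore
\begin{equation*}
\E_{Z\sim q}[w(Z)\mid Z\in S]=\frac{1}{q(S\mid C,A)}\int_{S} p(A,Z\mid C)\,dZ=\frac{p(A, Z\in S\mid C)}{q(S\mid C,A)}.
\end{equation*}
The only requirement here is the standard support condition: $q(\cdot\mid C,A)$ must cover the support of $p(\cdot\mid C)$ on $S$. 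This is the same condition that makes the IWAE estimator in Section~\ref{sec:variationalinference} unbiased.

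\textbf{Step 2 (use the matching-marginal assumption).} Factor the numerator as $p(A, Z\in S\mid C)=p(A\mid C,S)\,p(S\mid C)$. Here $p(A\mid C,\mathcal{Z}_{\mathbf{R}})$ denotes the action likelihood averaged over prior traces in $S$, which is exactly the object appearing in $\Delta\mathcal{I}_{\mathbf{R}}$. By the stated convergence assumption, $p(S\mid C)=q(S\mid C,A)$ for every $A$, with common value $1-d$ or $d$. The ratio $p(S\mid C)/q(S\mid C,A)$ is therefore $1$, and
\begin{equation*}
\frac{\E_{Z\sim q}[w(Z)\mid Z\in\mathcal{Z}_{\mathbf{R}}]}{\E_{Z\sim q}[w(Z)\mid Z\in\mathcal{Z}_{\cancel{\mathbf{R}}}]}=\frac{p(A\mid C,\mathcal{Z}_{\mathbf{R}})}{p(A\mid C,\mathcal{Z}_{\cancel{\mathbf{R}}})}.
\end{equation*}
This is the crucial step. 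Dropout is applied independently of $A$, so the posterior cannot tilt the partition masses. All of the action dependence is then carried by the prior's conditional likelihoods.

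\textbf{Step 3 (take expectation over $A$).} Take the logarithm and the expectation over $A\sim p_{\text{data}}(\cdot\mid C)$. This gives $\E[\log p(A\mid C,\mathcal{Z}_{\mathbf{R}})]-\E[\log p(A\mid C,\mathcal{Z}_{\cancel{\mathbf{R}}})]$. Now add and subtract $\E_{p_{\text{data}}}[\log p_{\text{data}}(A\mid C)]$. Each term becomes a negative KL divergence, and the entropy of $p_{\text{data}}$ cancels, leaving
\begin{equation*}
D_{KL}(p_{\text{data}}\|p(A\mid C,\mathcal{Z}_{\cancel{\mathbf{R}}}))-D_{KL}(p_{\text{data}}\|p(A\mid C,\mathcal{Z}_{\mathbf{R}}))=\Delta\mathcal{I}_{\mathbf{R}}.
\end{equation*}
If one also wants to average over $C$, the same identity holds pointwise in $C$ and then integrates.

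\textbf{Main obstacle.} The delicate point is Step 2. We must be precise about what $p(A\mid C,\mathcal{Z}_{\mathbf{R}})$ means: it is the prior mixture $\int_{\mathcal{Z}_{\mathbf{R}}}p(A\mid C,Z)p(Z\mid C)\,dZ/p(\mathcal{Z}_{\mathbf{R}}\mid C)$. We must also justify that the learned $q$ places exactly mass $1-d$ on $\mathcal{Z}_{\mathbf{R}}$ for every action, which is the idealized-convergence assumption. Without that assumption there is an extra term $\E_{A}\log\frac{q(\mathcal{Z}_{\cancel{\mathbf{R}}}\mid C,A)\,(1-d)}{q(\mathcal{Z}_{\mathbf{R}}\mid C,A)\,d}$. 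I would note this term explicitly as the error that vanishes under the assumption.
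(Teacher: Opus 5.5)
Your proposal is correct and follows the paper's proof essentially step for step. The paper also reduces each conditional expected weight to $p(A\mid C,\mathcal{Z}_{\mathbf{R}})\,p(\mathcal{Z}_{\mathbf{R}}\mid C)/q(\mathcal{Z}_{\mathbf{R}}\mid C,A)$, cancels the prior/posterior partition masses via the matching-marginal assumption, and converts the expected log likelihood ratio into a difference of KL divergences in which the entropy of $p_{\text{data}}$ cancels; your explicit support condition and your correction term for when that assumption fails are sound refinements the paper leaves implicit.
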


%\vspace{-10pt}

\begin{mdframed}[style=ThmFrame]

\begin{proof} The expected importance weight for reasoning traces from the partition $\mathcal{Z}_{\mathbf{R}}$ is:
\begin{align*}
W_{\mathbf{R}} &\doteq \E_{Z \sim q}[w(Z) \mid Z \in \mathcal{Z}_{\mathbf{R}}] \\
&= \int_{\mathcal{Z}_{\mathbf{R}}} \frac{p(A \mid C,Z)p(Z\mid C)}{q(Z\mid C,A)} q(Z \mid C,A, \mathcal{Z}_{\mathbf{R}} ) dZ
\end{align*}
Using identity $q(Z\mid C, A) = q(Z \mid C, A,\mathcal{Z}_{\mathbf{R}}) q(\mathcal{Z}_{\mathbf{R}}\mid C,A)$ for $Z \in \mathcal{Z}_{\mathbf{R}}$:
\begin{align*}
W_{\mathbf{R}} & = \int_{\mathcal{Z}_{\mathbf{R}}} \frac{p(A \mid C, Z)p(Z\mid C)}{q(Z \mid C,A,\mathcal{Z}_{\mathbf{R}}) q(\mathcal{Z}_{\mathbf{R}}\mid C,A)} q(Z \mid C,A,\mathcal{Z}_{\mathbf{R}}) dZ \\
&= \frac{1}{q(\mathcal{Z}_{\mathbf{R}}\mid C,A)} \int_{\mathcal{Z}_{\mathbf{R}}} p(A \mid C,Z) p(Z\mid C) dZ
\end{align*}
This integral is the joint probability of the data and the partition, $p(A, \mathcal{Z}_{\mathbf{R}}\mid C)$.
$$
W_{\mathbf{R}} = \frac{p(A, \mathcal{Z}_{\mathbf{R}}\mid C)}{q(\mathcal{Z}_{\mathbf{R}}\mid C,A)} = \frac{p(A \mid C,\mathcal{Z}_{\mathbf{R}}) p(\mathcal{Z}_{\mathbf{R}}\mid C)}{q(\mathcal{Z}_{\mathbf{R}}\mid C,A)}
$$
Similarly for the $\mathcal{Z}_{\cancel{\mathbf{R}}}$ partition:
$$
W_{\cancel{\mathbf{R}}} = \frac{p(A \mid C,\mathcal{Z}_{\cancel{\mathbf{R}}}) p(\mathcal{Z}_{\cancel{\mathbf{R}}}\mid C)}{q(\mathcal{Z}_{\cancel{\mathbf{R}}}\mid C,A)}
$$

Now we look at the ratio $W_{\mathbf{R}} / W_{\cancel{\mathbf{R}}}$. This allows us to separate the Likelihood term (predictive power) from the Prior/Posterior reasoning distribution.
$$
\frac{W_{\mathbf{R}}}{W_{\cancel{\mathbf{R}}}} = \underbrace{\frac{p(A \mid C,\mathcal{Z}_{\mathbf{R}})}{p(A \mid C,\mathcal{Z}_{\cancel{\mathbf{R}}})}}_{\text{Likelihood Ratio}} \times \underbrace{\left( \frac{p(\mathcal{Z}_{\mathbf{R}}\mid C) / q(\mathcal{Z}_{\mathbf{R}}\mid C,A)}{p(\mathcal{Z}_{\cancel{\mathbf{R}}}\mid C) / q(\mathcal{Z}_{\cancel{\mathbf{R}}}\mid C,A)} \right)}_{\text{$=1$ (from assumption)}}
$$

Taking the log of the ratio and expectation over $A \sim p_{\text{data}}$, and recalling the definition of cross-entropy and KL $\E_{p}[\log q] = -H(p) - D_{KL}(p \| q)$, where $H(\cdot)$ measures entropy of a distribution:

\begin{align*}
\E_{A \sim p_{\text{data}}}  \log \frac{\E_{Z \sim q}[w(Z) \mid Z \in \mathcal{Z}_{\mathbf{R}}]}{\E_{Z \sim q}[w(Z) \mid Z \in \mathcal{Z}_{\cancel{\mathbf{R}}}]} &= \E_{A \sim p_{\text{data}}} \left[ \log \frac{W_{\mathbf{R}}}{W_{\cancel{\mathbf{R}}}} \right]\\
&= \E_{A \sim p_{\text{data}}} \left[ \log \frac{p(A \mid C,\mathcal{Z}_{\mathbf{R}})}{p(A \mid C,\mathcal{Z}_{\cancel{\mathbf{R}}})} \right]  \\
&= \E_{A \sim p_{\text{data}}} [\log p(A \mid C,\mathcal{Z}_{\mathbf{R}})] - \E_{A \sim p_{\text{data}}} [\log p(A \mid C,\mathcal{Z}_{\cancel{\mathbf{R}}})]  \\
&= \left( -H(p_{\text{data}})-D_{KL}(p_{\text{data}} \| p(A \mid C,\mathcal{Z}_{\mathbf{R}})) \right) \\
&\text{ }\text{ }\text{ }\text{ }- \left( -H(p_{\text{data}})-D_{KL}(p_{\text{data}} \| p(A \mid C,\mathcal{Z}_{\cancel{\mathbf{R}}})) \right)  \\
&= D_{KL}(p_{\text{data}} \| p(A \mid C,\mathcal{Z}_{\cancel{\mathbf{R}}})) - D_{KL}(p_{\text{data}} \| p(A \mid C,\mathcal{Z}_{\mathbf{R}}))\\
&= \Delta \mathcal{I}_{\mathbf{R}}
\end{align*}

\end{proof}

\end{mdframed}

\subsection{Empirical Support for Assumption}

In the proposition, one assumption we make is that the probability of a reasoning traces with strategy $\mathbf{R}$ from the trained prior $p(\mathcal{Z}_{\mathbf{R}}\mid C)$ and posterior $ q(\mathcal{Z}_{\mathbf{R}}\mid C,A)$ is $(1-d)$, which is 1 minus the dropout rate. We make this assumption because of how we train in Alg.~\ref{alg:rbencore_train} the prior and posterior models: specifically they both are trained with the same underlying distribution of reasoning data where the primitives are dropped out with probability $d$. We empirically validate this assumption by investigating the reasoning primitive frequencies from the raw prior and posterior distributions in Fig.~\ref{fig:raw_posterior_prior} for the LIBERO-90 reasoning setting where the dropout rate $d=0.2$. Overall we observe that the reasoning primitive distributions are quite close to $1-d=0.8$.

\input{floats/figures/raw_prior_posterior}

%% file: floats/figures/raw_prior_posterior.tex
\begin{figure}[h]
    \centering
    \includegraphics[width=\linewidth]{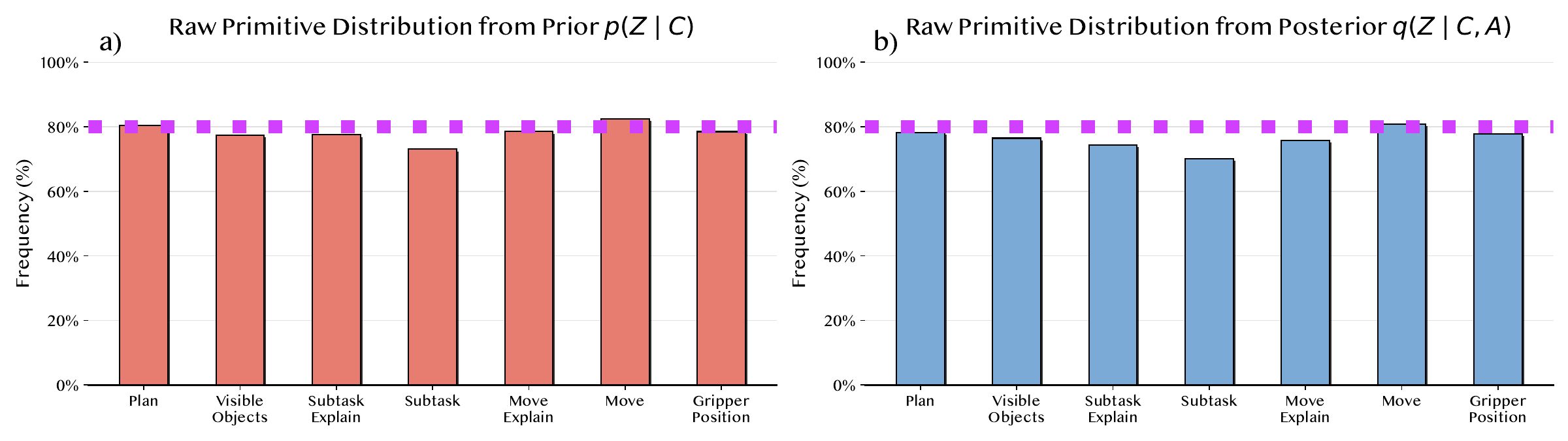}
    \caption{\small Reasoning primitive distributions from the \textbf{raw poseterior and prior} distributions. Note this is before the reweighting and importance sampling step (minor gaps due to warmstarting sampling noise and potential base model prior bias).}
    \label{fig:raw_posterior_prior}
\end{figure}

%% file: appendix/alg_details.tex
\iffalse

\section{Details on Algorithm}

pseudo code for implementing algorithm

\fi

\input{appendix/SIR_c}

%% file: appendix/SIR_c.tex
\section{Importance-Weighted Variational Inference with Categorical Resampling}
\label{sec:iwae_theory}

In the main text, we introduced the Importance Weighted Autoencoder (IWAE) framework~\cite{burda16importance}, which provides a tighter lower bound on the log-evidence through multiple importance-weighted samples. Our algorithm \algname employs an improved variant of this approach that uses categorical resampling from importance weights, following the theoretical developments in \citet{cremer2017reinterpreting}. This section provides the theoretical justification for this approach and relates it to our method based on sampling-importance-resampling (SIR).

\subsection{From IWAE to SIR IWAE}

Recall from Section~\ref{sec:variationalinference} that the $K$-sample IWAE bound is defined as:
\begin{equation}
    \mathcal{L}_K \doteq \mathbb{E}_{Z_1,\ldots,Z_K \sim q(Z|A)} \left[ \log \left( \frac{1}{K} \sum_{k=1}^K w_k \right) \right],
\end{equation}
where $w_k = \frac{p(Z_k, A)}{q(Z_k | A)}$ are the importance weights.

A closer look at this bound shows we can apply sampling-importance-resampling (SIR)~\cite{bishop2006pattern}. Consider the following procedure:
\begin{enumerate}
    \item Sample $K$ candidates $\{Z_k\}_{k=1}^K$ from the proposal distribution $q(Z|A)$
    \item Compute importance weights $w_k$ for each candidate
    \item Resample a single latent $Z^*$ from a categorical distribution proportional to the weights: $Z^* \sim \text{Cat}(\{w_k\}_{k=1}^K)$
\end{enumerate}

This resampling procedure induces an improved proposal distribution $\tilde{q}(Z|A)$ that is defined implicitly through the SIR process. Critically, \citet{cremer2017reinterpreting} showed that this categorical resampling can be understood as sampling from a distribution that achieves a tighter IWAE bound:

\begin{prop}[Categorical Resampling Bound Adapted from~\cite{cremer2017reinterpreting}]
\label{prop:categorical_equivalence}
Let $\tilde{q}(Z\mid A)$ denote the distribution induced by the categorical resampling procedure described above. Then, as $K \to \infty$, we have that $\tilde{q}(Z\mid A) \to p(Z\mid A)$, the true posterior. Furthermore, for the importance weighted lower bound:
\begin{equation*}
    \tilde{\mathcal{L}}_K \doteq \mathbb{E}_{Z \sim \tilde{q}(Z\mid A)} \left[ \log \frac{p(Z, A)}{\tilde{q}(Z\mid A)} \right],
\end{equation*}
and for $K\geq1$, the bounds satisfy $\log p(A) \geq\tilde{\mathcal{L}}_{K} \geq \mathcal{L}_K \geq \mathcal{L}_1=\text{ELBO}_{\text{VAE}}$.
\end{prop}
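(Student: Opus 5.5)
The plan is to write the induced density $\tilde q$ explicitly and then get each claim from it. The SIR density is
\[
\tilde q(Z\mid A)=\E_{Z_{2:K}\sim q}\!\left[\frac{p(Z,A)}{\frac1K\sum_{k=1}^K w_k}\right],\qquad Z_1=Z .
\]
To see this, note that by exchangeability the probability that the selected $Z^*$ lands in a set $B$ equals $K$ times the probability that index $1$ is chosen with $Z_1\in B$. This gives $q(Z_1)\,\E[w_1/\sum_k w_k]$. Cancelling $q(Z_1)$ against $w_1=p(Z_1,A)/q(Z_1\mid A)$ yields the expression above.

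\textbf{Convergence.} For fixed $Z$, the strong law of large numbers gives $\frac1K\sum_{k\ge2}w_k\to p(A)$ almost surely, using the unbiasedness computation in Section~\ref{sec:variationalinference}. The single term $w_1/K$ vanishes. Hence the integrand tends to $p(Z,A)/p(A)=p(Z\mid A)$. To pass the limit inside the expectation I would invoke dominated convergence, under a boundedness assumption on the weights (natural here, since the reasoning space is a finite set of token sequences). Alternatively, Fatou's lemma gives $\liminf\tilde q\ge p(Z\mid A)$ pointwise, and since both sides integrate to one, Scheffé's lemma gives convergence in total variation.

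\textbf{The bound $\log p(A)\ge\tilde{\mathcal L}_K$.} This is immediate, because $\tilde{\mathcal L}_K$ is an ordinary ELBO with proposal $\tilde q$. It equals $\log p(A)-D_{KL}(\tilde q\,\|\,p(Z\mid A))$.

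\textbf{The bound $\tilde{\mathcal L}_K\ge\mathcal L_K$.} This is the main obstacle. I would follow \citet{cremer2017reinterpreting} and introduce the augmented joint over $(Z_{1:K},\text{index})$. Define the target $\pi(Z_{1:K})=p(Z_1\mid A)\prod_{k\ge2}q(Z_k\mid A)$ up to symmetrization, and use as proposal the SIR sampling distribution $\prod_k q(Z_k)\,\cdot\, w_j/\sum_k w_k$.

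\begin{itemize}
\item First, the log ratio of $\pi$ to this proposal equals $\log\frac1K\sum_k w_k-\log p(A)$. Therefore $\mathcal L_K=\log p(A)-D_{KL}(\text{augmented proposal}\,\|\,\pi)$.
\item Second, marginalizing both distributions onto the selected sample gives $\tilde q$ and $p(Z\mid A)$ respectively.
\item Third, the data-processing inequality for KL under marginalization gives $D_{KL}(\tilde q\,\|\,p(Z\mid A))\le D_{KL}(\text{aug.}\,\|\,\pi)$, which is exactly $\tilde{\mathcal L}_K\ge\mathcal L_K$.
\end{itemize}

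The delicate step is verifying that both augmented distributions have the stated marginals on the selected index. This requires symmetrizing the target over which slot holds the $p(\cdot\mid A)$ sample. I expect this bookkeeping to be where care is needed.

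\textbf{Remaining inequalities.} The chain $\mathcal L_K\ge\mathcal L_1=\text{ELBO}_{\text{VAE}}$ is the monotonicity result of \cite{burda16importance} already cited in Section~\ref{sec:variationalinference}. The equality $\mathcal L_1=\text{ELBO}_{\text{VAE}}$ holds by definition.
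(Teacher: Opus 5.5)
Your proof is correct. It also supplies more than the paper does: the paper gives no argument for this proposition at all and defers entirely to \citet{cremer2017reinterpreting}.

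Your closed form for $\tilde q$ is Cremer et al.'s implicit importance-weighted density. There is one small slip in deriving it: the exchangeability step gives $K\,q(Z_1)\,\E[w_1/\sum_k w_k]$, and it is that factor $K$ which produces $\frac1K\sum_k w_k$ in the denominator. Your convergence argument is their heuristic ``the denominator tends to $p(A)$'' made rigorous. Of your two routes there, use the Fatou one:
\begin{itemize}
\item Fatou gives $\liminf_K\tilde q_K\ge p(\cdot\mid A)$ pointwise.
\item Hence $(p(\cdot\mid A)-\tilde q_K)^+\to 0$, dominated by $p(\cdot\mid A)$.
\item Equal total mass then gives $\int|p(\cdot\mid A)-\tilde q_K|=2\int(p(\cdot\mid A)-\tilde q_K)^+\to 0$.
\end{itemize}
The dominated-convergence route would need the weights bounded \emph{below} away from zero, not above, because the integrand is $p(Z,A)/\bigl(\frac1K\sum_k w_k\bigr)$.

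For the key inequality $\tilde{\mathcal L}_K\ge\mathcal L_K$, your augmented-space, data-processing argument is essentially the one in Domke and Sheldon's analysis of importance-weighted VI. It is a self-contained route. The bookkeeping you flagged works out once the target carries the index explicitly: $P(Z_{1:K},j)=\frac1K\,p(Z_j\mid A)\prod_{k\ne j}q(Z_k\mid A)$. Then:
\begin{itemize}
\item $P/Q=\bigl(\frac1K\sum_k w_k\bigr)/p(A)$ independently of $j$, so $D_{KL}(Q\,\|\,P)=\log p(A)-\mathcal L_K$.
\item The law of $Z_J$ is $\tilde q$ under $Q$ and $p(\cdot\mid A)$ under $P$.
\item Data processing gives $D_{KL}(\tilde q\,\|\,p(\cdot\mid A))\le\log p(A)-\mathcal L_K$.
\end{itemize}
This step, and the unbiasedness $\E_q[w]=p(A)$ that drives the convergence, both need $q(\cdot\mid A)$ to cover the support of $p(\cdot\mid A)$. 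You should state that as a hypothesis.

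What your route buys over the paper's bare citation is an explicit, checkable proof of the one nontrivial link in the chain. The remaining links are the same as what the paper relies on: $\log p(A)\ge\tilde{\mathcal L}_K$ as an ordinary ELBO, and $\mathcal L_K\ge\mathcal L_1$ from \cite{burda16importance}.
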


This interpretation reveals that the IWAE bound can be viewed as training with an improved proposal distribution $\tilde{q}(Z|A)$ that is obtained through the resampling procedure. Importantly, this improved proposal $\tilde q(Z\mid A)$ is provably closer to the true posterior $p(Z\mid A)$ than the original $q(Z\mid A)$.

\subsection{Practical Advantages of SIR IWAE} 

The categorical resampling interpretation provides several practical advantages for our embodied reasoning application:

\noindent\textbf{Policy Improvement Guarantee} The resampling procedure acts as a non-parametric policy improvement operator. Each resampled trace $Z^*$ is theoretically guaranteed in expectation to come from a distribution that provides a theoretically tighter bound on $\log p(A)$ than the original proposal $q(Z|A)$.

\noindent \textbf{Interpretable Refined Training Data} Rather than computing expectations over the continuous distribution defined by the log-sum-exp of importance weights, we obtain samples from the improved posterior estimate $\tilde{q}(Z|A)$. This is important for our setting where we generate concrete reasoning traces $Z^*$ to refining the original reasoning distribution and augment the training data. These reasoning traces can provide better interpretability and insight into what action-predictive embodied reasoning is.

\noindent\textbf{Computational Efficiency} For synthetic data generation at scale, generating $K$ samples and resampling once is more efficient than computing gradients through the log-sum-exp operation required for standard IWAE training.

\noindent\textbf{Data Augmentation Compatibility} The discrete resampling naturally produces a refined dataset where each demonstration $(C, A)$ is paired with a single high-quality reasoning trace $Z^*$, making it directly compatible with standard reasoning VLA model training pipelines.

\subsection{Adapting SIR IWAE in \algname}

In our embodied reasoning setting, we apply this categorical resampling framework as follows:

\subsubsection{The Surrogate Posterior as Proposal Distribution}

During the Warmstarting training in Alg.~\ref{alg:rbencore_train}, we have trained a model $\mathcal{M}_{pq}$ that parameterizes both:
\begin{itemize}
    \item The joint distribution: $p(Z, A | C)$
    \item The approximate posterior: $q(Z | C, A)$
\end{itemize}

For each demonstration $(C^i, A^i)$ in our dataset, we use $q(\cdot | C^i, A^i)$ as the proposal distribution to generate $K$ candidate reasoning traces $\{Z^i_k\}_{k=1}^K$ (Alg.~\ref{alg:rbencore_train} Line 3).

\subsubsection{Importance Weight Computation}

For each candidate $Z^i_k$, we compute the importance weight (Alg.~\ref{alg:rbencore_train} Line 4-6):
\begin{equation}
    w(Z^i_k) = \frac{p(Z^i_k, A^i | C^i)}{q(Z^i_k | C^i, A^i)}.
\end{equation}

These weights can be computed efficiently using the model's log-probabilities for the respective conditioning patterns.

\subsubsection{Categorical Resampling for Data Refinement}

We then resample a single trace $Z_i^*$ from the categorical distribution (Alg.~\ref{alg:rbencore_train} Line 7):
\begin{equation}
    Z^{i*} \sim \text{Cat}\left(\left\{\frac{w(Z^i_k)}{\sum_{k'=1}^K w(Z^i_{k'})}\right\}_{k=1}^K\right).
\end{equation}

By Proposition~\ref{prop:categorical_equivalence}, this trace $Z^{i*}$ is sampled from the improved distribution $\tilde{q}(Z | C^i, A^i)$ that provides a tighter bound on the true posterior than the original $q(Z | C^i, A^i)$.

\subsection{Contextualizing Information Benefit in SIR IWAE}

Our algorithm leverages the sampling-importance-resampling (SIR) procedure to provide a rigorous framework for quantifying the value of reasoning in terms of its information benefit for expert action prediction. To understand this connection, we relate the improved posterior distribution $\tilde{q}(Z|A)$ induced by SIR IWAE to the specific reasoning strategies selected during resampling. We demonstrate that this categorical resampling step naturally shifts the distribution of reasoning strategies by exactly the amount they improve action prediction; specifically, the model's ``Refined Preference" for a strategy mathematically decomposes into its ``Information Benefit" plus its initial ``Warmstart Preference." Consequently, reasoning strategies that reduce divergence from the expert action distribution are automatically amplified, while distracting strategies are suppressed, allowing the model to autonomously filter for action-predictive reasoning without external supervision.

Recall our definition of information benefit for a reasoning strategy $\mathbf{R}$:
\begin{equation*}
    \Delta \mathcal{I}_{\mathbf{R}} \doteq D_{\text{KL}}(p_{\text{data}} \| p(A|C, \mathcal{Z}_{\cancel{\mathbf{R}}})) - D_{\text{KL}}(p_{\text{data}} \| p(A|C, \mathcal{Z}_{\mathbf{R}})),
\end{equation*}
where $\mathcal{Z}_{\mathbf{R}}$ denotes the set of reasoning traces that include strategy $\mathbf{R}$, and $\mathcal{Z}_{\cancel{\mathbf{R}}}$ denotes traces that exclude it. A positive $\Delta \mathcal{I}_{\mathbf{R}}$ indicates that reasoning with strategy $\mathbf{R}$ brings the model's action distribution closer to the expert's, while a negative value indicates that the strategy is distracting.

\subsubsection{How SIR Reweights Reasoning Strategies}

The improved posterior $\tilde{q}(Z|A)$ induced by categorical resampling does not uniformly upweight all reasoning traces. Instead, it selectively amplifies traces based on their importance weights $w(Z) = \frac{p(Z,A|C)}{q(Z|C,A)}$, which measure how well each trace aligns with both the model's generative distribution and the expert action.

To formalize this, consider partitioning the reasoning space into $\mathcal{Z}_{\mathbf{R}}$ and $\mathcal{Z}_{\cancel{\mathbf{R}}}$. When we draw $K$ candidate samples $\{Z_k\}_{k=1}^K \sim q(Z|C,A)$ and resample according to normalized importance weights, the probability that the resampled trace $Z^* \sim \tilde{q}(Z|C,A)$ belongs to $\mathcal{Z}_{\mathbf{R}}$ is:
\begin{align}
    \tilde{q}(\mathcal{Z}_{\mathbf{R}} \mid C,A) &= \E_{Z_1,\ldots,Z_K \sim q(Z|C,A)} \left[ \sum_{k=1}^K \frac{w(Z_k)}{\sum_{k'=1}^K w(Z_{k'})} \cdot \mathbf{1}_{Z_k \in \mathcal{Z}_{\mathbf{R}}} \right]\nonumber \\
    &= \E_{Z_1,\ldots,Z_K \sim q(Z|C,A)} \left[  \frac{\frac{1}{K}\sum_{k=1}^Kw(Z_k)\cdot\mathbf{1}_{Z_k \in \mathcal{Z}_{\mathbf{R}}}}{\frac{1}{K}\sum_{k'=1}^K w(Z_{k'})}  \right].
\end{align}

This expression captures the stochastic resampling process: each candidate $Z_k$ is selected with probability proportional to its importance weight $w(Z_k)$, so this is the probability that the selected candidate belongs to $\mathcal{Z}_{\mathbf{R}}$.

\subsubsection{The Large-$K$ Limit: Relating SIR to Expected Importance Weights}

To understand how SIR affects strategy selection, we analyze the large-$K$ limit where the law of large numbers provides intuition. As $K \to \infty$, the sample averages converge to expectations:
\begin{equation}
    \frac{1}{K}\sum_{k=1}^K w(Z_k) \cdot \mathbf{1}_{Z_k \in \mathcal{Z}_{\mathbf{R}}} \xrightarrow{K \to \infty} \E_{Z \sim q}[w(Z) \cdot \mathbf{1}_{Z \in \mathcal{Z}_{\mathbf{R}}}].
\end{equation}

We can decompose this expectation by conditioning on the set membership:
\begin{align}
    \E_{Z \sim q}[w(Z) \cdot \mathbf{1}_{Z \in \mathcal{Z}_{\mathbf{R}}}]&= \int_{\mathcal{Z}_{\mathbf{R}}} w(z) \, q(z|C,A) \, dz \nonumber\\
    &= \underbrace{\left(\int_{\mathcal{Z}_{\mathbf{R}}} q(z|C,A) \, dz\right)}_{q(\mathcal{Z}_{\mathbf{R}}|C,A)} \cdot \underbrace{\frac{\int_{\mathcal{Z}_{\mathbf{R}}} w(z) \, q(z\mid C,A) \, dz}{\int_{\mathcal{Z}_{\mathbf{R}}} q(z|C,A) \, dz}}_{\E_{Z \sim q}[w(Z) | Z \in \mathcal{Z}_{\mathbf{R}}]} \nonumber \\
    &= q(\mathcal{Z}_{\mathbf{R}}|C,A) \cdot \E_{Z \sim q}[w(Z) | Z \in \mathcal{Z}_{\mathbf{R}}].
\end{align}

Similarly, the denominator converges:
\begin{equation}
    \frac{1}{K}\sum_{k'=1}^K w(Z_{k'}) \xrightarrow{K \to \infty} \E_{Z \sim q}[w(Z)].
\end{equation}

Therefore, in the large-$K$ limit, the improved posterior assigns the following probability to $\mathcal{Z}_{\mathbf{R}}$:
\begin{equation}
    \tilde{q}(\mathcal{Z}_{\mathbf{R}} | C,A) = \frac{q(\mathcal{Z}_{\mathbf{R}}|C,A) \cdot \E_{Z \sim q}[w(Z) | Z \in \mathcal{Z}_{\mathbf{R}}]}{\E_{Z \sim q}[w(Z)]}.
\end{equation}

This demonstrates the reweighting mechanism in the new categorical importance sampled distribution: the improved posterior $\tilde{q}$ modifies the original probability $q(\mathcal{Z}_{\mathbf{R}}|C,A)$ by a factor equal to the ratio of the conditional expected importance weight (within $\mathcal{Z}_{\mathbf{R}}$) to the overall expected importance weight.

\subsubsection{Quantifying Strategy Preference Through Probability Ratios}

To understand how SIR shifts preference between reasoning strategies, we examine the ratio of probabilities assigned to $\mathcal{Z}_{\mathbf{R}}$ versus $\mathcal{Z}_{\cancel{\mathbf{R}}}$. For the improved posterior:
\begin{align}
    \frac{\tilde{q}(\mathcal{Z}_{\mathbf{R}} | C,A)}{\tilde{q}(\mathcal{Z}_{\cancel{\mathbf{R}}} | C,A)} 
    &= \frac{\E_{Z \sim q}[w(Z) | Z \in \mathcal{Z}_{\mathbf{R}}]}{\E_{Z \sim q}[w(Z) | Z \in \mathcal{Z}_{\cancel{\mathbf{R}}}]} \cdot \frac{q(\mathcal{Z}_{\mathbf{R}} | C,A)}{q(\mathcal{Z}_{\cancel{\mathbf{R}}} | C,A)} \label{eq:ratio_decomposition}
\end{align}

We consider two components:
\begin{itemize}
    \item \textbf{Warmstart Preference for R:} $\log \frac{q(\mathcal{Z}_{\mathbf{R}} | C,A)}{q(\mathcal{Z}_{\cancel{\mathbf{R}}} | C,A)}$ represents the (log) prior preference for strategy $\mathbf{R}$ in the original proposal distribution $q$, which is induced by our warmstart training with reasoning dropout rate $d$.
    
    \item \textbf{Importance Weight Ratio:} The term $\frac{\E_{Z \sim q}[w(Z) | Z \in \mathcal{Z}_{\mathbf{R}}]}{\E_{Z \sim q}[w(Z) | Z \in \mathcal{Z}_{\cancel{\mathbf{R}}}]}$ quantifies how much more (or less) the model values traces with strategy $\mathbf{R}$ in terms of their alignment with the joint distribution $p(Z,A|C)$.
\end{itemize}

From our warmstart training procedure (Alg.~\ref{alg:rbencore_train}), we construct the latent dataset with reasoning dropout such that each strategy $\mathbf{R}$ is independently included with probability $1-d$ and excluded with probability $d$. So, as in the assumptions from Section~\ref{sec:prop}, this induces:
\begin{equation}
    \frac{q(\mathcal{Z}_{\mathbf{R}} | C,A)}{q(\mathcal{Z}_{\cancel{\mathbf{R}}} | C,A)} = \frac{1-d}{d}.
    \label{eq:warmstart_ratio}
\end{equation}

Substituting Equation~\eqref{eq:warmstart_ratio} into Equation~\eqref{eq:ratio_decomposition}, we obtain:
\begin{equation}
    \frac{\tilde{q}(\mathcal{Z}_{\mathbf{R}} | C,A)}{\tilde{q}(\mathcal{Z}_{\cancel{\mathbf{R}}} | C,A)} = \frac{\E_{Z \sim q}[w(Z) | Z \in \mathcal{Z}_{\mathbf{R}}]}{\E_{Z \sim q}[w(Z) | Z \in \mathcal{Z}_{\cancel{\mathbf{R}}}]} \cdot \frac{1-d}{d}.
\end{equation}

We define the \textbf{Refined Preference for $\mathbf{R}$} as the log-ratio of probabilities of reasoning with strategy $\mathbf{R}$ and without strategy $\mathbf{R}$ in the improved posterior, averaged over the expert action distribution:
\begin{equation*}
    \textbf{Refined Preference for } \mathbf{R} := \E_{A\sim p_{\text{data}}} \left[ \log  \frac{\tilde{q}(\mathcal{Z}_{\mathbf{R}} | C,A)}{\tilde{q}(\mathcal{Z}_{\cancel{\mathbf{R}}} | C,A)} \right].
\end{equation*}
Note that the improved posterior distrution $\tilde q(\cdot\mid C,A)$ is the source for the refined reasoning training data for the VLA model in Alg.~\ref{alg:rbencore_refine}.

Expanding this expectation:
\begin{align}
    \E_{A\sim p_{\text{data}}} \left[ \log  \frac{\tilde{q}(\mathcal{Z}_{\mathbf{R}} | C,A)}{\tilde{q}(\mathcal{Z}_{\cancel{\mathbf{R}}} | C,A)} \right]  &= \E_{A\sim p_{\text{data}}} \left[ \log \frac{\E_{Z \sim q}[w(Z) | Z \in \mathcal{Z}_{\mathbf{R}}]}{\E_{Z \sim q}[w(Z) | Z \in \mathcal{Z}_{\cancel{\mathbf{R}}}]} + \log \frac{1-d}{d} \right] \nonumber \\
    &= \underbrace{\E_{A\sim p_{\text{data}}} \left[ \log \frac{\E_{Z \sim q}[w(Z) | Z \in \mathcal{Z}_{\mathbf{R}}]}{\E_{Z \sim q}[w(Z) | Z \in \mathcal{Z}_{\cancel{\mathbf{R}}}]} \right]}_{\Delta \mathcal{I}_{\mathbf{R}} \text{ (by Proposition of Sec~\ref{prop:info_benefit})}} + \underbrace{\log \frac{1-d}{d}}_{\text{Warmstart Preference}} \nonumber \\
    &= \Delta \mathcal{I}_{\mathbf{R}} + \log \frac{1-d}{d}. \label{eq:refined_preference}
\end{align}

\subsubsection{Information Benefit Drives Strategy Selection}

Equation~\eqref{eq:refined_preference} reveals the fundamental insight that the refined preference for a reasoning strategy $\mathbf{R}$, which characterizes the new refined reasoning training dataset, decomposes into two terms:
\begin{equation*}
   \textbf{Refined Preference for } \mathbf{R} =\textbf{Information Benefit of } \mathbf{R} + \textbf{Warmstart Preference for } \mathbf{R}
\end{equation*}

This decomposition has a clear interpretation:
\begin{itemize}
    \item The \textbf{Warmstart Preference} ($\log \frac{1-d}{d}$) is a constant baseline that reflects the prior probability of including strategy $\mathbf{R}$ during warmstart training. 
    
    \item The \textbf{Information Benefit} ($\Delta \mathcal{I}_{\mathbf{R}}$) represents the additional preference shift induced by categorical resampling based on importance weights. This term is strategy-specific and captures how much the strategy improves action prediction.
\end{itemize}

This demonstrates that the \algname's SIR IWAE procedure systematically adjusts the distribution of reasoning strategies based on their utility for action prediction:

\paragraph{Positive Information Benefit ($\Delta \mathcal{I}_{\mathbf{R}} > 0$)} When strategy $\mathbf{R}$ improves action prediction, traces in $\mathcal{Z}_{\mathbf{R}}$ receive higher importance weights on average than traces in $\mathcal{Z}_{\cancel{\mathbf{R}}}$. Categorical resampling amplifies the probability of selecting traces with $\mathbf{R}$, increasing its representation in the refined dataset beyond the warmstart baseline.

\paragraph{Negative Information Benefit ($\Delta \mathcal{I}_{\mathbf{R}} < 0$)} When strategy $\mathbf{R}$ distracts from action prediction, traces in $\mathcal{Z}_{\mathbf{R}}$ receive lower importance weights on average. Categorical resampling suppresses the probability of selecting traces with $\mathbf{R}$, decreasing its representation below the warmstart baseline.

\paragraph{Zero Information Benefit ($\Delta \mathcal{I}_{\mathbf{R}} = 0$)} When strategy $\mathbf{R}$ provides no information about actions, the importance weight ratio is unity (in expectation), and the refined preference equals the warmstart preference. The strategy's representation remains unchanged.

\subsubsection{Relating to \algname}
By employing SIR IWAE to generate refined reasoning traces, \algname implements a principled, self-supervised filtering process that:
\begin{enumerate}
    \item Starts with diverse reasoning strategies sampled uniformly via reasoning dropout (warmstart)
    \item Measures each strategy's alignment with expert actions through importance weights
    \item Automatically concentrates probability mass on action-predictive strategies through categorical resampling
    \item Produces a refined training dataset where strategy prevalence correlates with information benefit
\end{enumerate}

Importantly, this entire process requires no external rewards, verifiers, or heuristics---only the generative probabilities of the model itself. The improved posterior $\tilde{q}(Z|C,A)$ discovers embodiment-specific reasoning distributions (Fig.~\ref{fig:mainresult}) by maximizing information benefit for action prediction. Through this mechanism, \algname can identify distinct reasoning strategies across manipulation, legged navigation, and autonomous driving domains, with each embodiment benefiting from a unique distribution of reasoning primitives tailored to its specific embodiment and task challenges.

\subsection{Scaling with Inference Compute}

A key advantage of the categorical resampling approach is that performance improves monotonically with the number of samples $K$. From the IWAE literature~\cite{burda16importance, cremer2017reinterpreting}, we know that:
\begin{equation}
    \tilde{\mathcal{L}}_{K+1} \geq \tilde{\mathcal{L}}_K ,
\end{equation}
for $K\geq 1$, with the bound approaching $\log p(A)$ as $K \to \infty$ when $\tilde{q}(Z|A)$ converges to $p(Z|A)$.

In our experiments on scaling sampling for AV VLA models (Fig.~\ref{fig:scaling_law}), we empirically validate this theoretical prediction, showing that increasing $K$ from 8, 12, 16 to 32 consistently improves downstream performance on metrics like collision rate. This demonstrates a practical avenue for scaling inference compute to improve data efficiency during pretraining---by investing more compute in the posterior sampling step to generate higher-quality reasoning traces, we empirically observe better model performance.

%% file: appendix/implementation.tex
%\section{Implementation Details for each embodiment}

%\subsection{architecture for pq}

%\subsection{Training, Inference, Prompting (training and generation, inferencing, what are the reasoning primitives)}

\input{floats/figures/prior_posterior_arch}

\input{floats/tables/embodiment_details}

\section{Experimental Details}
\label{sec:experimental_details}

This section provides comprehensive details on the training procedures, inference protocols, and reasoning primitive specifications for all experimental domains evaluated in this work.

\subsection{LIBERO-90 Manipulation}
\label{sec:app_libero_details}

\subsubsection{Model Architecture and Training}
We employ the MiniVLA architecture~\cite{belkhale2024minivla} with a 1B-parameter configuration, consisting of a 0.5B Qwen2.5 language model backbone~\cite{qwen2025qwen25technicalreport} combined with DINOv2~\cite{oquab2024dinov} and SigLIP~\cite{zhai2023sigmoid} vision encoders. Actions are represented as seven discrete tokens encoding a 10-step action chunk~\cite{zitkovich2023rt2, belkhale2024minivla} via VQ-VAE tokenization~\cite{van2017neural} with codebook size of 256.

\noindent\textbf{Implementation.} We adapt the codebase from~\cite{zawalski2025Robotic} to implement MiniVLA with VQ-VAE action tokenization. Note that while we follow the architectural principles from~\cite{chen2025training}, their code, models, and detailed hyperparameters were not publicly available at the time of our experiments, so we could not directly verify or reproduce their specific implementation details.

\noindent\textbf{Warmstart Training.} The prior-posterior model $\mathcal{M}_{pq}$ is trained on 64 NVIDIA A100 GPUs (40GB) with a total batch size of 512. Training continues until the model achieves 95\% action token prediction accuracy on the training set. We apply reasoning dropout with rate $d = 0.2$ to generate diverse reasoning strategy combinations as described in Alg.~\ref{alg:rbencore_train}.

\noindent\textbf{Reasoning VLA Training.} Following the refinement procedure (Alg.~\ref{alg:rbencore_refine}), we train the final VLA model $\mathcal{M}_{\text{VLA}}$ from the base model initialization to ensure fair comparison with baseline methods. This model is trained on the refined dataset $\mathcal{D}_{\text{refined}}$ with the same hardware configuration and stopping criterion (95\% action token accuracy).

\noindent\textbf{Posterior Sampling.} During the refinement stage, we sample $K=8$ reasoning candidates from the posterior distribution for each demonstration. 

\subsubsection{Reasoning Primitives}
We utilize seven reasoning primitives for manipulation tasks, with content generated from the foundation models Llama 2~\cite{touvron2023llama}, Molmo~\cite{deitke2025molmo} following the data from previous work~\cite{chen2025training}:

\begin{itemize}
    \item \textbf{Plan}: High-level task decomposition into sequential subtasks.
    
    \item \textbf{Visible Objects}: Enumeration of objects detected in the scene using 2D bounding boxes.
    
    \item \textbf{Subtask}: Current subtask or immediate goal the robot should accomplish.
    
    \item \textbf{Subtask Explain}: Detailed justification for why the current subtask is necessary. In~\cite{chen2025training} this was called Subtask Reasoning, and we trained using the phrase Subtask Reasoning, but for the sake of clarity and differentiating the usage of the word Reasoning we call this Subtask Explain in the main paper.
    
    \item \textbf{Move}: Meta-action language description of the intended motion (e.g., ``move gripper to object").
    
    \item \textbf{Move Explain}: Detailed reasoning explaining why the proposed movement is appropriate. Like Subtask Explain, this primitive was called Move Reasoning in the~\cite{ chen2025training} and we trained our traces using the phrase Move Reasoning.
    
    \item \textbf{Gripper Position}: 2D waypoint representation of the end-effector's spatial coordinates.
\end{itemize}

\subsubsection{Inferencing}
At test time, the model receives a scene image and task description as context $C$. The model generates reasoning traces at each action step before predicting the action tokens. This step-by-step reasoning allows the policy to adapt its chain-of-thought to the evolving scene state throughout task execution.

\noindent\textbf{Evaluation Protocol.} We perform 20 rollout trials per task. Success is determined by task-specific completion criteria defined in the LIBERO-90 benchmark.

\subsection{Bridgev2 WidowX Hardware Manipulation}
\label{sec:app_bridge_details}

\subsubsection{Model Architecture and Training}
We employ the OpenVLA architecture~\cite{kim2025openvla} with a 7B-parameter Llama 2~\cite{touvron2023llama} language backbone with DINOv2~\cite{oquab2024dinov} and SigLIP~\cite{zhai2023sigmoid} vision encoders. Actions are represented as seven discrete tokens following the tokenization scheme from~\cite{zitkovich2023rt2, kim2025openvla}.

\noindent\textbf{Implementation.} We adapt the codebase from~\cite{zawalski2025Robotic} for training the OpenVLA model with out algorithm on the Bridgev2 dataset~\cite{walke2023bridge}.

\noindent\textbf{Warmstart Training.} The prior-posterior model $\mathcal{M}_{pq}$ is trained on 64 NVIDIA A100 GPUs (40GB) with a global batch size of 640, continuing until 95\% action token accuracy is achieved. Reasoning dropout rate is set to $d = 0.2$.

\noindent\textbf{Reasoning VLA Training.} The final VLA model $\mathcal{M}_{\text{VLA}}$ is trained from base model initialization on the refined dataset $\mathcal{D}_{\text{refined}}$ with identical hardware and convergence criteria.

\noindent\textbf{Posterior Sampling.} During refinement, we sample $K=4$ reasoning candidates from the posterior distribution for each demonstration. We adapted SGLang-VLA~\cite{kwok2025robomonkey, zheng2024sglang} to support high-throughput batch inferencing of posterior distribution VLA architecture along with extraction of log probabilities of prior VLA architecture (needed for computation of importance weights).

\noindent\textbf{Post-Training All Primitives Model for Action Forcing.} For the baseline model trained with all reasoning primitives, we perform additional post-training epochs with reasoning-free examples to enable the action-forcing capability, following~\cite{chen2025training}. Models trained with \algname and even random primitives acquire this capability naturally through dropout.

\subsubsection{Reasoning Primitives}
We utilize the same seven reasoning primitives like in LIBERO-90 (Plan, Visible Objects, Subtask, Subtask Explain, Move, Move Explain, Gripper Position), with reasoning content generated by the Gemini 1.0 model~\cite{gemini2023family} from the data of~\cite{zawalski2025Robotic}. The demonstration data is sourced from the Bridgev2 dataset~\cite{walke2023bridge}.

\subsubsection{Evaluation}

During evaluation, we use a SGLang-VLA~\cite{kwok2025robomonkey, zheng2024sglang} serving engine to rapidly inference the VLAs during test-time.

\noindent\textbf{Evaluation with Suppressed Test-Time Reasoning (\ie Action Forcing).} In this evaluation mode (Fig.~\ref{fig:hwresult} and Table~\ref{tab:hw_af}), we suppress reasoning generation at test time appending the phrase \texttt{Action:} right at the end of the standard openvla prompt. This forces the model to directly produce action tokens without generating intermediate reasoning traces. We evaluate across 9 tasks (3 per category) totaling 468 trials, categorized into:
\begin{itemize}
    \item \textbf{In-Distribution}: Tasks similar to training data
    \item \textbf{OOD Target}: Tasks with novel target grasp objects not seen during training
    \item \textbf{OOD Scene with Distractions}: Cluttered environments containing task-irrelevant objects
\end{itemize}

This evaluation protocol tests whether reasoning during training improves the learned policy representation, even when reasoning is not explicitly generated at deployment.

\noindent\textbf{Evaluation with Test-Time Reasoning Enabled.} In this evaluation mode (Fig.~\ref{fig:slowhwresult} and Table~\ref{tab:hw_ttr}), models generate complete chain-of-thought reasoning traces before predicting actions. We evaluate on an in-distribution task and OOD scenes with distractions task to measure both task success and inference latency. Models trained on all primitives exhibit $>5$ second per-step latency, while R\&B-EnCoRe models achieve $\sim$3 second latency due to more concise reasoning traces. 

\subsection{Legged Robot Navigation}

\subsubsection{Model Architecture and Training}
We finetune a Qwen3-VL-30B-A3B-Instruct~\cite{bai2025qwen3vltechnicalreport} Mixture-of-Experts (MoE) model to process scene images and task specifications, outputting 2D waypoint coordinates for navigation. The model uses the NaviTrace dataset~\cite{windecker2025navitrace} containing approximately 1,000 tasks across 500 unique scenes for four embodiments: bipedal, wheeled robot, bicycle, and quadruped. Evaluation is performed on a 20\% holdout subset of the scenes (to ensure no evaluation image data contamination in training data).

\noindent\textbf{Training Infrastructure.} We utilize the Thinking Machines Lab Tinker API for both inference and finetuning of the Qwen3-VL-30B MoE model. Training is performed with a batch size of 16 for 8 epochs.

\noindent\textbf{Reasoning Dropout.} Training employs reasoning dropout with rate $d = 0.5$ to generate diverse reasoning strategy combinations. We additionally ablate the dropout values as seen in Fig.~\ref{fig:dropout_ablation}.

\noindent\textbf{Posterior Sampling.} During refinement, we sample $K=4$ reasoning candidates from the posterior distribution for each demonstration.

\subsubsection{Self-Generated Reasoning Primitives}
Unlike manipulation tasks that rely on foundation models external to/different from the base VLM, we generate reasoning primitives by querying the base Qwen3-VL-30B model itself via visual question answering (VQA). For each demonstration, we provide the scene image \textit{with the ground truth waypoint path overlaid} along with the task description.

\noindent\textbf{Data Curation Protocol.} For each scene in the NaviTrace dataset, we query the base Qwen3-VL-30B model with:
\begin{itemize}
    \item The scene image with ground truth waypoint trajectory overlaid
    \item The task description
    \item Individual prompts for each reasoning primitive
\end{itemize}

An example image and prompt for generating content for the eight reasoning primitives (Terrain, Obstacles, Affordances, Plan, Move, Social, Counterfactuals, and Weather) is:

\input{appendix/legged_prompt}

and the output from the Qwen3-VL-30b Instruct model was:

\input{appendix/legged_output}

We parsed out the individual contents for reasoning primitives from the corresponding XML synthetic textual data to get the corresponding reasoning.

\subsubsection{Inference Protocol}
At test time, the model receives the scene image and task description, generating 2D waypoint coordinates. Evaluation uses the NaviTrace metric~\cite{windecker2025navitrace} incorporating Dynamic Time Warping distance, goal endpoint error, and semantic penalties aligned with human preferences.

\subsection{Autonomous Driving}
\label{sec:app_av_details}

%%%%%%%%%%%%%%%%%%%%%%%%%%%%%%%% AV START %%%%%%%%%%%%%%%%%%%%%%%%%%%%%%%%
\subsubsection{Model Architecture and Representation}
We finetune a Qwen3-VL-4B-Instruct~\cite{bai2025qwen3vltechnicalreport} dense (non-MoE) model on the nuScenes dataset~\cite{caesar2020nuscenes} which consists of almost 1000 driving scenes of 20 seconds each, consisting of images and trajectory data sampled at 2Hz, collected from Boston and Singapore and we use the official training and validation split. 
We follow \cite{mao2023gpt,maolanguage} and use 2 seconds of history to predict 3-second of future trajectories at a 2Hz frequency.
We rely on a natural language representation of the trajectory, formatted as pairs of ($x$,$y$) in nuScenes ego-centric coordinates, for a total of 6 numbers. See an example below.
% Similar to \cite{maolanguage}, we output the ego-vehicle's planned trajectory over a 3-second horizon at a 2hz frequency. 
Different from prior works, we process the front-camera RGB image only in order to remain within the computational budget. 
We find that empirically, we perform similarly to methods using a surround-view setup \cite{mao2023gpt,zhou2025opendrive}, suggesting that future work can improve VLMs' understanding of multiple frames.
% We find that empirically, we are able to perform similarly to methods using a surround-view setup \cite{mao2023gpt,zhou2025opendrive}, suggesting that future work can be done to improve VLM's understanding of multiple frames.
% The model processes front-camera RGB image \textit{only} and outputs the ego-vehicle's planned trajectory over a 3-second horizon.

\noindent\textbf{Implementation.} We employed the Qwen3-VL 4B \cite{bai2025qwen3vltechnicalreport} dense model to initialize the autonomous driving VLA. Training employs reasoning dropout with rate $d = 0.5$.

\noindent\textbf{Posterior Sampling.} During refinement, we sample $K=16$ reasoning candidates from the posterior distribution for each demonstration (results reported in Table~III use this configuration). We ablate the effect of $K$ in Fig.~12, finding that performance saturates around $K=16$ samples.

\subsubsection{Reasoning Primitives}
We utilize reasoning traces from human-annotated data for the nuScenes dataset, originally curated for agent-based planning methods~\cite{mao2023gpt,maolanguage}. For this work, we re-distributed them into reasoning primitives that more evenly break down the original annotations. The new primitives, based on the annotations, include:

\begin{itemize}
    \item \textbf{Mission Goal}: High-level navigation objective (e.g., ``turn left", ``change lane right"). This information is extracted from the ``ego" field of the annotation from \cite{maolanguage}, repurposed for driving VLA reasoning.

    \item \textbf{Perception}: Detailed scene understanding including detected objects, their types, IDs, and predicted future waypoints. This field corresponds to the ``perception" annotations.

    \item \textbf{Notable Objects}: Salient objects requiring attention. Notable objects are extracted from original ``reasoning" field of the annotations, and are objects that a language model identified as notable.

    \item \textbf{Collidable Objects}: Objects with collision risk. These correspond to the notable objects in the ``chain\_of\_thoughts" field of the annotations, and are objects the ego-vehicle will collide with if the path is kept.

    \item \textbf{Driving Plan}: Concrete action plan with speed modulation (e.g., ``change lane right with constant speed"). This is extracted from the driving plan in the ``chain\_of\_thoughts" field of the annotations, and is heuristically generated.

    \item \textbf{Experience}: Retrieved similar past driving scenarios from memory with confidence scores. This corresponds exactly to the ``experiences" field of the annotations.

\end{itemize}

\subsubsection{Data Transformation and Evaluation}
% \noindent\textbf{Data transformation details.} 
The original nuScenes dataset was matched to the reasoning annotations from \cite{maolanguage}, and details corresponding to the ego-vehicle were extracted to provide to the driving VLA. This information corresponds to the ``ego" field in the annotations, and include current state (velocities, acceleration, can bus information, \etc) as well as the historical trajectory (2-second past information). The mission goal is not included, and is instead used for the reasoning. All reasoning is headed by a question, formatted such that the reasoning type is the answer, and in tags, \eg:
\begin{prompt}
\begin{code}
<What is the mission goal?> FORWARD

<What do you perceive in the scene?>

Distance to both sides of road shoulders of current ego-vehicle location:
Current ego-vehicle's distance to left shoulder is 5.5m and right shoulder is 1.5m

<What is a similar past experience?>
Most similar driving experience from memory with confidence score: 1.00:
The planned trajectory in this experience for your reference:
[(-0.00,0.00), (-0.00,0.00), (-0.00,-0.00), (-0.00,0.16), (-0.01,0.60), (0.01,1.40)]
<What are the collidable objects?>
 - Notable Objects: None
   Potential Effects: None<What is the driving plan?> STOP
<What are the notable objects?>
 - Notable Objects: None
   Potential Effects: None

Planned Trajectory:
[(0.05,4.55), (0.09,9.14), (0.12,12.84), (0.19,17.51), (0.22,22.14), (0.25,26.60)]
\end{code}
\end{prompt}
% \begin{itemize}
%     \item $<$What is the mission goal?$>$
%     \item $<$What do you perceive in the scene?$>$
%     \item $<$What is a similar past experience?$>$
%     \item $<$What are the collidable objects?$>$
%     \item $<$What is the driving plan?$>$
%     \item $<$What are the notable objects?$>$
% \end{itemize}

\noindent\textbf{Training Data Details.} During training, the model takes as input the front camera image, ego annotation, and the task. The model optionally predicts a reasoning trace in the above format, and a planned trajectory. We train on 4 NVIDIA H100 GPUs with a global batch size of $32$, for a fixed $30$ epoch training run, using the provided finetuning infrastructure.

\noindent\textbf{Evaluation Details.} We follow the UniAD evaluation \cite{hu2023planning} for open-loop evaluation of the planned trajectory. The predicted trajectory is parsed into the evaluation format. If there is an error in parsing---which occurs $<$0.5\% of the time---it is replaced with a stationary trajectory to not bias the evaluation results.

%% file: floats/figures/prior_posterior_arch.tex
\begin{figure*}[t]
    \centering
    \includegraphics[width=\textwidth]{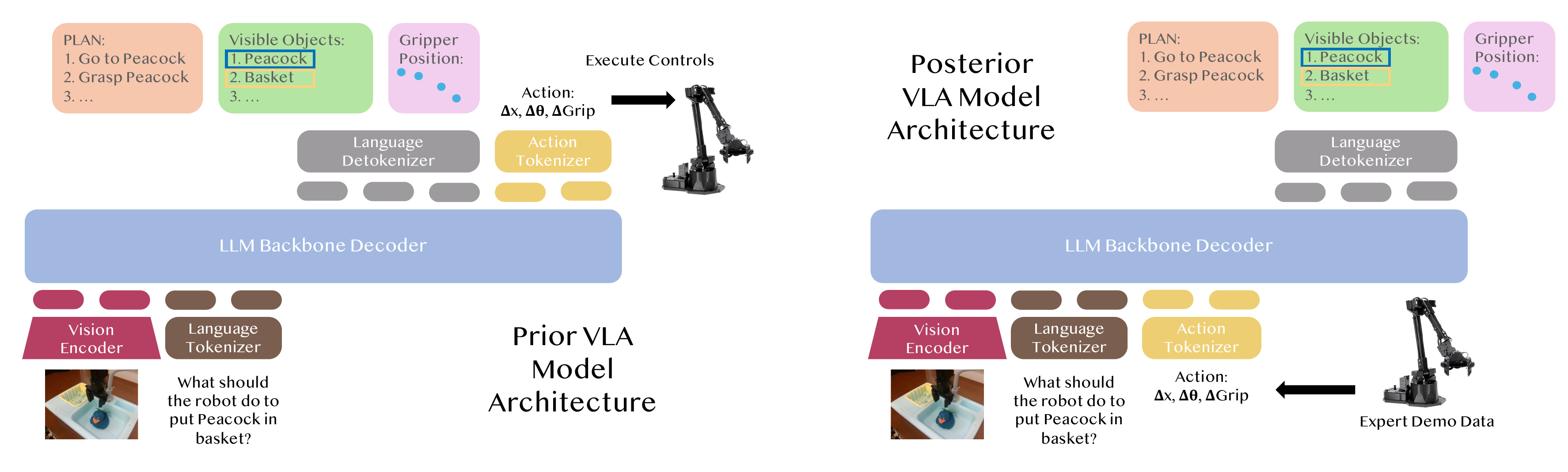}
    \caption{\small \textbf{Prior and Posterior architecture.} The prior architecture is the same as the standard generative VLA that takes as input the task context (scene and task) and outputs textual reasoning followed by action tokens. The posterior architecture takes as input the context \emph{and action} and outputs only the reasoning tokens.}
    \label{fig:pq_arch}
\end{figure*}

%% file: floats/tables/embodiment_details.tex
\begin{table*}[t]
\centering
\caption{\small Experimental Configuration Details Across Embodiment Domains}
\label{tab:experimental_configs}
\resizebox{\textwidth}{!}{%
\begin{tabular}{l|cccc}
\toprule
\textbf{Configuration} & \textbf{LIBERO-90} & \textbf{WidowX Hardware} & \textbf{Legged Robot} & \textbf{Autonomous Vehicle} \\
\midrule
$M$ (warmstart traces) & 10 & 4 & 8 & 32 \\
$K$ (posterior samples) & 8 & 4 & 4 & 8,12,16,32 \\
$\rho$ (reasoning primitives) & 7 & 7 & 7,8 (w/ or w/o Weather) & 6 \\
Dropout rate $d$ & 0.2 & 0.2 & 0.1,0.3,0.5.0.7,0.9 & 0.5 \\
Posterior sampling temp. & 1.0 & 1.0 & 0.7 & 1.0 \\
\midrule
Expert demo source & LIBERO-90~\cite{liu2023libero} & Bridgev2~\cite{walke2023bridge} & NaviTrace~\cite{windecker2025navitrace} & nuScenes~\cite{caesar2020nuscenes} \\
\midrule
Primary reasoning & Llama 2~\cite{touvron2023llama}, Molmo~\cite{deitke2025molmo} & Gemini 1.0~\cite{gemini2023family} & Qwen3-VL 30B MoE~\cite{bai2025qwen3vltechnicalreport} & Human Annotations~\cite{mao2023gpt, maolanguage} \\
primitive content source\\
\midrule
 & MiniVLA~\cite{belkhale2024minivla}: Qwen2.5 0.5B LLM~\cite{qwen2025qwen25technicalreport}, & OpenVLA~\cite{kim2025openvla}: Llama 2 7B LLM~\cite{touvron2023llama}, & Qwen3-VL 30B MoE~\cite{bai2025qwen3vltechnicalreport} & Qwen3-VL 4B Dense~\cite{bai2025qwen3vltechnicalreport} \\
VLA architecture & DINOv2~\cite{oquab2024dinov}+SigLIP~\cite{zhai2023sigmoid},VQ-VAE~\cite{van2017neural} &  DINOv2~\cite{oquab2024dinov}+SigLIP~\cite{zhai2023sigmoid} & & \\
\bottomrule
\end{tabular}%
}
\end{table*}

%% file: appendix/legged_prompt.tex
\input{floats/figures/wheeled_with_traj}
\newpage
\begin{prompt}
\begin{code}
You are a navigation expert for various embodiments including robots and humans. Given a first-person image view of the current scenario with a planned path overlaid on the image, the coordinates of these 2D points in normalized image coordinates (ranging from 0 to 1, where [0,0] is the top-left and [1,1] is the bottom-right), a specified embodiment (e.g., legged robot, wheeled robot, human, or bike), and a navigation task (e.g., "Go down the road"), you will provide a comprehensive, step-by-step, detailed reasoning trace in full sentences about how the overlaid path on the image solves the given task. Do not explicitly refer to the trajectory during the reasoning as future readers of your reasoning trace will not see the overlaid path.
Your reasoning must be provided within the specified XML-style tags for each distinct category.

###Navigation Reasoning Components
Terrain: A detailed analysis of the physical characteristics of the ground surfaces in the scene, including the types of stepping, rolling, or contact considerations required (e.g., grass vs. sidewalk, uneven cobblestones, smooth asphalt).
Obstacles: An identification of specific objects, obstacles, or areas anywhere in the scene that the embodiment must actively avoid colliding with or entering.
Affordances: Description of specific action capabilities a region of the environment offers to the robot--such as steppable, jumpable, or gap-crossable--determined by the interplay between the terrain's physical properties and the robot's kinematic and dynamic limits.
Plan: High-level language description of the overall navigation plan.
Move: Description of primary direction and type of motion required to follow the path (e.g., move forward, turn left 45 degrees, slow down).
Social: Description of expected or required social behaviors and conventions that the depicted trajectory does to maintain safe and cooperative movement around other entities (e.g., move left for bikers, right for humans or yield to oncoming traffic).
Counterfactuals: Listing of potential alternative navigation plans and their likely consequences, presented as pairs of Action and Consequence.
Weather: An analysis of the weather in the environment using subjective language.

###Output Format
You must format your final output using the following tags for each corresponding type of reasoning:
<Terrain>
Description of physical properties of surface contact patches.
</Terrain>

<Obstacles>
Identify objects in scene to not hit.
</Obstacles>

<Affordances>
Region: [Environment region/terrain 1]
Affordances: [Action capabilities of region/terrain 1]

Region: [Environment region/terrain 2]
Affordances: [Action capabilities of region/terrain 2]
...
</Affordances>

<Plan>
Step 1: [First step in plan]
Step 2: [Second step in plan]
...
</Plan>

<Move>
Directions of motions.
</Move>

<Social>
Applicable social rules for movement.
</Social>

<Counterfactuals>
Action: [Alternative action 1]
Consequence: [Likely result of alternative action 1]

Action: [Alternative action 2]
Consequence: [Likely result of alternative action 2]
...
</Counterfactuals>

<Weather>
Description of the weather.
</Weather>

###Additional Notes about the Image:
  - The image shows a first-person view of the navigation scenario
  - The trajectory starts near the bottom center of the image, which corresponds approximately to normalized coordinate [0.5, 0.95] (representing the current position of the embodiment)
  - The trajectory is adapted to the embodiment's abilities and limitations
  - The trajectory extends all the way to the goal if the path is visible. If the path is occluded, the trajectory ends where the path becomes fully obscured, unless the path can be reasonably inferred from the visible context.

### Information on General Embodiment Movement Characteristics
  - **Human**: A standard pedestrian. Can navigate stairs and ramps but cannot climb tall obstacles.
  - **Wheeled Robot**: A wheeled delivery robot. Behaves like a wheelchair, preferring smooth surfaces such as walkways and ramps. It cannot use stairs or escalators.
  - **Bicycle**: A standard cyclist. Follows traffic regulations and prefers bike lanes or streets. Cannot navigate stairs.
  - **Legged Robot**: A quadruped like ANYmal. Behaves similarly to a human, but it is shorter. It can handle stairs and escalators.

### Your Task Specific Information
**Trajectory**: [[0.50, 0.95], [0.60, 0.85], [0.69, 0.72],[0.72, 0.55],[0.73, 0.40], [0.68, 0.35], [0.61, 0.33], [0.57, 0.31], [0.52, 0.30], [0.47, 0.30], [0.41, 0.29], [0.34, 0.29], [0.26, 0.28], [0.21, 0.28], [0.13, 0.27], [0.05 , 0.27]]
**Your current Embodiment**: Wheeled Robot
**Task**: Follow the road to the left
\end{code}
\end{prompt}

%% file: floats/figures/wheeled_with_traj.tex
\begin{figure*}[ht]
    \centering
    \includegraphics[width=0.5\textwidth]{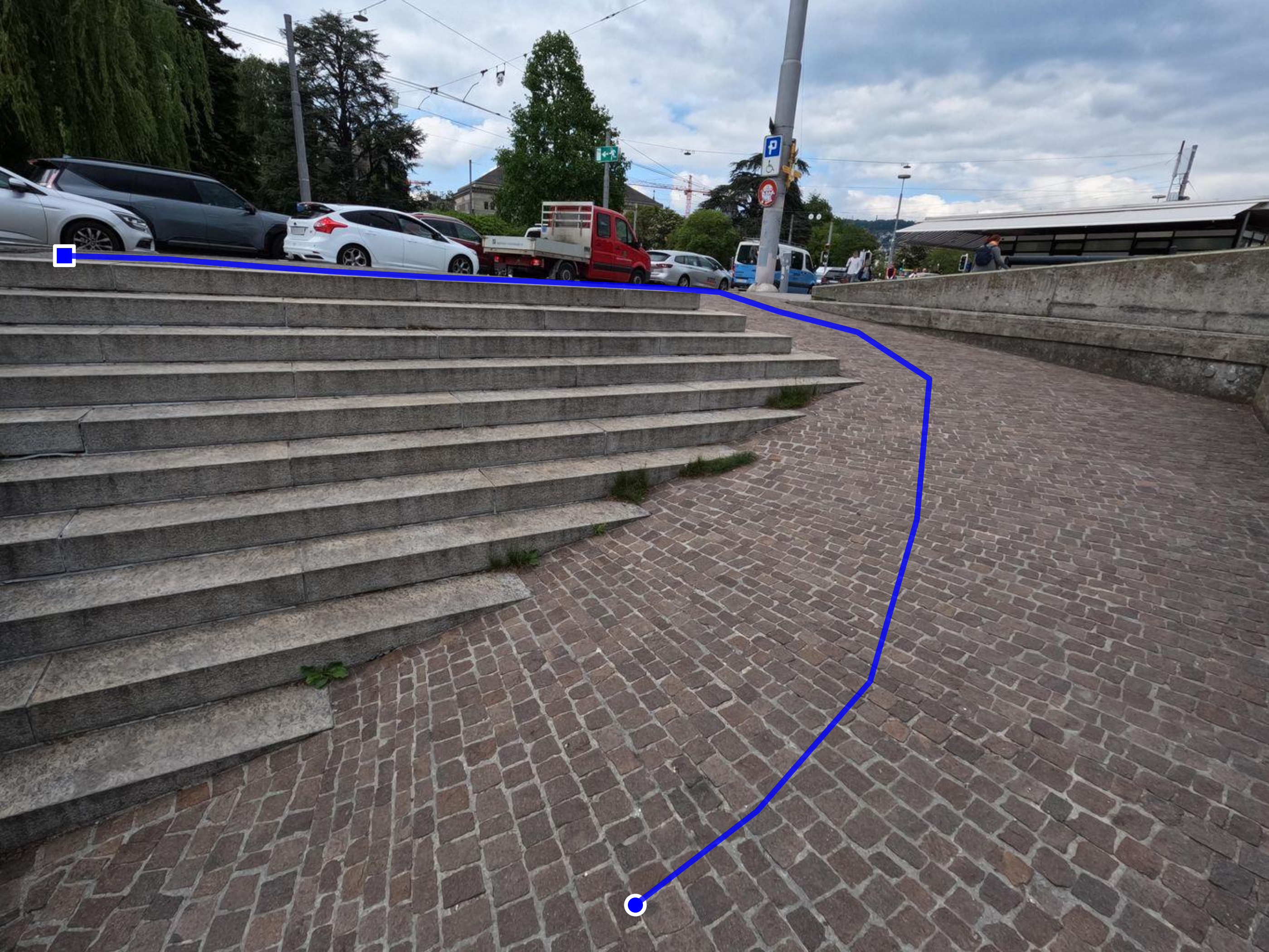}
\end{figure*}

%% file: appendix/legged_output.tex
\begin{prompt}
\begin{code}
<Terrain>
The terrain consists of two main surfaces: a set of wide, stone steps on the left and a cobblestone path on the right. The steps are made of large, flat stone slabs and are designed for walking. The cobblestone path is composed of small, irregularly shaped stones set in a pattern, creating a textured but generally stable surface. For a wheeled robot, the cobblestone path is navigable but may require careful control due to the unevenness, while the steps are completely impassable.
</Terrain>

<Obstacles>
The primary obstacles are the stone steps on the left, which the robot cannot traverse. Additionally, there are parked cars and a red truck in the background, which are static obstacles that must be avoided. The presence of a person in the distance on the right side of the path is also noted, although they appear to be stationary and not in the immediate path of the robot.
</Obstacles>

<Affordances>
Region: Cobblestone path
Affordances: The cobblestone path is suitable for a wheeled robot to move across, although it may require adjustments in speed and steering to handle the uneven surface.

Region: Stone steps
Affordances: The stone steps are not suitable for a wheeled robot to traverse and must be avoided.
</Affordances>

<Plan>
Step 1: Begin moving forward along the cobblestone path, staying to the right of the steps to avoid them.
Step 2: Continue moving forward, maintaining a steady course along the path as it curves slightly to the left.
Step 3: Ensure to navigate around any potential obstacles, such as the parked cars and the person, by adjusting the path slightly if necessary.
Step 4: Continue until the path is fully traversed or the goal is reached.
</Plan>

<Move>
Move forward along the cobblestone path, navigating the slight curve to the left. Maintain a steady pace and ensure to stay clear of the steps on the left and any obstacles on the path.
</Move>

<Social>
The robot should maintain a safe distance from the parked cars and the person on the right side of the path. It should not obstruct the path of any pedestrians and should move at a pace that allows for smooth and safe navigation.
</Social>

<Counterfactuals>
Action: Attempt to move up the stone steps
Consequence: The wheeled robot cannot traverse the steps and would become stuck or damaged.

Action: Move directly towards the parked cars
Consequence: The robot would collide with the vehicles, causing damage and potentially obstructing traffic.
</Counterfactuals>

<Weather>
The weather appears to be overcast, with a cloudy sky and no direct sunlight visible. The clouds are thick and gray, suggesting a cool and possibly damp environment. There are no signs of rain or strong winds, and the overall lighting is diffused, creating a soft and even illumination across the scene.
</Weather>
\end{code}
\end{prompt}

%% file: appendix/main_exp.tex
%\newpage
\section{Hardware Details with Results}
\label{sec:app_hardware_details}

%\subsection{Hardware Experiment setup and detailed plots}

%\input{floats/tables/hardware_af}

\input{floats/figures/hardware_results}

\input{floats/tables/hardware_ttr}

%\subsection{Example Traces/Trajectories for Each Embodiment}

%\section{Detailed Hardware Results, Example Reasoning Traces, Additional Ablation Experiments}

\newpage

\section{Additional Ablation Experiments}

\input{floats/figures/dropout_ablation}

\input{floats/figures/legged_sampling_bound}

\input{floats/figures/navitrace_wgarbage_perf}

%% file: floats/figures/hardware_results.tex
\begin{figure}[h] 
    \centering
    \includegraphics[width=\linewidth]{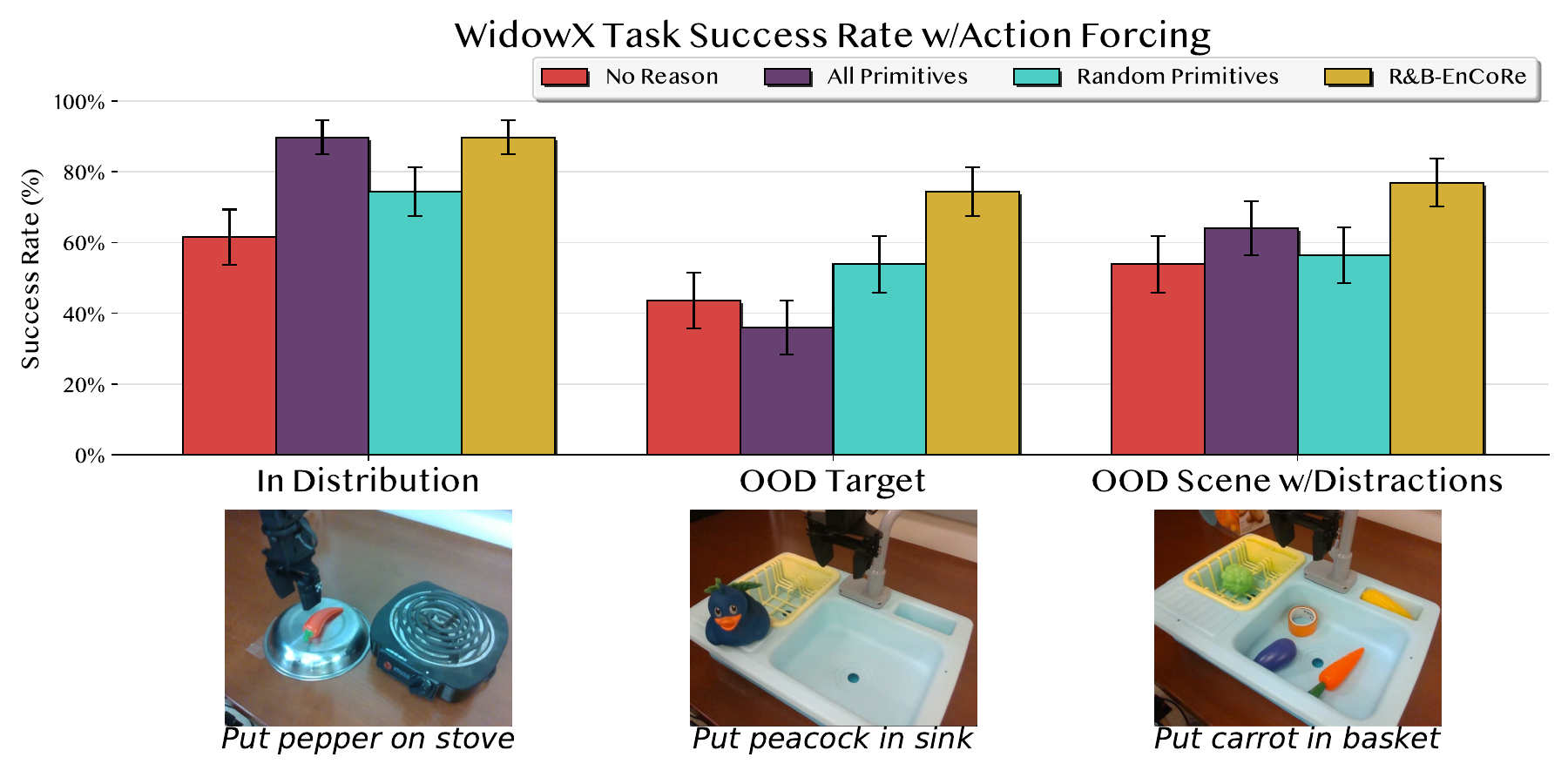}

    %\vspace{-8pt}
    \caption{\small Success rates on WidowX hardware Bridgev2 setup. Reasoning Models are prompted and/or trained with Action forcing for reduced latency~\cite{chen2025training}. Evaluated on a total of 9 different tasks (3 per category), and 468 total trials. We show 1 example task per category.}
    \label{fig:hwresult}
%\vspace{-5pt}
\end{figure}

%% file: floats/tables/hardware_ttr.tex
\begin{table*}[h]
\centering
\caption{\small WidowX Task Success Rate with Test-Time Reasoning Enabled. Evaluation includes 13 trials per task and model to compare performance when the model generates full chain-of-thought traces.}
\label{tab:hw_ttr}
\resizebox{0.8\textwidth}{!}{%
\begin{tabular}{llcc}
\hline
\textbf{Category} & \textbf{Task} & \textbf{All Primitives} & \textbf{\algname} \\ \hline
In Distrib. & put red pepper in yellow basket & 84.6\% & \textbf{92.3\%} \\ \hline
\multirow{2}{*}{OOD Scene w/Distr.} & put red pepper in yellow basket & \multirow{2}{*}{53.8\%} & \multirow{2}{*}{\textbf{84.6\%}} \\ 
 & \emph{(include distraction objects in basket, sink)} & & \\ \hline
\end{tabular}
}
\end{table*}

%% file: floats/figures/dropout_ablation.tex
\begin{figure*}[h]
    \centering
    \includegraphics[width=0.55\textwidth]{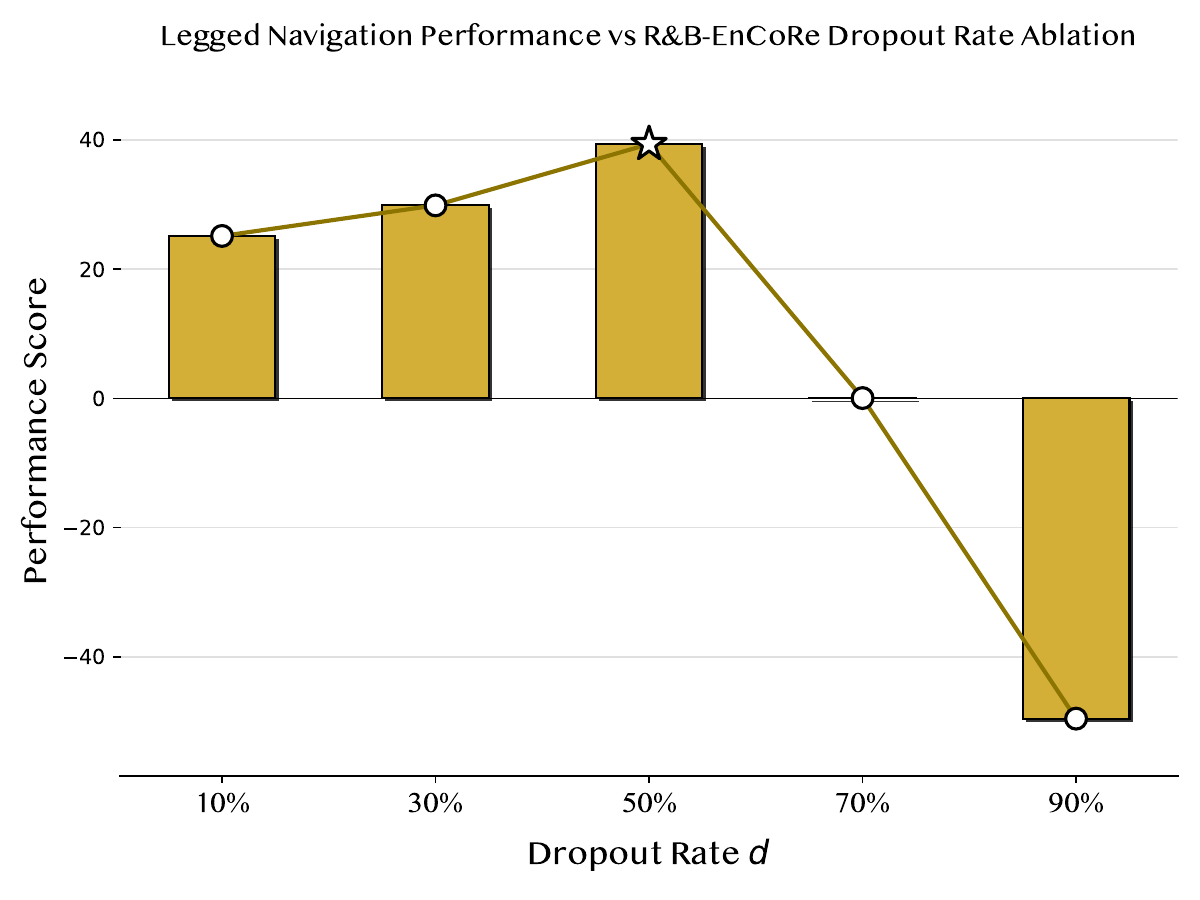}
    \caption{\small For our \algname algorithm applied to the Legged navigation embodiments, we perform an ablation study on varying the dropout rate parameter $d$ that affects the initial warmstart reasoning strategy training distribution. We find that $50\%$ dropout provides best downstream performance. This dropout rate encourages the prior and posterior model to see a diverse set of reasoning strategies with overall minimal warmstarting preference bias for any strategy (\ie $\log\frac{1-d}{d}=\log\frac{1-0.5}{0.5}=0$). }
    \label{fig:dropout_ablation}
\end{figure*}

%% file: floats/figures/legged_sampling_bound.tex
\begin{figure*}[h]
    \centering
    \includegraphics[width=0.6\textwidth]{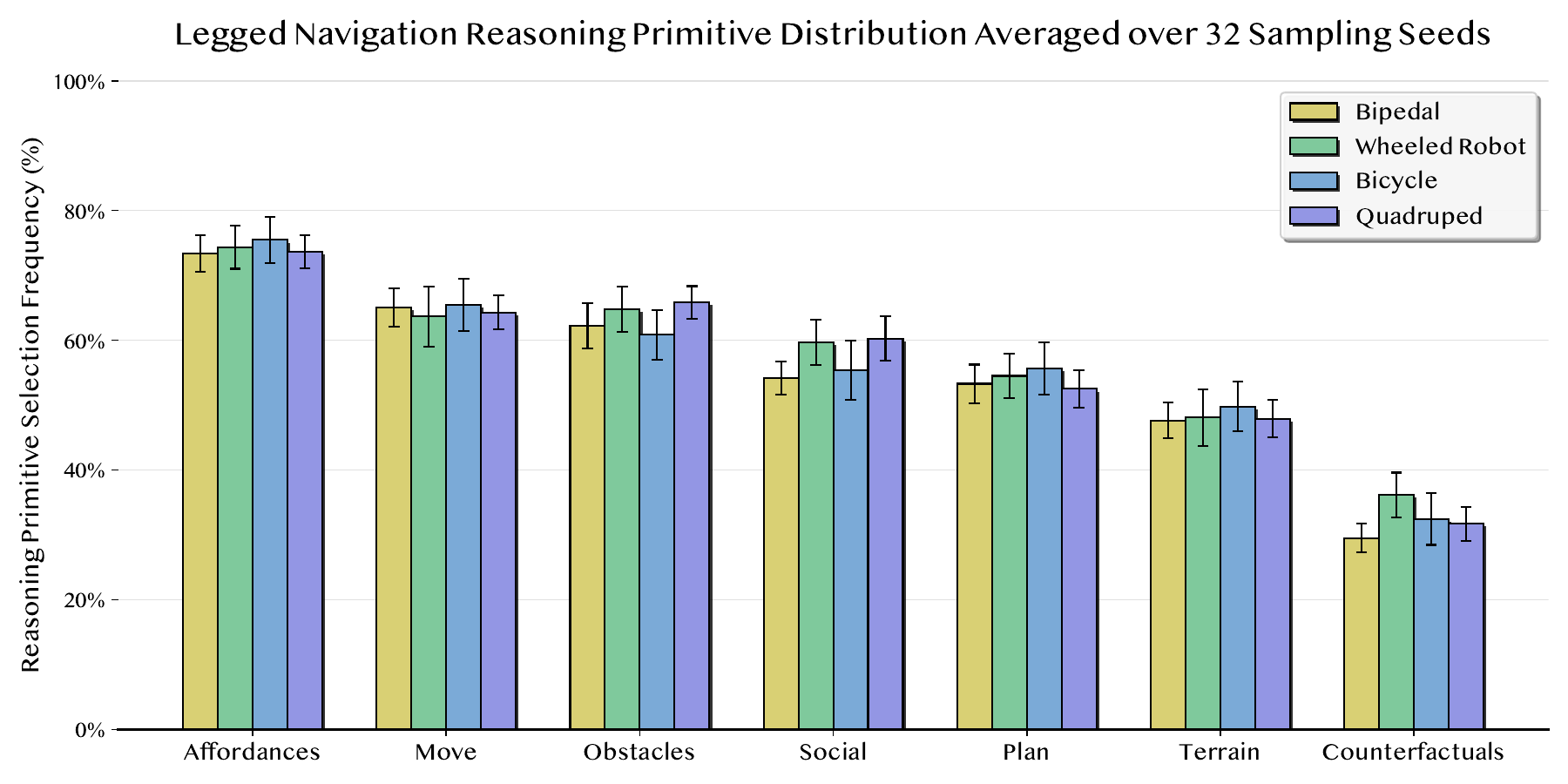}
    \caption{\small For the Legged Navigation Dataset we perform an ablation on performing posterior sampling (from Alg.~\ref{alg:rbencore_refine}) across 32 different sampling seed to validate whether the refined reasoning primitive distribution remains consistent. This plot confirms the generally consistency of the reasoning primitive frequencies (note the error bars and compare with the result of a single sample seed of Fig.~\ref{fig:mainresult}b).}
    \label{fig:sampling_ablation}
\end{figure*}

%% file: floats/figures/navitrace_wgarbage_perf.tex
\begin{figure}[h]
    \centering
    \includegraphics[width=0.7\linewidth]{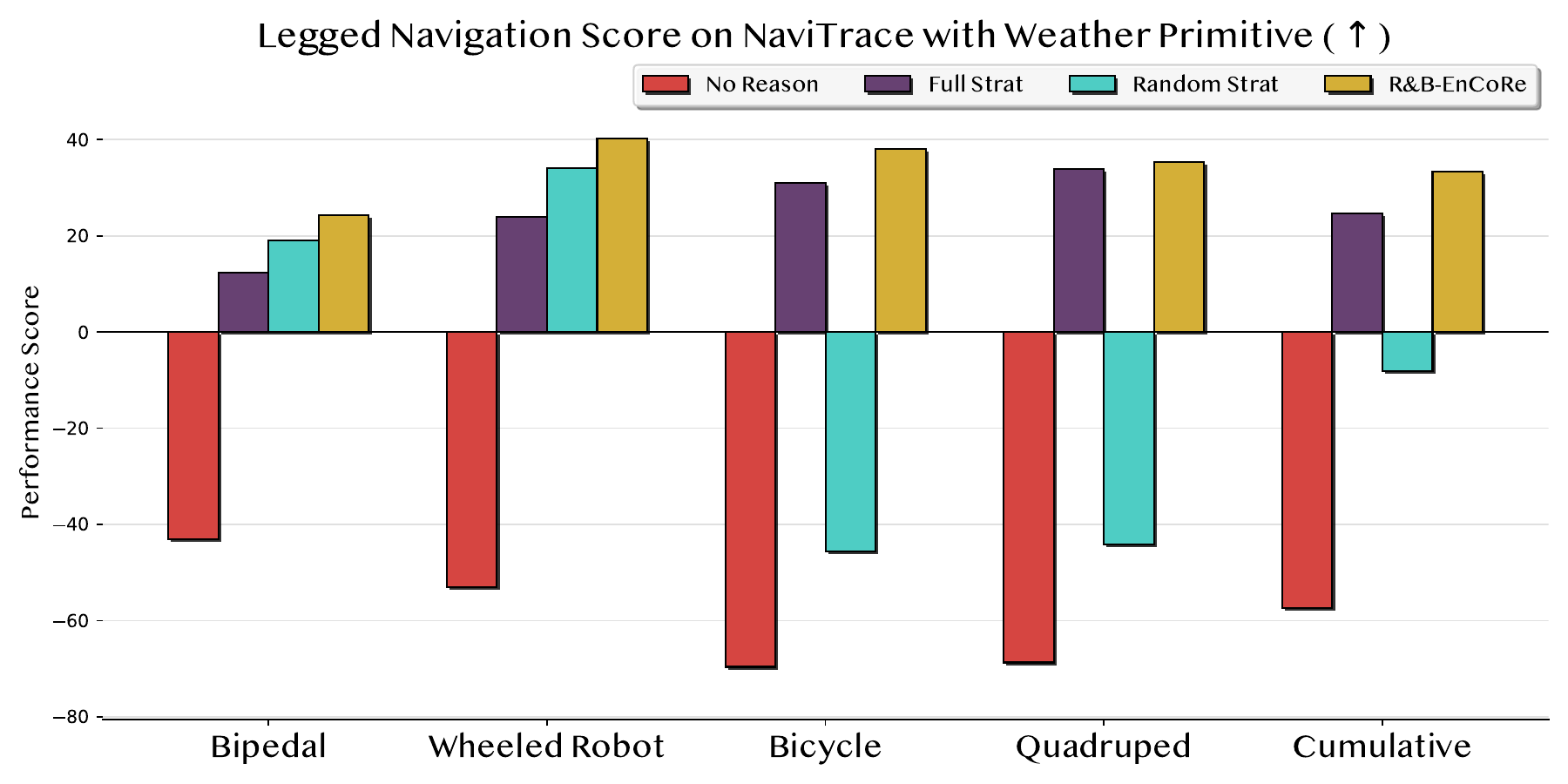}
    \caption{\small NaviTrace scores on the various VLA models with the additional Weather Reasoning primitive. \algname refines the traces to remove irrelevant Weather reasoning primitive scores as seen in Fig.~\ref{fig:garbage_distrib}, and results in best performance across all embodiments on the navigation metric from ~\cite{windecker2025navitrace}. Recalls the NaviTrace score is a normalized metric so 100 is perfect path alignment with expert, and 0 is the score for the na\"ive straight line path down the middle of the scene.}
    \label{fig:navitrace_wgarbage_score}
\end{figure}

%% file: appendix/qualitative_results.tex
%\section{Detailed Hardware Results, Example Reasoning Traces, Additional Ablation Experiments}

\newpage
\textcolor{white}{ }
\newpage
\section{Example Reasoning Traces Across Embodiments}
\label{sec:supplementary_qualitative}

\input{floats/figures/libero_visible_object_qualitative}

%\subsection{LIBERO Visible Objects}
%Fig.~\ref{fig:visobj-time-qual}

\input{floats/figures/libero_all_subtypes_qualitative}
%\subsection{LIBERO Qualitative Results}
%Fig.~\ref{fig:libero-all-subtypes-qual1}, %Fig.~\ref{fig:libero-all-subtypes-qual2}

\input{floats/figures/bridge_qualitative}
%\subsection{Bridge-v2 Qual}
%Fig.~\ref{fig:bridge-all-subtypes-qual1}, %Fig.~\ref{fig:bridge-all-subtypes-qual2}

\input{floats/figures/quadruped_reason}

\input{floats/figures/av_qualitative_all}

%% file: floats/figures/libero_visible_object_qualitative.tex
\begin{figure*}[h] % 'p' places the figure on its own page
    \centering
    
    % First figure
    \includegraphics[width=\textwidth]{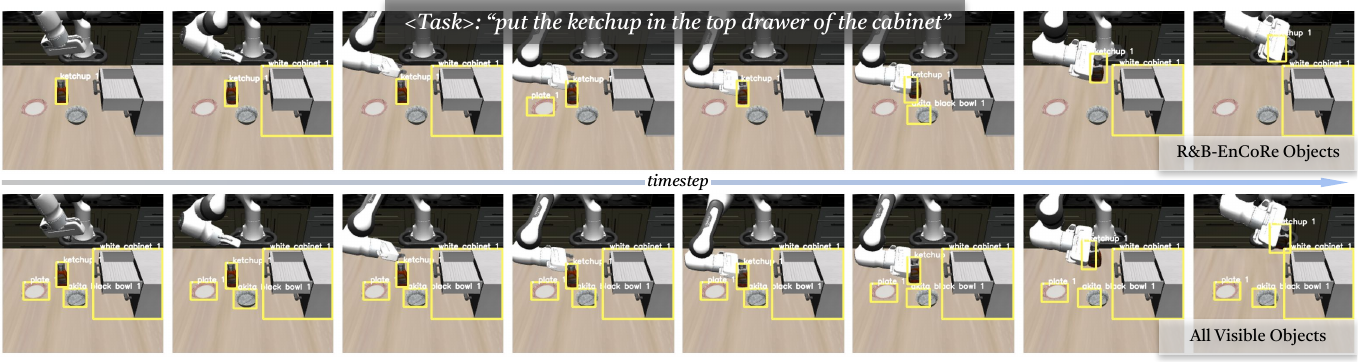}
    
    \vspace{12pt} % Add some vertical space between figures
    
    % Second figure
    \includegraphics[width=\textwidth]{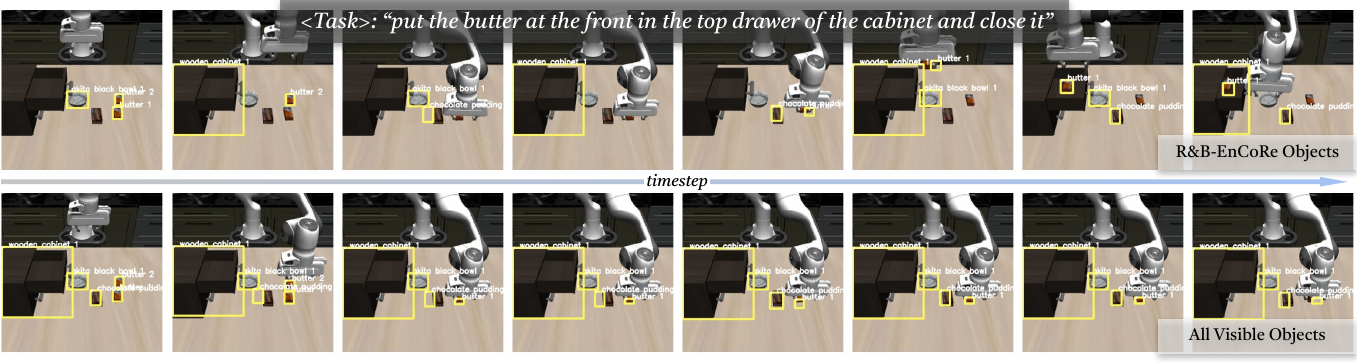}
    
    \caption{\small Visible Object-only reasoning (Section~\ref{sec:Libero90Results}-Q1) in LIBERO-90 across steps in episode. Notice that the generated object bounding boxes for \algname's reasoning model generally attend to primarily task salient objects, while reasoning with all visible objects attends to all objects (including distracting/irrelevant ones) at every frame.}
    \label{fig:visobj-time-qual}
\end{figure*}

%% file: floats/figures/libero_all_subtypes_qualitative.tex
\begin{figure*}[h]
    \centering
        \includegraphics[width=\textwidth]{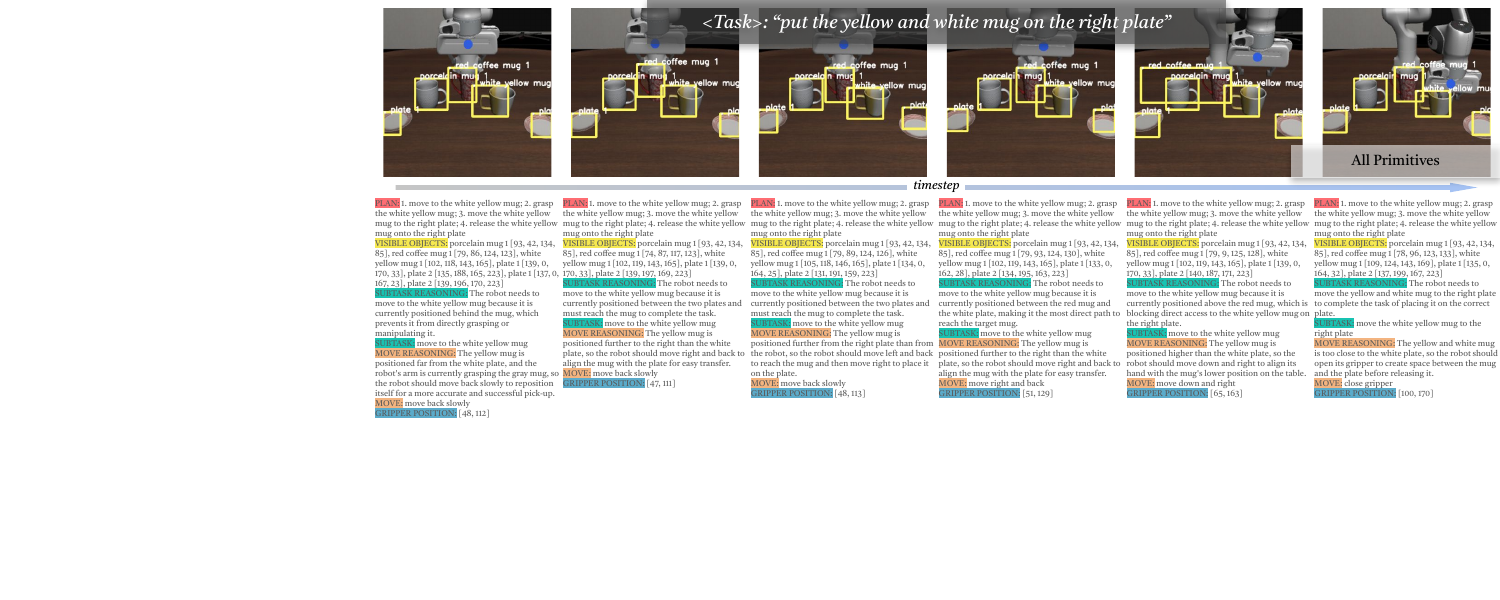}
        
        % \vspace{4pt}
        
        \includegraphics[width=\textwidth]{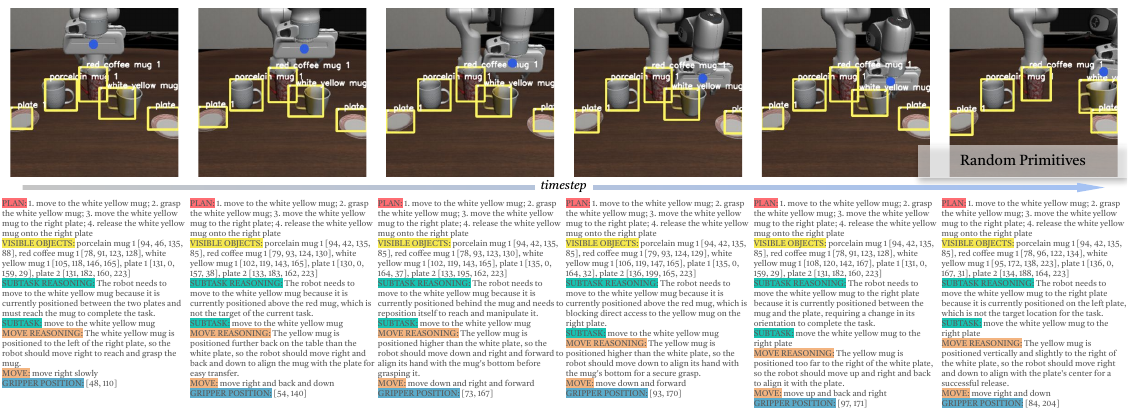}
        
        % \vspace{4pt}
        
        \includegraphics[width=\textwidth]{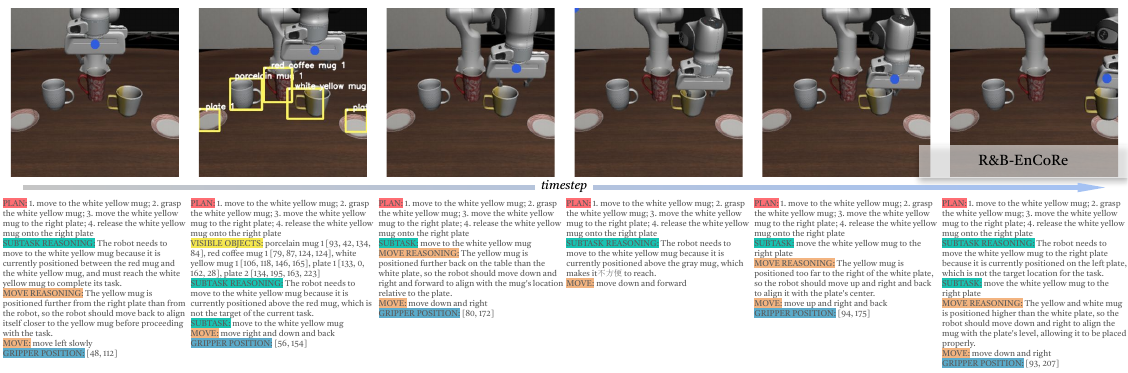}
        
        \caption{\small Reasoning Traces with wider set of primitives (Section~\ref{sec:Libero90Results}-Q2) from the reasoning VLAs for LIBERO-90 across episode. Notice how \algname reasons less frequently about visible objects compared with the other two models.}
    % Second subfigure with 3 stacked figures
    \label{fig:libero-all-subtypes-qual1}
\end{figure*}

\begin{figure*}[h]
    \centering
    \includegraphics[width=\textwidth]{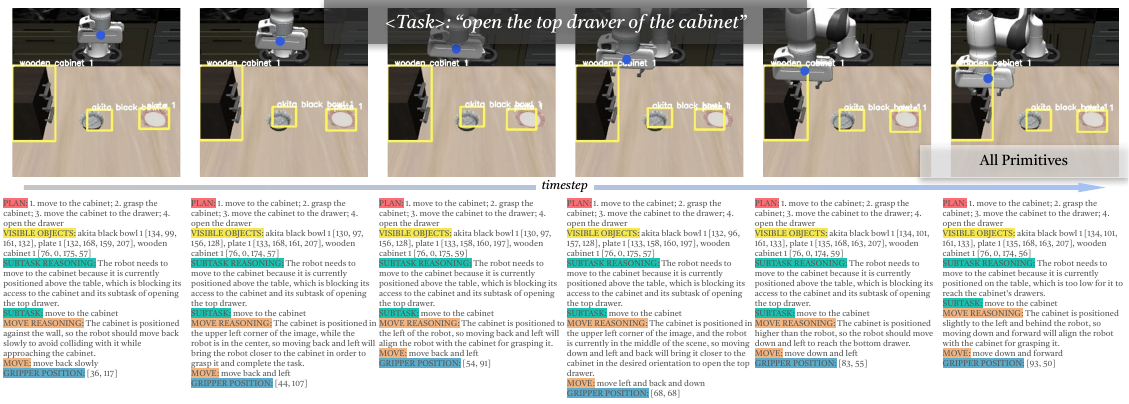}
    
    % \vspace{4pt}
    
    \includegraphics[width=\textwidth]{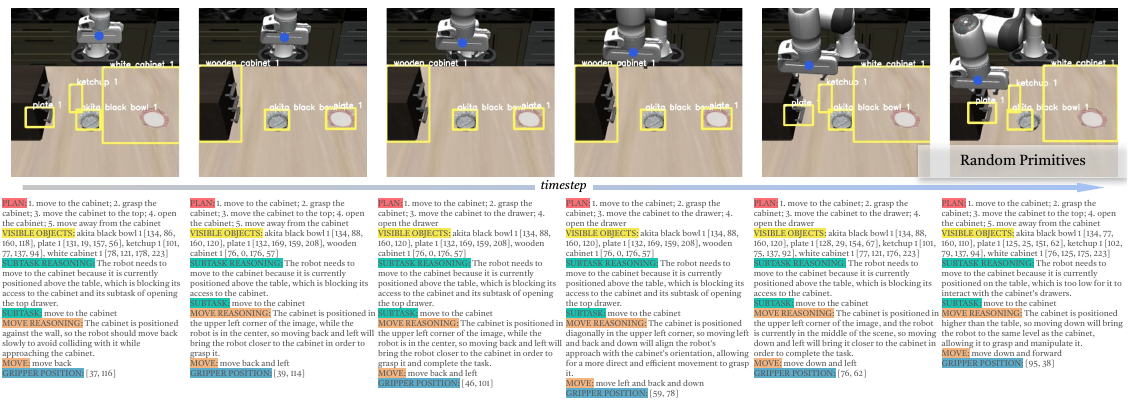}
    
    % \vspace{4pt}
    
    \includegraphics[width=\textwidth]{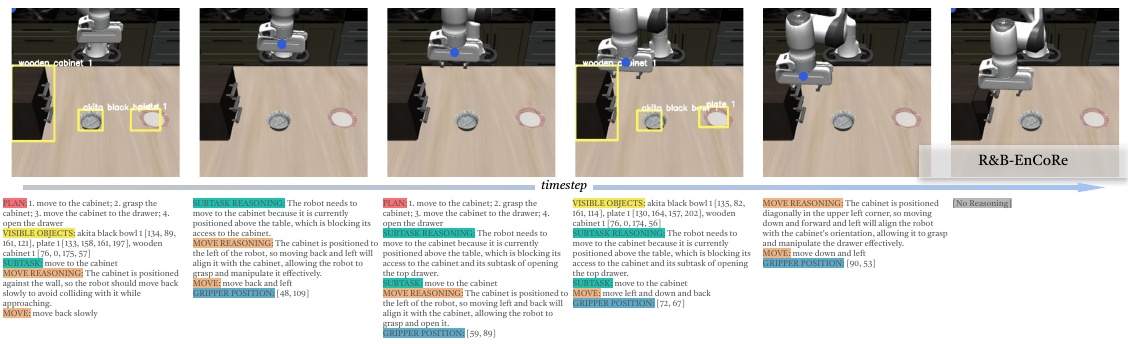}
    
    \caption{\small Reasoning Traces with wider set of primitives (Section~\ref{sec:Libero90Results}-Q2) from the reasoning VLAs for LIBERO-90 across episode. Notice how \algname reasons less frequently about visible objects compared with the other two models.}
    \label{fig:libero-all-subtypes-qual2}
\end{figure*}

%% file: floats/figures/bridge_qualitative.tex
\begin{figure*}[t]
    \centering
        \includegraphics[width=\textwidth]{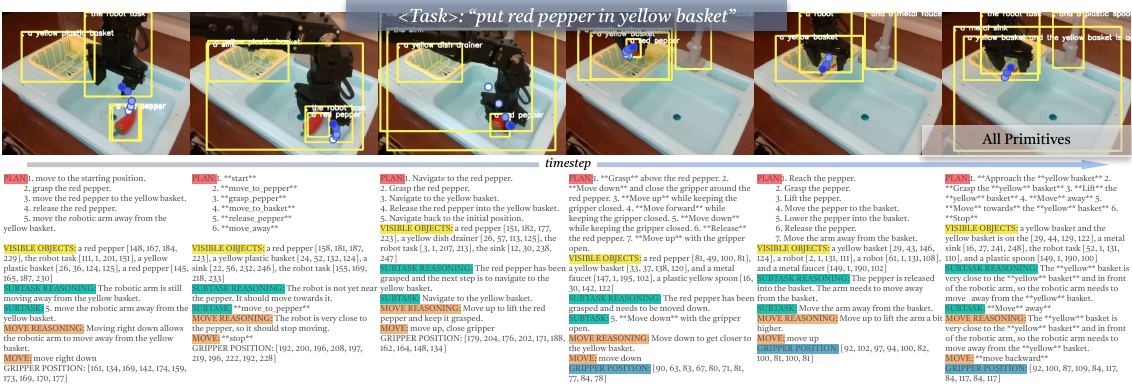}

        \includegraphics[width=\textwidth]{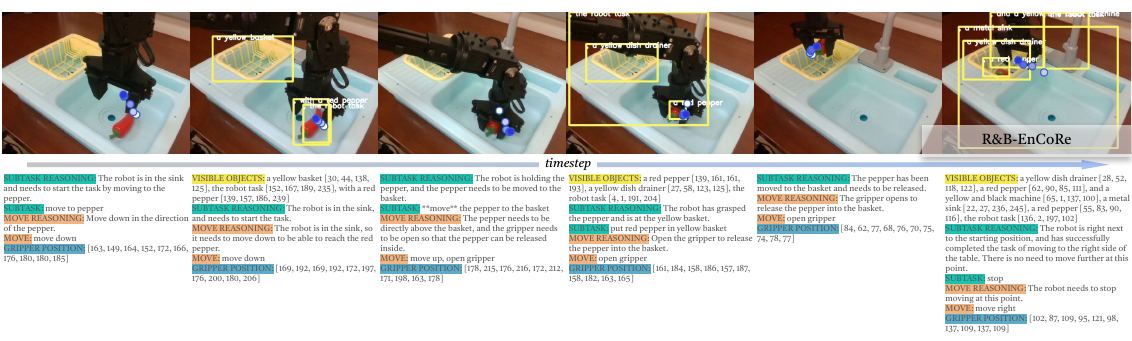}
        
        \caption{\small Reasoning Traces (Section~\ref{sec:BridgeResults}-Q2) from the reasoning VLAs in Bridgev2 setup on WidowX hardware with Test-Time Reasoning enabled across episode.}
    % Second subfigure with 3 stacked figures
    \label{fig:bridge-all-subtypes-qual1}
\end{figure*}

\begin{figure*}[t]
    \centering
    \includegraphics[width=\textwidth]{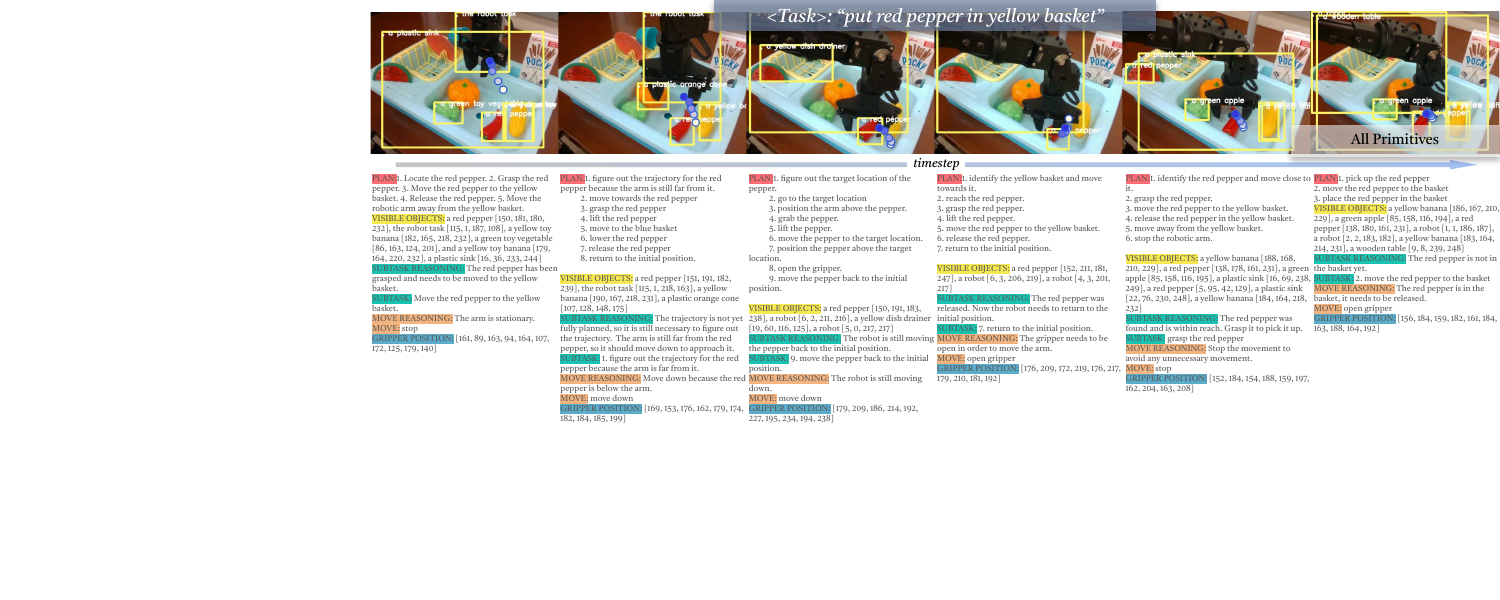}
    
    % \vspace{4pt}
    
    \includegraphics[width=\textwidth]{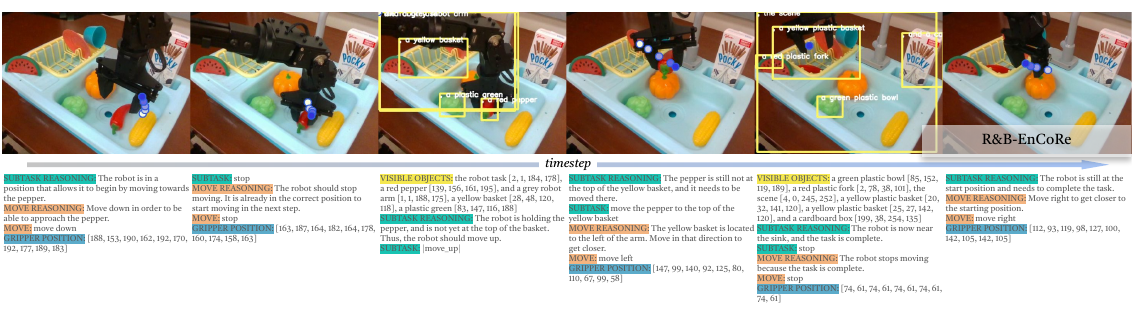}
    
    \caption{\small Reasoning Traces (Section~\ref{sec:BridgeResults}-Q2) from the reasoning VLAs in Bridgev2 setup on WidowX hardware with Test-Time Reasoning enabled across episode.}
    \label{fig:bridge-all-subtypes-qual2}
\end{figure*}

%% file: floats/figures/quadruped_reason.tex
\begin{figure*}[ht]
    \centering
    \includegraphics[width=\textwidth]{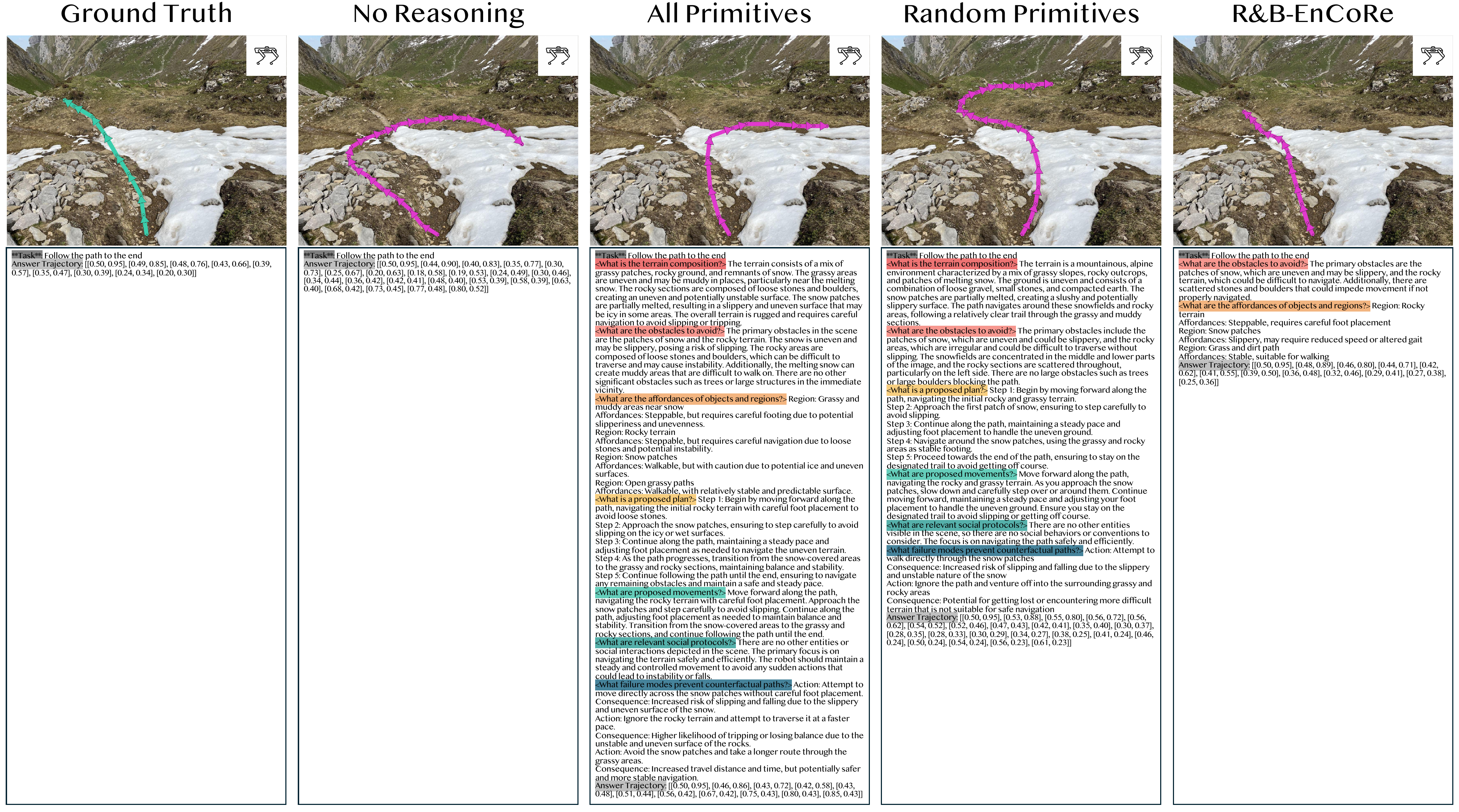}
    \caption{\small Reasoning Traces for NaviTrace dataset with Quadruped embodiment (expanded version of Fig.~\ref{fig:navitrace_waypoints}).}
    \label{fig:quadruped_reason}
\end{figure*}

%% file: floats/figures/av_qualitative_all.tex
\begin{figure*}[t] % 'p' places the figure on its own page
    \centering
    
    % First figure
    \includegraphics[width=\textwidth]{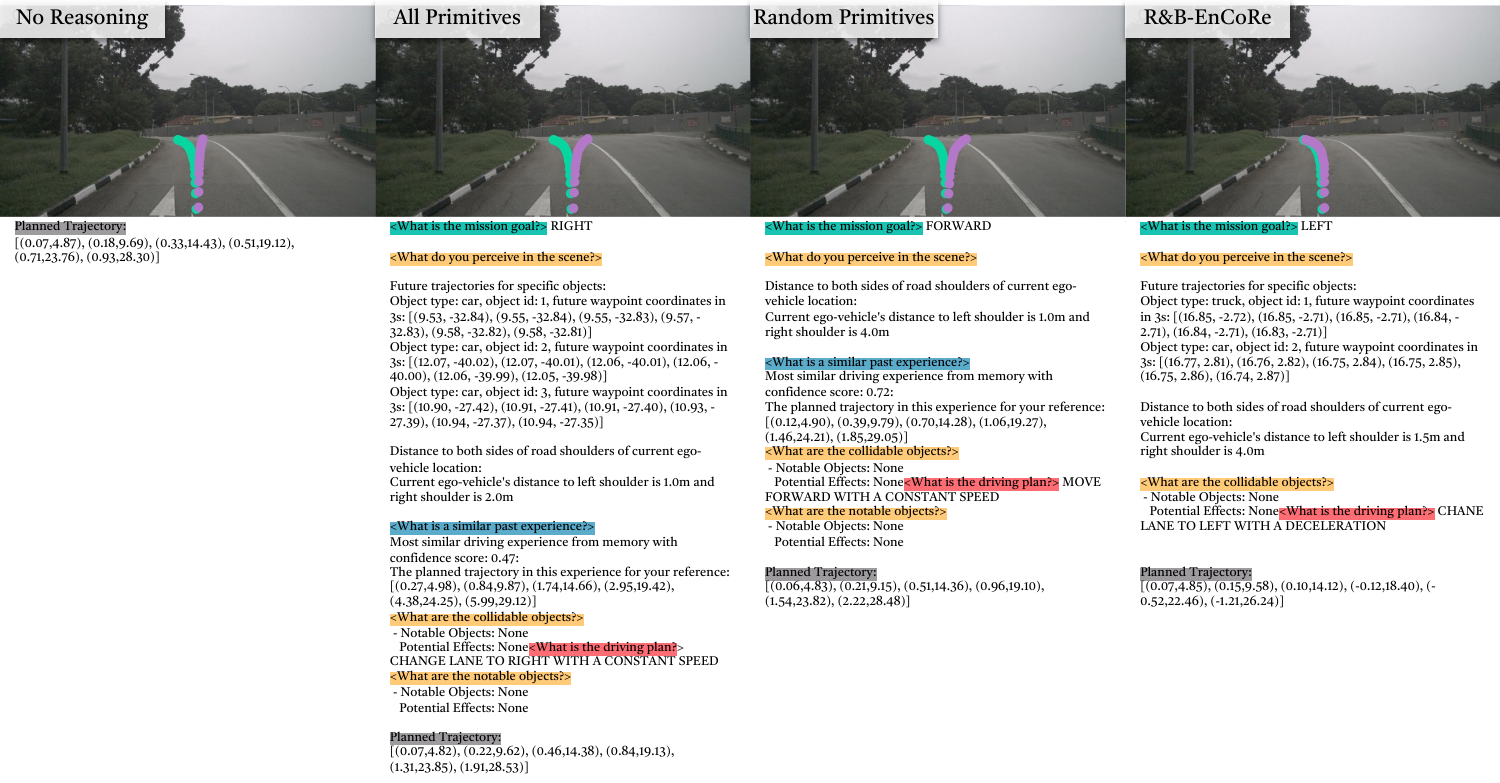}
    
    \vspace{2pt} % Add some vertical space between figures
    
    % Second figure
    \includegraphics[width=\textwidth]{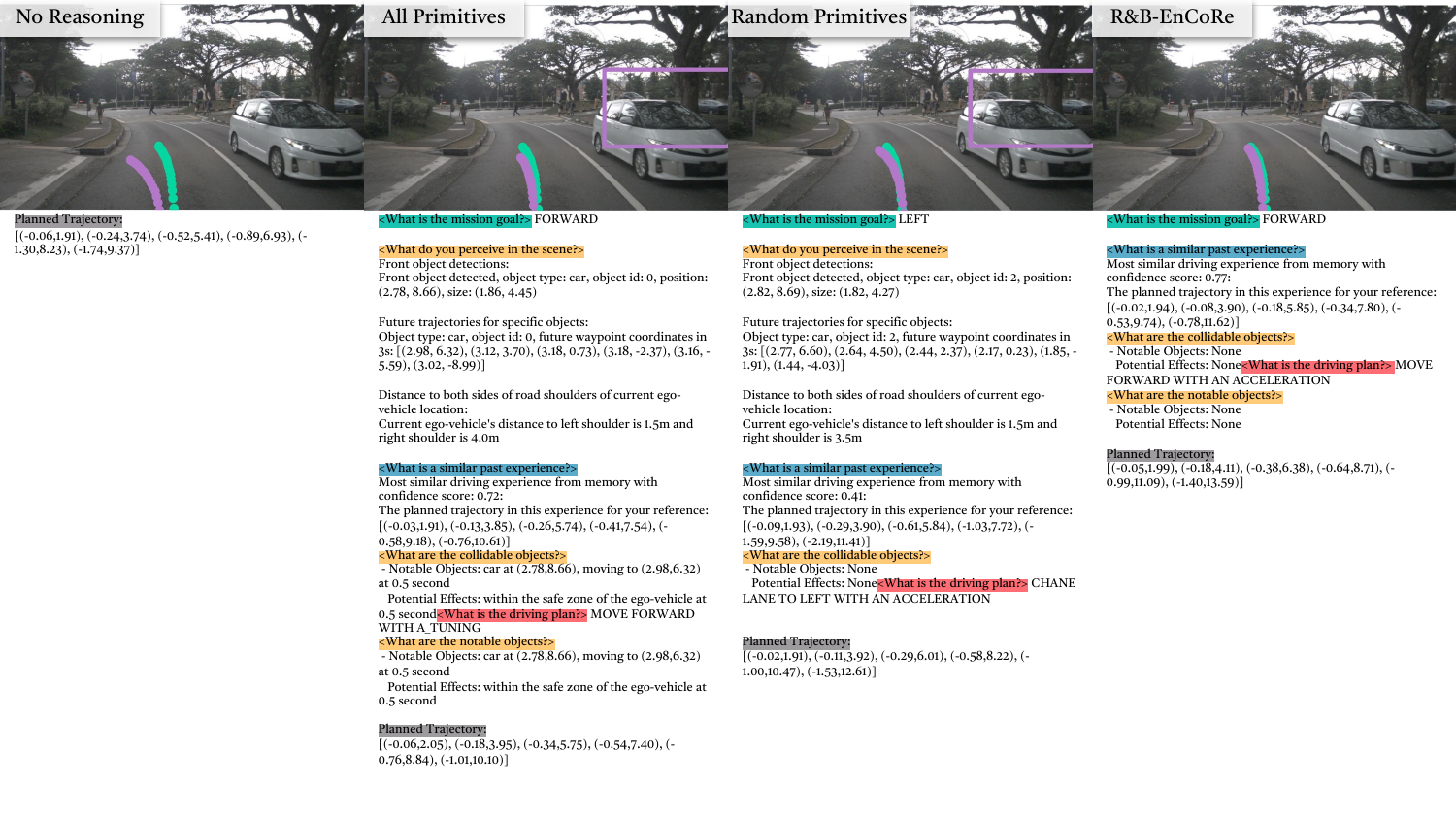}
    
    \caption{\small Reasoning Traces (Section~\ref{sec:AVResults}-Q7) from the driving VLAs. We visualize predictions across models on two samples from the nuScenes dataset. Observe that using \algname{} improves performance and yields concise reasoning traces that is more informative than not reasoning at all. Reasoning types are colored for visualization purposes.}
    \label{fig:av-all-qual}
\end{figure*}

\begin{figure*}[t] % 'p' places the figure on its own page
    \centering
    
    % First figure
    \includegraphics[width=\textwidth]{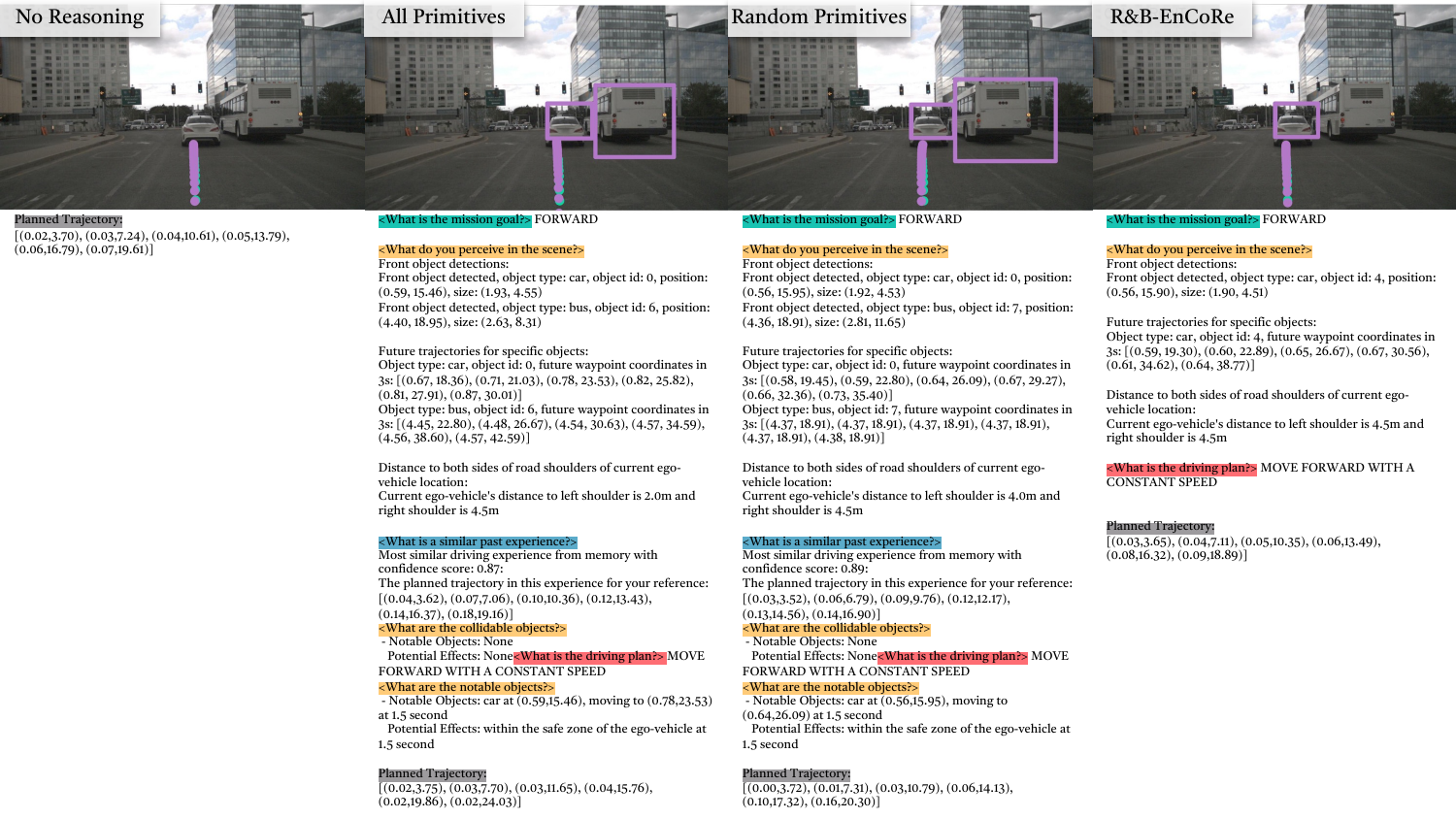}
    
    \vspace{2pt} % Add some vertical space between figures
    
    % Second figure
    \includegraphics[width=\textwidth]{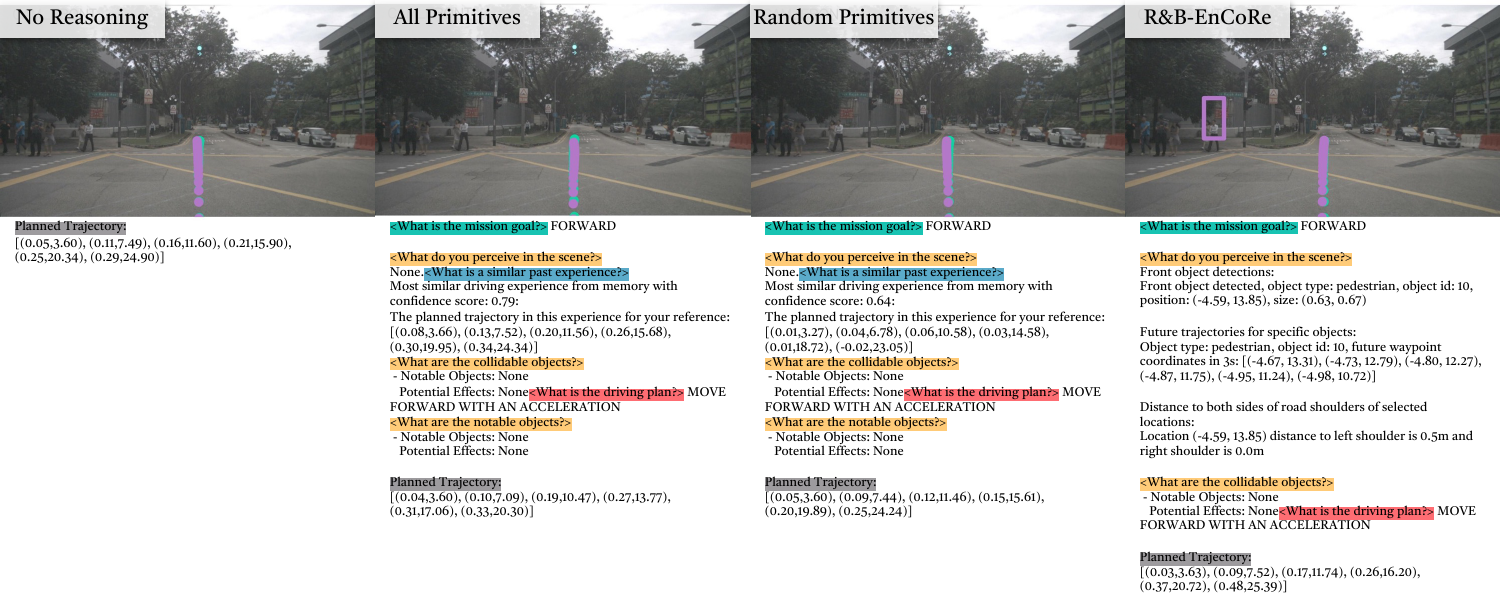}
    
    \caption{\small Reasoning Traces (Section~\ref{sec:AVResults}-Q7) from the driving VLAs. We visualize results on two more samples from the nuScenes dataset.}
    \label{fig:av-all-qual2}
\end{figure*}

%% file: appendix/more_ablation_experiments.tex
%\section{Additional Ablation Studies and analyses}